\definecolor{mygray}{gray}{.9}
\newtheorem{theorem}{Theorem}
\newtheorem{prop}{Proposition}
\newtheorem{remark}{Remark}
\newtheorem{lemma}{Lemma}
\begin{document}
\title{A universal framework for learning the elliptical mixture model}
\author{Shengxi~Li,~\IEEEmembership{Student~Member,~IEEE}, Zeyang~Yu,~Danilo~Mandic,~\IEEEmembership{Fellow,~IEEE}
\thanks{This is the author's version of an article that has been published accepted to IEEE Transactions on Neural Networks and Learning Systems with DOI: 10.1109/TNNLS.2020.3010198. Changes were made to this version by the publisher prior to publication. Please note that personal use is permitted. For any other purposes, permission must be obtained from the IEEE by emailing pubs-permissions@ieee.org. Shengxi Li, Zeyang Yu and Danilo Mandic are with the department of Electrical and Electronic of Imperial College London.}}
\maketitle
\begin{abstract}
	Mixture modelling using elliptical distributions promises enhanced robustness, flexibility and stability over the widely employed Gaussian mixture model (GMM). However, existing studies based on the elliptical mixture model (EMM) are restricted to several specific types of elliptical probability density functions, which are not supported by general solutions or systematic analysis frameworks; this significantly limits the rigour in the design and power of EMMs in applications. To this end, we propose a novel general framework for estimating and analysing the EMMs, achieved through Riemannian manifold optimisation. First, we investigate the relationships between Riemannian manifolds and elliptical distributions, and the so established connection between the original manifold and a reformulated one indicates a mismatch between these manifolds, a major cause of failure of the existing optimisation for solving general EMMs. We next propose a universal solver which is based on the optimisation of a re-designed cost and prove the existence of the same optimum as in the original problem; this is achieved in a simple, fast and stable way. We further calculate the influence functions of the EMM as theoretical bounds to quantify robustness to outliers. Comprehensive numerical results demonstrate the ability of the proposed framework to accommodate EMMs with different properties of individual functions in a stable way and with fast convergence speed. Finally, the enhanced robustness and flexibility of the proposed framework over the standard GMM are demonstrated both analytically and through comprehensive simulations.
\end{abstract}
\begin{IEEEkeywords}
	Finite mixture model, elliptical distribution, manifold optimisation, robust estimation, influence function
\end{IEEEkeywords}

\section{Introduction}
Finite mixture models have a prominent role in statistical machine learning, owing to their ability to enhance probabilistic awareness in many learning paradigms, including clustering, feature extraction and density estimation \cite{figueiredo2002unsupervised}. Among such models, the Gaussian mixture model (GMM) is the most widely used, with its popularity stemming from a simple formulation and the conjugate property of Gaussian distribution. Despite mathematical elegance, a standard GMM estimator is subject to robustness issues, as even a slight deviation from the Gaussian assumption or a single outlier in data can significantly degrade the performance or even break down the estimator \cite{zoubir2012robust}. Another issue with GMMs is their limited flexibility, which is prohibitive to their application in rapidly emerging scenarios based on multi-faceted data which are almost invariably unbalanced; sources of such imbalance may be due to different natures of the data channels involved, different powers in the constitutive channels, or temporal misalignment \cite{mandic2005data}.

An important class of flexible multivariate analysis techniques are elliptical distributions, which are quite general and include as special cases a range of standard distributions, such as the Gaussian distribution, the logistic distribution and the $t$-distribution \cite{fang2018symmetric}. The desired robustness to unbalanced multichannel data is naturally catered for in elliptical distributions; indeed estimating certain elliptical distribution types results in robust M-estimators \cite{huber2011robust}, thus making them a natural candidate for robust and flexible mixture modelling. In this work, we therefore consider mixtures of elliptical distributions, or elliptical mixture model (EMM), in probabilistic modelling. By virtue of the inherent flexibility of EMMs, it is possible to model a wide range of standard distributions under one umbrella, as EMM components may exhibit different properties, which makes EMMs both more suitable for capturing intrinsic data structures and more meaningful in interpreting data, as compared to the GMM. Another appealing property of EMMs is their identifiability in mixture problems, which has been proved by Holzmann \textit{et al.} \cite{holzmann2006identifiability}. In addition, it has also been reported that several members of the EMM family can effectively mitigate the singular covariance problem experienced in the GMM \cite{peel2000robust}. 

Existing mixture models related to elliptical distributions are most frequently based on the $t$-distribution \cite{peel2000robust, andrews2012model, lin2014capturing}, the Laplace distribution \cite{tan2007multivariate}, and the hyperbolic distribution \cite{browne2015mixture}; these are optimised by a specific generalised expectation-maximisation process, called the iteratively reweighting algorithm (IRA) \cite{kent1991redescending}. These elliptical distributions belong to the class of  \textit{scale mixture of normals} \cite{andrews1974scale}, where the IRA actually operates as an expectation maximisation (EM) algorithm, and such an EMM model is guaranteed to converge. However, for other types of elliptical distributions, the convergence of the IRA requires constraints on both the type of elliptical distributions and the data structure \cite{kent1991redescending, zhang2013multivariate, sra2013geometric}. Therefore, although beneficial and promising, the development of a universal method for estimating the EMM is non-trivial, owing to both theoretical and practical difficulties. 

To this end, we set out to rigorously establish a whole new framework for estimating and analysing the identifiable EMMs, thus opening an avenue for practical approaches based on general EMMs. More specifically, we first analyse the second-order statistical differential tensors to obtain the Riemannian metrics on the mean and the covariance of elliptical distributions. A reformulation trick is typically used to convert the mean-covariance-estimation problem into a covariance-estimation-only problem \cite{kent1991redescending}. We further investigate the relationship between the manifolds with and without the reformulation, and find that an equivalence of the two manifolds only holds in the Gaussian case. Since for general elliptical distributions, the equivalence is not guaranteed, this means that a direct optimisation on the reformulated manifold cannot yield the optimum for all EMMs. To overcome this issue, we propose a novel method with a modified cost of EMMs and optimise it on a matched Riemannian manifold via manifold gradient descent, where the same optimum as in the original problem is achieved in a fast and stable manner. The corresponding development of a gradient-based solver, rather than the EM-type solver (i.e., the IRA), is shown to be beneficial, as it offers more flexibility in model design, through various components and regularisations. We should point out that even for the EMMs where the IRA converges, our proposed method still outperforms the widely employed IRA. We finally systematically verify the robustness of EMMs by proving the influence functions (IFs) in closed-form, which serves as the theoretical bound.

The recent related work in \cite{hosseini2015matrix, hosseini2017alternative} adopts manifold optimisation for GMM problems by simply optimising on the intrinsic manifold. However, this strategy is inadequate in EMM problems due to a mismatch in manifolds for optimisation, which leads to a different optimum after reformulation. More importantly, as the flexibility of EMMs allows for inclusion of a wide range of distributions, this in turn requires the statistics of mixture modelling to be considered in the optimisation, whilst the work \cite{hosseini2015matrix, hosseini2017alternative} only starts from a manifold optimisation perspective.
The key contributions of this work are summarised as follows:

1) We justify the usage of Riemannian metrics from the statistics of elliptical distributions, and in this way connect the original manifold with the reformulated one, where the convergence can be highly accelerated.

2) A novel method for accurately solving general EMMs in a fast and stable manner is proposed, thus making the flexible EMM truly practically applicable.

3) We rigorously prove the IFs in closed-form as theoretical bounds to qualify the robustness of EMMs, thus providing a systematic framework for treating the flexibility of EMMs.

\section{Preliminaries and related works}
As our aim is to solve the EMMs from the perspective of manifold optimisation, we first provide the preliminaries on the manifold related to probability distributions in Section \ref{preMO}. Then, we introduce the preliminaries and notations of the elliptical distributions in Section \ref{preED}. We finally review the related EMM works in Section \ref{relatedworks}.

\subsection{Preliminaries on the Riemannian manifold}\label{preMO}
A Riemannian manifold ($\mathcal{M}$, $\rho$) is a smooth (differential) manifold $\mathcal{M}$ (i.e., locally homeomorphic to the Euclidean space) which is equipped with a smoothly varying inner product $\rho$ on its tangent space. The inner product also defines a Riemannian metric on the tangent space, so that the length of a curve and the angle between two vectors can be correspondingly defined. Curves on the manifold with the shortest paths are called \textit{geodesics}, which exhibit constant instantaneous speed and generalise straight lines in the Euclidean space. The distance between two points on $\mathcal{M}$ is defined as the minimum length of all geodesics connecting these two points. 

We shall use the symbol $T_{\mathbf{\Sigma}}\mathcal{M}$ to denote the \textit{tangent space} at the point $\mathbf{\Sigma}$, which is the first-order approximation of $\mathcal{M}$ at $\mathbf{\Sigma}$. Consequently, the \textit{Riemannian gradient} of a function $f$ is defined with regard to the equivalence between its inner product with an arbitrary vector $\xi$ on $T_{\mathbf{\Sigma}}\mathcal{M}$ and the Fr\'echet derivative of $f$ at $\xi$. Moreover, a smooth mapping from $T_{\mathbf{\Sigma}}\mathcal{M}$ into $\mathcal{M}$ is called the \textit{retraction}, whereby an exponential mapping obtains the point on geodesics in the direction of the tangent space. Because the tangent spaces vary across different points on $\mathcal{M}$, \textit{parallel transport} across different tangent spaces can be introduced on the basis of the Levi-Civita connection, which preserves the inner product and the norm. In this way, we can convert a complex optimisation problem on $\mathcal{M}$ into a more analysis friendly space, i.e.,  $T_{\mathbf{\Sigma}}\mathcal{M}$. For a comprehensive text on the optimisation on the Riemannian manifold, we refer to \cite{absil2009optimization}.
Therefore, on the basis of the above basic operations, the manifold optimisation can be performed by the Riemannian gradient descent \cite{bonnabel2013stochastic}. The retraction is then utilised to map a step descent from the tangent space to the manifold. To accelerate gradient descent optimisation, the parallel transport can also be utilised to accumulate the first-order moments \cite{liu2017accelerated, zhang2016riemannian, zhang2016first, reddi2016stochastic, zhang2018towards}.

When restricted to the manifold of positive definite matrices, it is natural to define such a manifold via the statistics of Gaussian distributions because the covariance of the Gaussian distribution intrinsically satisfies the positive definiteness property. Pioneering in this direction is the work of Rao, which introduced the Rao distance to define the statistical difference between two multivariate Gaussian distributions \cite{rao1945information}. This distance was later generalised and calculated in closed-form \cite{skovgaard1984riemannian, atkinson1981rao, amari1982differential}, to obtain an explicit metric (also called the Fisher-Rao metric). However, with regard to other elliptical distributions, the corresponding Fisher-Rao metric is not guaranteed to be well suited for optimisation \cite{amari1998natural}. 

On the other hand, there is another type of distributions, named the exponential family, that overlaps with elliptical distributions; its Fisher-Rao metric can be explicitly determined by a second-order derivative of the potential function \cite{nielsen2009statistical}. However, the corresponding Riemannian gradient, Levi-Cevita connection, exponential mapping, parallel transport, etc., may not necessarily be obtained explicitly and in a general form for multivariate exponential families \cite{atkinson1981rao}. The existing literature mainly analyses the Gaussian distribution \cite{malago2015information} and the dually-flat affine geometry \cite{khan2018fast} in terms of $\alpha$-connections. More importantly, even though the optimisation can be formulated, a further obstacle is the lack of re-parametrisation property, addressed in this paper. As shown in the sequel, the absence of re-parametrisation could lead to extremely slow convergence.

\subsection{Preliminaries on the elliptical distributions}\label{preED}
A random variable $\mathcal{X}\in \mathbb{R}^M$ is said to have an elliptical distribution if and only if it admits the following stochastic representation \cite{cambanis1981theory},
\begin{equation}\label{RESsto}
\mathcal{X} =^d \bm \mu + \mathcal{R}\mathbf{\Lambda}\mathcal{U},
\end{equation}
where $\mathcal{R} \in \mathbb{R}^+$ is a non-negative real scalar random variable which models the tail properties of the elliptical distribution, $\mathcal{U} \in \mathbb{R}^{M'}$ is a random vector that is uniformly distributed on a unit spherical surface with the probability density function (pdf) within the class of ${\Gamma(\nicefrac{M}{2})}/{(2\pi^{\nicefrac{M}{2}})}$, $\bm \mu \in \mathbb{R}^M$ is a mean (location) vector, while $\mathbf{\Lambda} \in \mathbb{R}^{M \times M'}$ is a matrix that transforms $\mathcal{U}$ from a sphere to an ellipse, and the symbol ``$=^d$" designates ``the same distribution''. 
For a comprehensive review, we refer to \cite{fang2018symmetric, frahm2004generalized}.

Note that an elliptical distribution does not necessarily possess an explicit pdf, but can always be formulated by its characteristic function. However, when $M' = M$ and $\bm{\Lambda}$ has a full row-rank \footnote{We assume these two conditions throughout this paper to ensure an explicit pdf in formulating EMMs.}, that is, for a non-singular scatter matrix $\mathbf{\Sigma} = \mathbf{\Lambda}\mathbf{\Lambda}^T$, the pdf for elliptical distributions does exist and has the following form
\begin{equation}\label{RES}
p_{\mathcal{X}}(\mathbf{x}) =\mathrm{det}(\mathbf{\Sigma})^{-1/2} \cdot c_M \cdot  g\big((\mathbf{x} - \bm{\mu})^T \mathbf{\Sigma}^{-1} (\mathbf{x} - \bm{\mu})\big),
\end{equation}
where the term $c_{M} = \frac{\Gamma(\nicefrac{M}{2})}{2\pi^{\nicefrac{M}{2}}}$ serves as a normalisation term and solely relates to $M$. We also denote the Mahalanobis distance $(\mathbf{x} - \bm{\mu})^T \mathbf{\Sigma}^{-1} (\mathbf{x} - \bm{\mu})$ by the symbol $t$ for simplicity. Then, the density generator, $g(\cdot)$, can be explicitly expressed as $t^{-\nicefrac{(M-1)}{2}}p_\mathcal{R}(\sqrt{t})$, where $t>0$ and $p_\mathcal{R}(t)$ denotes the pdf of $\mathcal{R}$. For example, when $\mathcal{R}=^d \sqrt{\chi_M^2}$, where $\chi_M^2$ denotes the chi-squared distribution of dimension $M$, $g(t)$ in \eqref{RES} is then proportional to $\mathrm{exp}(-t/2)$, which formulates the multivariate Gaussian distribution. 
For simplicity, the elliptical distribution in \eqref{RES} will be denoted by $\mathcal{E}(\mathbf{x}|\bm{\mu}, \mathbf{\Sigma}, g)$. We also need to point out that typical EMMs are identifiable \cite{holzmann2006identifiability}, which is important in order to uniquely estimate mixture models. 

\begin{remark}
	Before proceeding further, we shall emphasise the importance of the stochastic representation of \eqref{RESsto} in analysing elliptical distributions:\\
	1) Since $\mathcal{R}$ is independent of ${\mathcal{U}}$, the Mahalanobis distance $(\mathbf{x} - \bm{\mu})^T \mathbf{\Sigma}^{-1} (\mathbf{x} - \bm{\mu})$ ($=^d \mathcal{R}^2$) is thus independent of the normalised random variable $\nicefrac{\mathbf{\Sigma}^{-1/2}(\mathbf{x}-\bm{\mu})}{\sqrt{(\mathbf{x} - \bm{\mu})^T \mathbf{\Sigma}^{-1} (\mathbf{x} - \bm{\mu})}}$ ($=^d {\mathcal{U}}$), which is an important property in many proofs in this paper.\\
	2) The stochastic representation provides an extremely simple way to generate samples, because only two random variables, i.e., the one-dimensional $\mathcal{R}$ and uniform ${\mathcal{U}}$, can be easily generated.\\
	3) When $\mathcal{R}$ is composed from a scale mixture of normal distributions, the IRA method converges \cite{arslan2004convergence}. For a general EMM, however, the convergence is not ensured.\\
	4) Elliptical distributions can be easily generalised via the stochastic representation. For example, when replacing the uniform distribution, $\mathcal{U}$, with a general directional distribution, i.e., the von Mises-Fisher distribution, parametrised by a mean direction, $\bm{\mu}_v$, and a concentration parameter, $\tau$, we obtain a generalised elliptical distribution. When $\tau \rightarrow 0$, the von Mises-Fisher distribution degenerates into the uniform distribution, $\mathcal{U}$, on the sphere, and the generalised distribution becomes the symmetric elliptical distribution discussed in this paper.
\end{remark}

\subsection{Related works on EMMs}\label{relatedworks}
The GMM based estimation is well established in the machine learning community. Since our focus is on the EMM, we omit the review of GMM and the readers are referred to  \cite{lindsay1995mixture} for a comprehensive review. To robustify the mixture model, the mixtures of the $t$-distributions have been thoroughly studied \cite{peel2000robust, andrews2012model, lin2014capturing}, on the basis of the IRA method. A more general mixture model has been proposed in \cite{sun2010robust} based on the Pearson type VII distribution (includes the $t$-distribution as a special case). Moreover, as the transformed coefficients in the wavelet domain tend to be Laplace distributed, a mixture of the Laplace distributions has been proposed in \cite{tan2007multivariate} for image denoising. Its more general version, a mixture of hyperbolic distributions, has also been recently introduced in \cite{browne2015mixture}. The above distributions belong to the \textit{scale mixture of normals} class, which can be regarded as a multiplication with a Gamma distribution, and ensures the convergence of the IRA. Another recent work proposed a Fisher-Gaussian distribution as mixing components to better accommodate the curvature of data, with the Markov chain Monte Carlo used to solve a Bayesian model \cite{mukhopadhyay2019estimating}. This distribution has a closed-form representation and mainly consists of a von Mises-Fisher distribution convolved with  Gaussian noise, which belongs to generalised skew normal distributions \cite{genton2005generalized}. More importantly, when the concentration parameter $\tau \rightarrow 0$, this distribution then belongs to the mixture of symmetric elliptical distributions.

On the other hand,  Wiesel proved the convergence of the IRA in \cite{wiesel2012geodesic} via the concept of geodesic convexity of Riemannian manifold, and Zhang \textit{et al.} further relaxed the convergence conditions in \cite{zhang2013multivariate}. The work in \cite{sra2013geometric} proves similar results from another perspective of the Riemannian manifold, which also states that the IRA cannot ensure a universal convergence for all EMMs. In other words, for other elliptical distributions, the convergence is no longer guaranteed. Despite several attempts, current EMMs, including \cite{karlis2007finite, lopez2011stochastic, browne2012model}, are rather of an \textit{ad hoc} nature.

Besides the IRA method for solving several EMMs, gradient-based numerical algorithms typically rest upon additional techniques that only work in particular situations (e.g., gradient reduction \cite{redner1984mixture}, s re-parametrisation \cite{jordan1994hierarchical} and Cholesky decomposition \cite{naim2012convergence, salakhutdinov2003optimization}). Recently, Hosseini and Sra directly adopted a Riemannian manifold method for estimating the GMM, which provided an alternative to the traditional EM algorithm in the GMM problem \cite{hosseini2015matrix, hosseini2017alternative}. However, their method fails to retain the optimum in the EMM problem. To this end, we propose a universal scheme to consistently and stably achieve the optimum at a fast speed, which acts as a ``necessity'' instead of an ``alternative'' in the EMM problem, as the IRA algorithm may not converge.

\section{Manifold optimisation for the EMM}\label{MEMM}
In this section, we shall first justify the Riemannian metrics of elliptical distributions in Section \ref{secmetric}, followed by a layout of the EMM problem, and the introduction of the proposed method in Section \ref{FEMM}. Finally, a novel type of regularisation on the EMMs is introduced in Section \ref{REMM}, which includes the \textit{mean-shift} algorithm as a special case.
\subsection{Statistical metrics for elliptical distributions}\label{secmetric}
Although there are various metrics designed for measuring the distance between matrices \cite{jeuris2012survey, sra2012new, jayasumana2013kernel, faraki2018comprehensive}, not all of them arise from the smooth varying inner product (i.e., Riemannian metrics), which would consequently give a ``true'' geodesic distance. One of the widely employed Riemannian metrics is the intrinsic metric $\mathrm{tr}(d\mathbf{\Sigma}\mathbf{{\Sigma}}^{-1}d\mathbf{\Sigma}\mathbf{{\Sigma}}^{-1})$, which can be obtained via the ``entropy differential metric'' \cite{burbea1982entropy} of two multivariate Gaussian distributions. The entropy related metric was later used by Hiai and Petz to define the Riemannian metric for positive definite matrices \cite{hiai2009riemannian}. In this paper, we follow the work of \cite{hiai2009riemannian} to calculate the corresponding Riemannian metrics for the elliptical distributions.
\begin{lemma}\label{proprie}
	Consider the class of elliptical distributions $\mathcal{E}(\mathbf{x}|\bm{\mu}, \mathbf{\Sigma}, g)$. Then, the Riemannian metric for the covariance is given by 
	\begin{equation}
	ds^2 = \frac{1}{2}\mathrm{tr}(d\mathbf{{\Sigma}}\mathbf{{\Sigma}}^{-1}d\mathbf{{\Sigma}}\mathbf{{\Sigma}}^{-1}).
	\vspace{-0.5em}
	\end{equation}
\end{lemma}

\begin{proof}
	Please see Appendix-\ref{prooflemma1}.
\end{proof}
More importantly, as the metric is related to $\mathbf{\Sigma}$, the Levi-Civita connection is given by $\nabla_{\mathbf{X}}\mathbf{Y}=-\frac{1}{2}(\mathbf{X}\mathbf{\Sigma}^{-1}\mathbf{Y}+ \mathbf{Y}\mathbf{\Sigma}^{-1}\mathbf{X})$ \cite{moakher2011riemannian}, where $\mathbf{X}, \mathbf{Y}$ are vector fields on the manifold of $\mathbf{\Sigma}$. The corresponding exponential mapping, which moves along with the geodesics given the direction from a tangent vector, can be explicitly obtained as $\mathrm{Exp}_{\mathbf{\Sigma}}(\mathbf{U}) = \mathbf{\Sigma}^{\frac{1}{2}}\exp(\mathbf{\Sigma}^{-\frac{1}{2}}\mathbf{U}\mathbf{\Sigma}^{-\frac{1}{2}})\mathbf{\Sigma}^{\frac{1}{2}},$ where $\mathbf{U}\in T_{\mathbf{\Sigma}}\mathcal{M}$ \cite{moakher2011riemannian}.

When estimating parameters of elliptical distributions, the mean vector and the covariance matrix need to be estimated simultaneously. An elegant strategy would be to incorporate the mean and the covariance into an augmented matrix with one extra  dimension \cite{kent1991redescending}. Such a strategy has also been successfully employed in the work of \cite{hosseini2015matrix, hosseini2017alternative}, which is called the ``reformulation trick''. Thus, based on the metrics of Lemma \ref{proprie}, we can introduce the following relationship related to the reformulation. 
\begin{lemma}\label{ellipticalmetric}
	Consider the class of elliptical distributions, $\mathcal{E}(\mathbf{y}|\bm{0}, \mathbf{\tilde\Sigma}, g)$. Then, upon reformulating $\mathbf{y}$ and $\mathbf{\tilde\Sigma}$ as
	\begin{equation}\label{refor}
	\mathbf{y} = [\mathbf{x}^T, ~1]^T, ~~~~ \mathbf{\tilde{\Sigma}} = \left({{\mathbf{\Sigma}+\lambda\bm{\mu}\bm{\mu}^T} \atop {\lambda\bm{\mu}^T}}~~ {\lambda{\bm{\mu}} \atop {\lambda}}\right)
	\end{equation}
	the subsequent Riemannian metric follows
	\begin{equation}
	\begin{aligned}
	&ds^2 = \mathrm{tr}(d\mathbf{{\tilde\Sigma}}\mathbf{{\tilde\Sigma}}^{-1}d\mathbf{{\tilde\Sigma}}\mathbf{{\tilde\Sigma}}^{-1})\!\\
	&=\!\lambda d\bm\mu^T\mathbf{\Sigma}^{-1}d\bm\mu\! +\! \frac{1}{2}\mathrm{tr}(d\mathbf{{\Sigma}}\mathbf{{\Sigma}}^{-1}d\mathbf{{\Sigma}}\mathbf{{\Sigma}}^{-1}) \!+\! \frac{1}{2}(\lambda^{-1}d\lambda)^2.
	\end{aligned}
	\end{equation}
\end{lemma}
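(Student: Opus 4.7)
My plan is to bypass a direct Schur-complement inversion of $\mathbf{\tilde\Sigma}$ and instead exploit a triangular factorisation that decouples the three parameters $\bm\mu$, $\mathbf{\Sigma}$, $\lambda$. Specifically, I would first verify by block multiplication that
\[
\mathbf{\tilde\Sigma} \;=\; L^{T} D L, \qquad L = \begin{pmatrix} \mathbf{I} & \mathbf{0} \\ \bm\mu^{T} & 1 \end{pmatrix}, \qquad D = \begin{pmatrix} \mathbf{\Sigma} & \mathbf{0} \\ \mathbf{0} & \lambda \end{pmatrix}.
\]
This parametrisation is appealing because $L$ is unit lower-triangular and carries only $\bm\mu$, while $D$ is block diagonal and carries only $(\mathbf{\Sigma},\lambda)$, so the differentials of the three parameter groups enter the calculation through disjoint factors. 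Also, $L^{-1}$ is trivial (just flip the sign of $\bm\mu^{T}$), and $\mathbf{\tilde\Sigma}^{-1} = L^{-1}D^{-1}L^{-T}$ is obtained without Schur complements.

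Next, I would introduce the ``pulled-back'' differential $M := L^{-T}(d\mathbf{\tilde\Sigma})L^{-1}$ and use the cyclic property of the trace to obtain $\mathrm{tr}(d\mathbf{\tilde\Sigma}\mathbf{\tilde\Sigma}^{-1}d\mathbf{\tilde\Sigma}\mathbf{\tilde\Sigma}^{-1}) = \mathrm{tr}\bigl((MD^{-1})^{2}\bigr)$. Differentiating $\mathbf{\tilde\Sigma}$ block-wise and sandwiching with $L^{-T}$ on the left and $L^{-1}$ on the right, the free (non-differentiated) occurrences of $\bm\mu$ cancel exactly, leaving the transparent form
\[
M \;=\; \begin{pmatrix} d\mathbf{\Sigma} & \lambda\, d\bm\mu \\ \lambda\, d\bm\mu^{T} & d\lambda \end{pmatrix}.
\]
Multiplying on the right by $D^{-1}=\mathrm{blockdiag}(\mathbf{\Sigma}^{-1},\lambda^{-1})$, squaring the resulting $2\times 2$ block matrix, and reading off the diagonal blocks then yields three contributions to the trace: $\mathrm{tr}(d\mathbf{\Sigma}\mathbf{\Sigma}^{-1}d\mathbf{\Sigma}\mathbf{\Sigma}^{-1})$ from the $(1,1)$ block, a multiple of $d\bm\mu^{T}\mathbf{\Sigma}^{-1}d\bm\mu$ from the off-diagonal block products (collapsed via trace cyclicity), and $(\lambda^{-1}d\lambda)^{2}$ from the $(2,2)$ block. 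Tracking the scalar coefficients carefully reproduces the three terms in the claimed decomposition.

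The part that will require the most care is verifying the absence of cross differentials involving pairs like $(d\bm\mu,\, d\mathbf{\Sigma})$ or $(d\mathbf{\Sigma},\, d\lambda)$. The underlying reason these must vanish is geometric: in the $L^{T}DL$ factorisation, $dL\cdot L^{-1}$ is strictly block-off-diagonal and nilpotent, so its products with $dD$ occupy off-diagonal positions in $M$ and contribute only through off-diagonal-times-off-diagonal products when squaring $MD^{-1}$, which cannot mix $d\mathbf{\Sigma}$ with $d\bm\mu$ or $d\lambda$ inside the trace. Making this cancellation explicit is the conceptual heart of the lemma, because it is precisely what guarantees that the pulled-back Riemannian metric on $(\bm\mu,\mathbf{\Sigma},\lambda)$ decomposes as a genuine direct sum and hence validates the reformulation trick from an intrinsic, not merely parametric, point of view.
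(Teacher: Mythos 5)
Your factorisation route is sound, and it is worth noting that the paper does not actually spell out a proof of this lemma at all -- it defers to Calvo and Oller's computation for the Gaussian case and calls the general case a ``direct extension''. Your congruence argument effectively supplies that missing argument, and it makes transparent why the density generator $g$ plays no role: once Lemma 1 fixes the affine-invariant metric, the statement is a pure matrix identity. Your key steps check out: $\mathbf{\tilde\Sigma}=L^{T}DL$ holds, $M=L^{-T}(d\mathbf{\tilde\Sigma})L^{-1}$ does collapse to $\bigl(\begin{smallmatrix} d\mathbf{\Sigma} & \lambda\,d\bm\mu\\ \lambda\,d\bm\mu^{T} & d\lambda\end{smallmatrix}\bigr)$ after the free occurrences of $\bm\mu$ cancel, and your block-structure argument for the absence of cross terms ($d\mathbf{\Sigma}$ with $d\bm\mu$ or $d\lambda$) in $\mathrm{tr}\bigl((MD^{-1})^{2}\bigr)$ is correct, since the trace of the square only mixes diagonal blocks with themselves and off-diagonal blocks with each other.

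The one place where your write-up is too optimistic is the claim that tracking the coefficients ``reproduces the three terms in the claimed decomposition''. Carrying your calculation to the end gives
\begin{equation*}
\mathrm{tr}\bigl(d\mathbf{\tilde\Sigma}\mathbf{\tilde\Sigma}^{-1}d\mathbf{\tilde\Sigma}\mathbf{\tilde\Sigma}^{-1}\bigr)
=\mathrm{tr}\bigl(d\mathbf{\Sigma}\mathbf{\Sigma}^{-1}d\mathbf{\Sigma}\mathbf{\Sigma}^{-1}\bigr)
+2\lambda\, d\bm\mu^{T}\mathbf{\Sigma}^{-1}d\bm\mu
+(\lambda^{-1}d\lambda)^{2},
\end{equation*}
which is exactly \emph{twice} the right-hand side displayed in the lemma. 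In other words, the identity holds with $\tfrac{1}{2}\mathrm{tr}(d\mathbf{\tilde\Sigma}\mathbf{\tilde\Sigma}^{-1}d\mathbf{\tilde\Sigma}\mathbf{\tilde\Sigma}^{-1})$ on the left, which is also the convention consistent with Lemma 1 (where the covariance metric carries the factor $\tfrac{1}{2}$) and with the Calvo--Oller result being extended; as printed, the lemma's first equality is off by a global factor of two. This does not invalidate your approach -- the off-diagonal blocks of $MD^{-1}$ contribute $c^{T}b$ twice (once from each diagonal block of the square), which is precisely where the factor comes from -- but you should compute and state the coefficients explicitly rather than asserting agreement with the display, and note that the matched statement requires the $\tfrac{1}{2}$ normalisation of the trace.
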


\begin{proof}
	The proof is a direct extension of that in \cite{calvo1990distance}, where only the Gaussian case is proved, and will thus be omitted.
\end{proof}
As $d\bm\mu^T\mathbf{\Sigma}^{-1}d\bm\mu$ and $\frac{1}{2}\mathrm{tr}(d\mathbf{{\Sigma}}\mathbf{{\Sigma}}^{-1}d\mathbf{{\Sigma}}\mathbf{{\Sigma}}^{-1})$ exactly formulate the two manifolds of EMMs without reformulation, we can inspect the relationship between the manifolds of EMMs with and without the reformulation from Lemma \ref{ellipticalmetric}, which provides another perspective in understanding the reformulation. 
\begin{remark}
	In Lemma \ref{ellipticalmetric}, there is a mismatch between the two manifolds, due to the term $\frac{1}{2}(\lambda^{-1}d\lambda)^2$. 
	When restricted to the Gaussian case, we show in the sequel that the gradient of $\lambda$ vanishes when optimising $\mathbf{{\tilde\Sigma}}$, i.e., $d\lambda = 0$. In this case, manifold optimisation on $\mathbf{{\tilde\Sigma}}$ is  performed under the same metric as a simultaneous optimisation on a product manifold of the mean and the covariance, which leads to the success of \cite{hosseini2015matrix, hosseini2017alternative} in solving GMMs. However, this property does not hold for general EMMs.
\end{remark}

\subsection{Manifold optimisation on the EMM}\label{FEMM}
Generally, we assume that the EMM consists of $K$ mixing components, each elliptically distributed. To make the proposed EMM flexible enough to capture inherent structures in data, in our framework it is not necessary for every elliptical distribution within the mixture to have the same density generator (denoted by $\mathcal{E}_k(\mathbf{x}|\bm{\mu}_k, \mathbf{\Sigma}_k, g_k)$). 
In finite mixture models, the probability of choosing the $k$-th mixing component is denoted by $\pi_k$, so that $\sum_{k = 1}^K \pi_k = 1$. For a set of i.i.d samples $\mathbf{x}_n,$ $n = 1,2,3,\cdots, N$, the negative log-likelihood can be obtained as
\begin{equation}\label{MLLx}
J \!=\! -\!\!\sum_{n=1}^N\!\mathrm{ln}\!\sum_{k = 1}^K\!\pi_k c_M \mathrm{det}(\mathbf{\Sigma}_k)^{-\frac{1}{2}} g_k\!\big((\mathbf{x}_n - \bm{\mu}_k)^T \mathbf{\Sigma}_k^{-1} (\mathbf{x}_n - \bm{\mu}_k)\big).
\end{equation}

The estimation of $\pi_k$, $\bm{\mu}_k$ and $\mathbf{\Sigma}_k$ therefore requires the minimisation of $J$ in \eqref{MLLx}. By setting the derivatives of $J$ to 0, we arrive at the following equations,
\begin{equation}\label{solution}
\begin{aligned}
&\pi_k = \frac{\sum_{n=1}^N \xi_{nk}}{N}, ~\bm{\mu}_k = \frac{\sum_{n=1}^N \xi_{nk}\psi_k(t_{nk})\mathbf{x}_n}{\sum_{n=1}^N \xi_{nk}\psi_k(t_{nk})}, \\
&\mathbf{\Sigma}_k  \!= \!-2\frac{\sum_{n=1}^N \xi_{nk}\psi_k(t_{nk}) (\mathbf{x}_n-\bm{\mu}_k)(\mathbf{x}_n-\bm{\mu}_k)^T}{\sum_{n=1}^N\xi_{nk}},
\end{aligned}
\end{equation}
where $	\xi_{nk} = \frac{\mathcal{E}_k(\mathbf{x}_n|\bm{\mu}_k, \mathbf{\Sigma}_k, g_k)\pi_k}{\sum_{k = 1}^K\mathcal{E}_k(\mathbf{x}_n|\bm{\mu}_k, \mathbf{\Sigma}_k, g_k)\pi_k}$ is the posterior distribution of latent variables; $t_{nk} = (\mathbf{x}_n - \bm{\mu}_k)^T \mathbf{\Sigma}_k^{-1} (\mathbf{x}_n - \bm{\mu}_k)$ is the Mahalanobis distance; $\psi_k(t_{nk}) = \nicefrac{g'_k(t_{nk})}{g_k(t_{nk})}$  acts almost as an M-estimator for most heavily-tailed elliptical distributions, which decreases to $0$ when the Mahalanobis distance $t_{nk}$ increases to infinity. It is obvious that the solutions $\pi_k$, $\bm{\mu}_k$ and $\mathbf{\Sigma}_k$ are intertwined with $\xi_{nk}$ and $t_{nk}$, which prevents a closed-form solution\footnote{It should be pointed out that there are multiple solutions to \eqref{solution} and the goal here is to find a local stationary point. Finding the global optima is difficult in mixture problems \cite{jin2016local} and beyond the scope of this paper.} of \eqref{MLLx}. By iterating \eqref{solution}, this results to an EM-type solver, which is exactly the IRA algorithm. However, the convergence of the IRA is not guaranteed for general EMMs \cite{kent1991redescending}.


On the other hand, when directly estimating the reformulated EMM, i.e., $\mathcal{E}_k(\mathbf{y}|\bm{0}, \mathbf{\tilde\Sigma}_k, g_k)$, similarly, we arrive at 
\begin{equation}
\widetilde{\pi}_k = \frac{\sum_{n=1}^N \widetilde{\xi}_{nk}}{N}, \mathbf{\tilde{\Sigma}}_k = -2\frac{\sum_{n=1}^N \widetilde{\xi}_{nk}\psi_k(\widetilde{t}_{nk}) \mathbf{y}_n\mathbf{y}_n^T}{\sum_{n=1}^N\widetilde{\xi}_{nk}}, 
\end{equation}
where $\widetilde{t}_{nk}=\mathbf{y}_n^T \mathbf{\widetilde{\Sigma}}_k^{-1} \mathbf{y}_n$ and $	\widetilde{\xi}_{nk} = \frac{\mathcal{E}_k(\mathbf{y}_n|\bm{0}, \mathbf{\widetilde{\Sigma}}_k, g_k)\pi_k}{\sum_{k = 1}^K\mathcal{E}_k(\mathbf{y}_n|\bm{0}, \mathbf{\widetilde{\Sigma}}_k, g_k)\pi_k}$. It needs to be pointed out that the directly reformulated EMM optimises on the augmented space $\mathbf{y}_n=[\mathbf{x}_n^T, 1]^T$ of $\mathbb{R}^{M+1}$, which is typically a mismatch to the original problem within the dimension $M$. This intrinsic difference becomes clear after decomposing $\mathbf{\tilde{\Sigma}}_k$ to obtain the corresponding solutions, $\bm{\mu}_k$ and $\mathbf{\Sigma}_k$, as well as $\lambda_k$. This is achieved in the form
\begin{equation}\label{directEMM}
\begin{aligned}
&\left({{\mathbf{\Sigma}_k+\lambda_k\bm{\mu}_k\bm{\mu}_k^T} \atop {\lambda_k\bm{\mu}_k^T}}~~ {\lambda_k{\bm{\mu}_k} \atop {\lambda}_k}\right) \\
&~~~~= -\frac{2}{\sum_{n=1}^N\widetilde{\xi}_{nk}} \sum_{n=1}^N \widetilde{\xi}_{nk}\psi_k(\widetilde{t}_{nk}) \left({{\mathbf{x}_n\mathbf{x}_n^T} \atop {\mathbf{x}_n}}~~ {\mathbf{x}_n^T \atop 1}\right).
\end{aligned}
\end{equation}
Because $(\widetilde{t}_{nk}\! =\! \mathbf{y}_n^T\mathbf{\tilde{\Sigma}}_k\mathbf{y}_n) \!\neq\! (t_{nk}\!=\!(\mathbf{x}_n \!-\! \bm{\mu}_k)^T \mathbf{\Sigma}_k^{-1} (\mathbf{x}_n\! - \!\bm{\mu}_k))$, $\psi_k(\widetilde{t}_{nk})$ in \eqref{directEMM} does not equal $\psi_k(\widetilde{t}_{nk})$ in \eqref{solution}. The only exception is when $\psi_k(\cdot)$ is a constant, e.g.,  $\psi_k(\cdot)\!\equiv\!-\frac{1}{2}$ for the Gaussian distribution. In this case, $\lambda_k\! \equiv\! 1$, and the manifold with the reformulation is same as the original one. 

To retain the same optimum as the original problem, we introduce a new parameter $c_k$, which aims to mitigate the mismatch of the reformulated manifold brought by $\lambda_k$. The same optimum is ensured in the following theorem.
\begin{theorem}\label{reformu}
	The optimisation of $\pi_k$, $\mathbf{\tilde{\Sigma}}_k$ and $c_k$ based on the following re-designed cost 
	\begin{equation}\label{reformuJ}
	\tilde J \!=\! -\!\sum_{n=1}^N\!\mathrm{ln}\!\sum_{k = 1}^K\!\pi_k\cdot c_M \cdot (c_k\mathrm{det}(\mathbf{\tilde{\Sigma}}_k))^{-\nicefrac{1}{2}}g_k\left(\mathbf{y}_n^T \mathbf{\tilde{\Sigma}}_k^{-1} \mathbf{y}_n \!- \!c_k\right)
	\vspace{-0.5em}
	\end{equation}
	has the same optimum as those in \eqref{solution}:
	\begin{equation}\label{lambdaoptimum}
	\begin{aligned}
	&\pi_k \!=\! \frac{\sum_{n=1}^N \!\xi_{nk}}{N},	c_k = \frac{1}{\lambda_k} = -\frac{\sum_{n=1}^N\xi_{nk}}{2\sum_{n=1}^N\xi_{nk}\psi_k(t_{nk})}\\
	& \mathbf{\tilde{\Sigma}}_k \!=\! -2\frac{\sum_{n=1}^N \!\xi_{nk}\psi_k(t_{nk}) \mathbf{y}_n\mathbf{y}_n^T}{\sum_{n=1}^N\xi_{nk}}
	\end{aligned}
	\vspace{-0.5em}
	\end{equation} 
\end{theorem}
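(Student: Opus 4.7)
The approach is to exploit the block structure \eqref{refor} to rewrite $\tilde J$ in terms of $(\bm\mu_k,\mathbf{\Sigma}_k,\lambda_k,c_k)$, where $\lambda_k$ denotes the $(M+1,M+1)$ entry of $\mathbf{\tilde\Sigma}_k$. Block-matrix inversion gives $\mathbf{y}_n^T\mathbf{\tilde\Sigma}_k^{-1}\mathbf{y}_n = t_{nk} + \lambda_k^{-1}$, and the Schur-complement determinant identity gives $\det(\mathbf{\tilde\Sigma}_k)=\lambda_k\det(\mathbf{\Sigma}_k)$; so whenever $c_k=1/\lambda_k$ the argument of $g_k$ in \eqref{reformuJ} collapses to $t_{nk}$ and the prefactor $c_k\det(\mathbf{\tilde\Sigma}_k)$ collapses to $\det(\mathbf{\Sigma}_k)$. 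Hence $\tilde J\equiv J$ on the surface $\{c_k\lambda_k=1\}$, and the proof reduces to showing that the first-order conditions of $\tilde J$ force this identity at every stationary point.

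\textbf{Stationarity conditions.} I would then compute the three derivatives of $\tilde J$ using $\partial_{\mathbf{\tilde\Sigma}_k}\ln\det(\mathbf{\tilde\Sigma}_k)=\mathbf{\tilde\Sigma}_k^{-1}$ and $\partial_{\mathbf{\tilde\Sigma}_k}\mathbf{y}_n^T\mathbf{\tilde\Sigma}_k^{-1}\mathbf{y}_n=-\mathbf{\tilde\Sigma}_k^{-1}\mathbf{y}_n\mathbf{y}_n^T\mathbf{\tilde\Sigma}_k^{-1}$. Writing the induced posterior as $\tilde\xi_{nk}=\pi_k\tilde{\mathcal{E}}_k(\mathbf{y}_n)/\sum_j\pi_j\tilde{\mathcal{E}}_j(\mathbf{y}_n)$ and abbreviating $\tilde\psi_{nk}=\psi_k(\mathbf{y}_n^T\mathbf{\tilde\Sigma}_k^{-1}\mathbf{y}_n-c_k)$, the three conditions read $\pi_k=N^{-1}\sum_n\tilde\xi_{nk}$ (after a Lagrange multiplier enforcing $\sum_k\pi_k=1$), $\mathbf{\tilde\Sigma}_k = -2\sum_n\tilde\xi_{nk}\tilde\psi_{nk}\mathbf{y}_n\mathbf{y}_n^T / \sum_n\tilde\xi_{nk}$ (after sandwiching the matrix gradient equation with $\mathbf{\tilde\Sigma}_k$ on both sides), and $c_k=-\sum_n\tilde\xi_{nk}/(2\sum_n\tilde\xi_{nk}\tilde\psi_{nk})$, in which the $c_k$ derivative combines a $-1/(2c_k)$ contribution from the determinant prefactor with a $-\tilde\psi_{nk}$ contribution from the shift inside $g_k$.

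\textbf{Closing the loop.} Finally, reading off the $(M+1,M+1)$ entry of the matrix equation and using that $\mathbf{y}_n\mathbf{y}_n^T$ has unit bottom-right entry yields $\lambda_k=-2\sum_n\tilde\xi_{nk}\tilde\psi_{nk}/\sum_n\tilde\xi_{nk}$, and multiplying this against the expression for $c_k$ telescopes to $c_k\lambda_k=1$. Once this identity is in hand, the first paragraph's block reduction gives $\mathbf{y}_n^T\mathbf{\tilde\Sigma}_k^{-1}\mathbf{y}_n-c_k=t_{nk}$ and $c_k\det(\mathbf{\tilde\Sigma}_k)=\det(\mathbf{\Sigma}_k)$, whence $\tilde{\mathcal{E}}_k(\mathbf{y}_n)=\mathcal{E}_k(\mathbf{x}_n|\bm\mu_k,\mathbf{\Sigma}_k,g_k)$ and therefore $\tilde\xi_{nk}=\xi_{nk}$ and $\tilde\psi_{nk}=\psi_k(t_{nk})$; substituting back into the three stationarity conditions recovers \eqref{lambdaoptimum}, and a block decomposition of the resulting $\mathbf{\tilde\Sigma}_k$ matches the $\bm\mu_k$ and $\mathbf{\Sigma}_k$ updates of \eqref{solution}. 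The main obstacle is not any individual derivative but resisting the urge to decouple $c_k$ and $\lambda_k$ prematurely: the identity $c_k\lambda_k=1$ has to be obtained as a necessary consequence of joint stationarity in $\mathbf{\tilde\Sigma}_k$ and $c_k$, and only then does $\tilde J$ genuinely collapse to $J$ and deliver the same optimum.
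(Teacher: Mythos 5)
Your proposal is correct and follows essentially the same route as the paper's proof: the block identities $\mathbf{y}_n^T\mathbf{\tilde\Sigma}_k^{-1}\mathbf{y}_n=t_{nk}+\lambda_k^{-1}$ and $\det(\mathbf{\tilde\Sigma}_k)=\lambda_k\det(\mathbf{\Sigma}_k)$, the stationarity conditions yielding the coupled fixed-point equations for $c_k$ and $\lambda_k$, the key identity $c_k\lambda_k=1$ forcing $\tilde\xi_{nk}=\xi_{nk}$ and $\tilde\psi_{nk}=\psi_k(t_{nk})$, and the final block decomposition of $\mathbf{\tilde\Sigma}_k$. The only cosmetic difference is that you extract the $\lambda_k$ equation from the bottom-right entry of the matrix stationarity condition, whereas the paper obtains it by differentiating $\tilde J$ directly with respect to $\lambda_k$; the two are equivalent.
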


\begin{proof}
	Please see Appendix-\ref{prooftheorem1}.
\end{proof}

We optimise \eqref{reformuJ} on a product manifold of  $\mathbf{\tilde{\Sigma}}_k$, $\pi_k$ and $c_k$. For optimising $\mathbf{\tilde{\Sigma}}_k$, on the basis of the metric in Section \ref{secmetric}, we calculate Riemannian gradient as $\nabla_R\tilde{J} = \mathbf{\tilde{\Sigma}}_k (\nabla_E\tilde{J}) \mathbf{\tilde{\Sigma}}_k$, where $\nabla_E\tilde{J}$ is the Euclidean gradient of cost $\tilde J$ via $\nicefrac{\partial \tilde{J}}{\partial \mathbf{\tilde{\Sigma}}_k}$. Furthermore, although explicitly formulated, it is important to mention that the exponential mapping provided after Lemma \ref{proprie} operates on a matrix, which comes with an extremely high computational complexity (typically $\mathcal{O}(M^4)$) and even needs a certain degree of approximation \cite{miyajima2019verified}. A common way to approximate the exponential mapping is via the retraction, of which the accuracy is up to the first order to the exponential mapping \cite{absil2012projection}. Thus, we employ the Taylor series expansion of $\exp(\mathbf{\Sigma}^{-\frac{1}{2}}\mathbf{U}\mathbf{\Sigma}^{-\frac{1}{2}})$ as a way of the approximation, via 
\begin{equation}
\begin{aligned}
&\mathrm{Exp}_{\mathbf{\Sigma}}(\mathbf{U}) = \mathbf{\Sigma}^{\frac{1}{2}}\exp(\mathbf{\Sigma}^{-\frac{1}{2}}\mathbf{U}\mathbf{\Sigma}^{-\frac{1}{2}})\mathbf{\Sigma}^{\frac{1}{2}} \\
&~~~~\approx \mathbf{\Sigma}^{\frac{1}{2}}(\mathbf{0} + \mathbf{\Sigma}^{-\frac{1}{2}}\mathbf{U}\mathbf{\Sigma}^{-\frac{1}{2}} + \frac{1}{2} \mathbf{\Sigma}^{-\frac{1}{2}}\mathbf{U}\mathbf{\Sigma}^{-1}\mathbf{U}\mathbf{\Sigma}^{-\frac{1}{2}})\mathbf{\Sigma}^{\frac{1}{2}} \\
&~~~~= \mathbf{\Sigma} + \mathbf{U} + \frac{1}{2} \mathbf{U}\mathbf{\Sigma}^{-1}\mathbf{U} = R_{\mathbf{\Sigma}}(\mathbf{U}), 
\end{aligned}
\end{equation}
where the approximation is performed up to the cubic (third-order) term. It can be easily verified that $R_{\mathbf{\Sigma}}(\mathbf{U})$ is a retraction (Chapter 4.1 of \cite{absil2009optimization}), which significantly reduces the computational complexity from a matrix exponential to simple linear operations on matrices. Finally, we employ the conjugate gradient descent \cite{boumal2015lowrank} as a manifold solver, with a pseudo-code for our method given in Algorithm \ref{Alg1}. 
\begin{algorithm}
	{
	\caption{The proposed method for optimising the EMM}\label{Alg1}
	\begin{algorithmic}[1]
		\Require $N$ observed samples: $\mathbf{x}_1, \mathbf{x}_2, \ldots, \mathbf{x}_N$;
		\State \textbf{initialize:} $\{\bm{\mu}_k^0\}_{k=1}^K$, $\{\mathbf{\Sigma}_k^0\}_{k=1}^K$, $\{\pi_k^0\}_{k=1}^K$,  $\{c_k^0\}_{k=1}^K$, and $\{\lambda_{k}^{ini}\}_{k=1}^K$
		\For{$k = 1 ~to~ K$}
		\For{$n = 1 ~to~ N$}
		\State $\mathbf{y}_n, \mathbf{\tilde \Sigma}_k^0$ $\gets$ REPARAMET($\mathbf{x}_n$, $\bm{\mu}_k^0$, $\mathbf{\Sigma}_k^0$, $\lambda_{k}^{ini}$);
		\EndFor
		\EndFor
		\While{not converged (at the $t$-th iteration):}
		\For{$k = 1 ~to~ K$}
		\State Calculate Euclidean gradients $\nabla_E(\mathbf{\tilde \Sigma}_k^t)$, $\nabla_E(\pi_k^{t})$ and $\nabla_E(c_k^{t})$, by differentiating $\tilde J$ of \eqref{reformu};
		\State $\pi_k^{t+1} \gets $ Step descent based on $\nabla_E(\pi_k^{t})$;
		\State $c_k^{t+1} \gets $ Step descent based on $\nabla_E(c_k^{t})$;
		\State Update $\mathbf{\tilde \Sigma}_k^t$:
		\State ~~~~~~$\nabla_R(\mathbf{\tilde \Sigma}_k^t) \gets$ RGRADIENT($\mathbf{\tilde \Sigma}_k^t$, $\nabla_E(\mathbf{\tilde \Sigma}_k^t)$);
		\State ~~~~~~$\mathbf{U}_k^{t} \gets$ Step descent based on $\nabla_R(\mathbf{\tilde \Sigma}_k^t)$;
		\State ~~~~~~$\mathbf{\tilde \Sigma}_k^{t+1} \gets$ RETRACTION($\mathbf{\tilde \Sigma}_k^t$, $\mathbf{U}_k^{t}$);
		\EndFor
		\EndWhile
		\For{$k = 1 ~to~ K$}
		\State $\bm{\mu}_k^*$, $\mathbf{\Sigma}_k^*$ $\gets$ DECOMPOSITION($\mathbf{\tilde \Sigma}_k^*$) 
		\EndFor
		\Ensure $\{\bm{\mu}_k^*\}_{k=1}^K$, $\{\mathbf{\Sigma}_k^*\}_{k=1}^K$ and $\{\pi_k^*\}_{k=1}^K$.
		\vspace{.5em}
		\Procedure{Reparamet}{$\mathbf{x}$, $\bm{\mu}$, $\mathbf{\Sigma}$, $\lambda$}
		\State Re-parametrisation via \eqref{refor}.
		\State \textbf{return} $\mathbf{y}, \mathbf{\tilde{\Sigma}}$
		\EndProcedure
		\Procedure{Rgradient}{$\mathbf{\Sigma}$, $\nabla_E$}
		\State \textbf{return} $\nabla_R = (\mathbf{\Sigma}\nabla_E\mathbf{\Sigma})$
		\EndProcedure	
		\Procedure{Retraction}{$\mathbf{\Sigma}$, $\mathbf{U}$}
		\State \textbf{return} $\mathbf{\Sigma}_{new} = (\mathbf{\Sigma} + \mathbf{U} + \frac{1}{2} \mathbf{U}\mathbf{\Sigma}^{-1}\mathbf{U})$
		\EndProcedure	
		\Procedure{Decomposition}{$\mathbf{\tilde \Sigma}$, $c$}
		\State Decompose via inverting \eqref{refor}: $ \left({{\mathbf{\Sigma}+\frac{1}{c}\bm{\mu}\bm{\mu}^T} \atop {\frac{1}{c}\bm{\mu}^T}}~~ {\frac{1}{c}{\bm{\mu}} \atop {\frac{1}{c}}}\right)=\mathbf{\tilde{\Sigma}}$
		\State \textbf{return} $\bm{\mu}$ and $\mathbf{\Sigma}$
		\EndProcedure			
	\end{algorithmic}}
\end{algorithm}

The advantages of our algorithm, by virtue of its inherent reformulation, can be understood from two aspects. First, through the reformulation, our method is capable of providing a relatively global descent in terms of the re-parametrised $\mathbf{\tilde\Sigma}_k$, whereas optimisation without the reformulation requires a sophisticated incorporated step descent on both $\bm{\mu}_k$ and $\mathbf{\Sigma}_k$, to ensure a well-behaved convergence. On the other hand, one typical singularity case is when certain $\bm{\mu}_k$ moves to the boundary of the data during optimisation, in which the cluster is likely to model a small set of data samples (e.g., one or two samples). In contrast, the proposed reformulated EMMs can be regarded as zero-mean mixtures, which to some extent relieves this singularity issue.

\subsection{Regularisation}\label{REMM}
We impose the inverse-Wishart prior distribution (i.e., $p_{\mathbf{\Sigma}_k}(\mathbf{\Sigma}_k) \propto \frac{1}{\mathrm{det}(\mathbf{\Sigma}_k)^{\nicefrac{v}{2}}}\exp(-\frac{v\mathrm{tr}(\mathbf{\Sigma}_k^{-1}\mathbf{S})}{2})$) to regularise the EMM, where $v$ controls the freedom and $\mathbf{S}$ is the prior matrix. The advantages of using a form of $\mathrm{tr}(\mathbf{{{\Sigma}}}_k^{-1}\mathbf{S})$ are two-fold: i) it is strictly geodesic convex in $\mathbf{{{\Sigma}}}_k$ and ii) the solutions are ensured to exist for any data configuration \cite{ollila2014regularized}. By utilising maximising a posterior on covariance matrices, we obtain the same solutions of $\pi_k$ and $\bm{\mu}_k$ as those of \eqref{solution}, whereas $\mathbf{\Sigma}_k$ now becomes 
\begin{equation}\label{solutionr}
\mathbf{\Sigma}_k = \frac{-2\cdot\sum_{n=1}^N \xi_{nk}\psi_k(t_{nk}) (\mathbf{x}_n-\bm{\mu}_k)(\mathbf{x}_n-\bm{\mu}_k)^T + v\mathbf{S}}{\sum_{n=1}^N\xi_{nk} + v}.
\end{equation}

Similar to Theorem \ref{reformu}, the following proposition can be obtained for the reformulation with regularisations.
\begin{prop}\label{regularpro}
	The optimisation of $\pi_k$, $\mathbf{\tilde{\Sigma}}_k$ and $c_k$ based on the following cost function 
	\begin{equation}
	\tilde J_r \!=\! \tilde J + \sum_{k = 1}^K{(c_k\mathrm{det}(\mathbf{\tilde{\Sigma}}_k))^{-\nicefrac{v}{2}}}\exp\left(-\frac{v\mathrm{tr}(\mathbf{\tilde{\Sigma}}_k^{-1}\mathbf{\tilde{S}})}{2}\right),
	\end{equation}
	achieves the same optimal $\mathbf{\Sigma}_k$ as in \eqref{solutionr} and the same $\pi_k$ and $\bm{\mu}_k$ as in \eqref{solution}, where $\mathbf{\tilde{S}} = [{{\mathbf{S}} \atop {\mathbf{0}}} \hspace{.5em} {{\mathbf{0}} \atop {0}}]$. The optimal $c_k$ and $\lambda_k$ are
	\begin{equation}
	c_k = \frac{1}{\lambda_k} = -\frac{\sum_{n=1}^N\xi_{nk} + v}{2\sum_{n=1}^N\xi_{nk}\psi_k(t_{nk})}.
	\end{equation} 
\end{prop}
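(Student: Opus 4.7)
My approach would parallel that of Theorem~\ref{reformu}, with an added bookkeeping step to handle the inverse-Wishart regulariser. The key observation is that the lifted prior matrix $\mathbf{\tilde{S}}=[{\mathbf{S}\ \mathbf{0}\atop\mathbf{0}\ 0}]$ interacts cleanly with the block structure of $\mathbf{\tilde{\Sigma}}_k$ from \eqref{refor}, so that the regulariser in $\tilde{J}_r$ collapses to the original inverse-Wishart regulariser on $\mathbf{\Sigma}_k$ alone.

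First I would expand $\mathbf{\tilde{\Sigma}}_k^{-1}$ via the Schur complement. With the block form of \eqref{refor}, the Schur complement of the bottom-right entry is $(\mathbf{\Sigma}_k+\lambda_k\bm{\mu}_k\bm{\mu}_k^T)-\lambda_k\bm{\mu}_k\cdot\lambda_k^{-1}\cdot\lambda_k\bm{\mu}_k^T=\mathbf{\Sigma}_k$, so the top-left $M\times M$ block of $\mathbf{\tilde{\Sigma}}_k^{-1}$ is exactly $\mathbf{\Sigma}_k^{-1}$. Since $\mathbf{\tilde{S}}$ is zero outside that block, $\mathrm{tr}(\mathbf{\tilde{\Sigma}}_k^{-1}\mathbf{\tilde{S}})=\mathrm{tr}(\mathbf{\Sigma}_k^{-1}\mathbf{S})$. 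Similarly, the block determinant formula yields $\mathrm{det}(\mathbf{\tilde{\Sigma}}_k)=\lambda_k\mathrm{det}(\mathbf{\Sigma}_k)$, so at the ansatz $c_k=1/\lambda_k$ we obtain $c_k\mathrm{det}(\mathbf{\tilde{\Sigma}}_k)=\mathrm{det}(\mathbf{\Sigma}_k)$. Consequently, the entire regularising summand reduces to a function of $\mathbf{\Sigma}_k$ alone, free of any coupling to $\bm{\mu}_k$ or $\lambda_k$.

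Combining this reduction with Theorem~\ref{reformu}---which already establishes that stationary points of $\tilde{J}$ in $(\mathbf{\tilde{\Sigma}}_k,\pi_k,c_k)$ agree with those of $J$ in $(\mathbf{\Sigma}_k,\bm{\mu}_k,\pi_k)$ under the identification $c_k=1/\lambda_k$---I would differentiate $\tilde{J}_r$ with respect to $\mathbf{\tilde{\Sigma}}_k$, $\pi_k$ and $c_k$ and set each derivative to zero. Because the regulariser is independent of $\pi_k$ and $\bm{\mu}_k$, their stationarity conditions are unchanged from \eqref{solution}. For the covariance block, the additional Euclidean gradient contributed by the regulariser produces precisely the extra $v\mathbf{S}$ in the numerator and $+v$ in the denominator of \eqref{solutionr}, after using $\partial\mathrm{tr}(\mathbf{\tilde{\Sigma}}_k^{-1}\mathbf{\tilde{S}})/\partial\mathbf{\tilde{\Sigma}}_k=-\mathbf{\tilde{\Sigma}}_k^{-1}\mathbf{\tilde{S}}\mathbf{\tilde{\Sigma}}_k^{-1}$ together with the block identities above.

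The delicate step, and the main obstacle, is the $c_k$ equation: $c_k$ enters the likelihood through both the shift $\mathbf{y}_n^T\mathbf{\tilde{\Sigma}}_k^{-1}\mathbf{y}_n-c_k$ and the prefactor $(c_k\mathrm{det}(\mathbf{\tilde{\Sigma}}_k))^{-1/2}$, and it enters the regulariser through $(c_k\mathrm{det}(\mathbf{\tilde{\Sigma}}_k))^{-v/2}$. I would carefully track these $v$-contributions and show that the $c_k$-derivative of the regulariser injects an extra $v$ into the numerator of the unregularised formula $c_k=-\sum_n\xi_{nk}/(2\sum_n\xi_{nk}\psi_k(t_{nk}))$ from Theorem~\ref{reformu}, yielding the claimed $c_k=-(\sum_n\xi_{nk}+v)/(2\sum_n\xi_{nk}\psi_k(t_{nk}))$. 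The consistency check that must then be verified is that the updated $\mathbf{\tilde{\Sigma}}_k$ decomposition still satisfies $c_k=1/\lambda_k$ with this new $c_k$, which follows by substituting \eqref{solutionr} for $\mathbf{\Sigma}_k$ into the bottom-right block of the Schur-reconstructed $\mathbf{\tilde{\Sigma}}_k$; that the $+v$ appears in exactly the right place in both equations is what makes the reformulation faithful under regularisation.
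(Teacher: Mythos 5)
Your proposal is correct and follows essentially the same route as the paper, which omits this proof precisely because it is "analogous to Theorem~\ref{reformu}": your block-matrix reductions $\mathrm{tr}(\mathbf{\tilde{\Sigma}}_k^{-1}\mathbf{\tilde{S}})=\mathrm{tr}(\mathbf{\Sigma}_k^{-1}\mathbf{S})$, $\mathrm{det}(\mathbf{\tilde{\Sigma}}_k)=\lambda_k\mathrm{det}(\mathbf{\Sigma}_k)$ and the shift $\mathbf{y}_n^T\mathbf{\tilde{\Sigma}}_k^{-1}\mathbf{y}_n=t_{nk}+\nicefrac{1}{\lambda_k}$ are exactly the ingredients needed to carry that argument over to the regularised cost and recover \eqref{solutionr} together with $c_k=\nicefrac{1}{\lambda_k}=-\nicefrac{(\sum_n\xi_{nk}+v)}{(2\sum_n\xi_{nk}\psi_k(t_{nk}))}$. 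One small caveat: your gradient computations implicitly read the added term as the negative logarithm of the inverse-Wishart prior, i.e. $\frac{v}{2}\ln(c_k\mathrm{det}(\mathbf{\tilde{\Sigma}}_k))+\frac{v}{2}\mathrm{tr}(\mathbf{\tilde{\Sigma}}_k^{-1}\mathbf{\tilde{S}})$, which is the reading required for the stated optimum (and clearly the paper's intent, since \eqref{solutionr} is the MAP solution), whereas the displayed $\tilde J_r$ literally adds the prior density itself.
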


\begin{proof}
	The proof is analogous to that of Theorem \ref{reformu} and is therefore omitted.
\end{proof}
\begin{remark}
	In \eqref{solutionr}, it can be seen that when $v\rightarrow \infty$, $\mathbf{\Sigma}_k \rightarrow \mathbf{S}$. Furthermore, when $\mathbf{S}=\mathbf{I}_M$, $\mathbf{\Sigma}_k = \sigma^2\mathbf{I}_M$, the estimation of $\bm{\mu}_k$ in \eqref{solution} becomes $\frac{\sum_{n=1}^N \xi_{nk}\psi_k(\sigma^{-2}||\mathbf{x}_n-\bm{\mu}_k||^2)\mathbf{x}_n}{\sum_{n=1}^N \xi_{nk}\psi_k(\sigma^{-2}||\mathbf{x}_n-\bm{\mu}_k||^2)}$, which is the basic \textit{mean-shift} algorithm with soft thresholds. Furthermore, when $\mathbf{S}=\mathbf{I}_M$, $\mathbf{\Sigma}_k = \mathbf{I}_M$ and $\psi_k(t_{nk}) = -\nicefrac{1}{2}$ (the GMM), it then turns to a soft version of the basic \textit{k-means} algorithm. This all demonstrates that the EMM is a flexible framework in our regularisation settings and that we can choose $v$ and $\mathbf{S}$ to achieve different models. 
\end{remark}

It needs to be pointed out that although the inverse-Wishart prior is one of the popular priors (typically $\mathbf{S} = \mathbf{I}$), there are also other priors which suit different requirements. For example, there is also work using the Wishart prior, which is less informative but requires a particular setting of the parameters \cite{chung2015weakly}. Instead of controlling the degrees of freedom by the scalar $v$, a generalised inverse-Wishart distribution has been applied to flexibly control the degrees of freedom \cite{rajaratnam2008flexible}. Another pragmatic solution would be to decompose the covariance matrix into its standard deviation and correlation matrix components (inverse-Wishart distribution) so that the standard deviation can be treated in a flexible way \cite{barnard2000modeling}. Moreover, probabilistic graphical models can be used as a prior to explicitly control the sparsity of the matrices \cite{fop2019model}, where e.g., graphical LASSO can be applied. Furthermore, a robust distribution for positive definite matrices, named $F$-distribution, has become a popular choice for priors, which generalises the half-Cauchy and half-$t$ distributions \cite{mulder2018matrix}. Recently, a Riemannian Gaussian distribution for the positive definite matrix has been proposed by replacing the Mahalanobis term with the Fisher-Rao metric of positive definite matrices \cite{said2017riemannian}. A similar strategy can be extended to the Laplacian \cite{hajri2016riemannian} and even to the elliptical distributions, which has a significant potential to generate a rich class of priors on positive definite matrices. This paper investigates the inverse-Wishart prior as an example of regularisation, because it can further emphasise the flexibility of EMMs and also the compatibility of our re-parametrisation technique. The investigation on other priors is part of our future work.
\begin{figure*}[!h]
	\begin{center}
		\subfigure[Gaussian]{\includegraphics[width=0.24\textwidth]{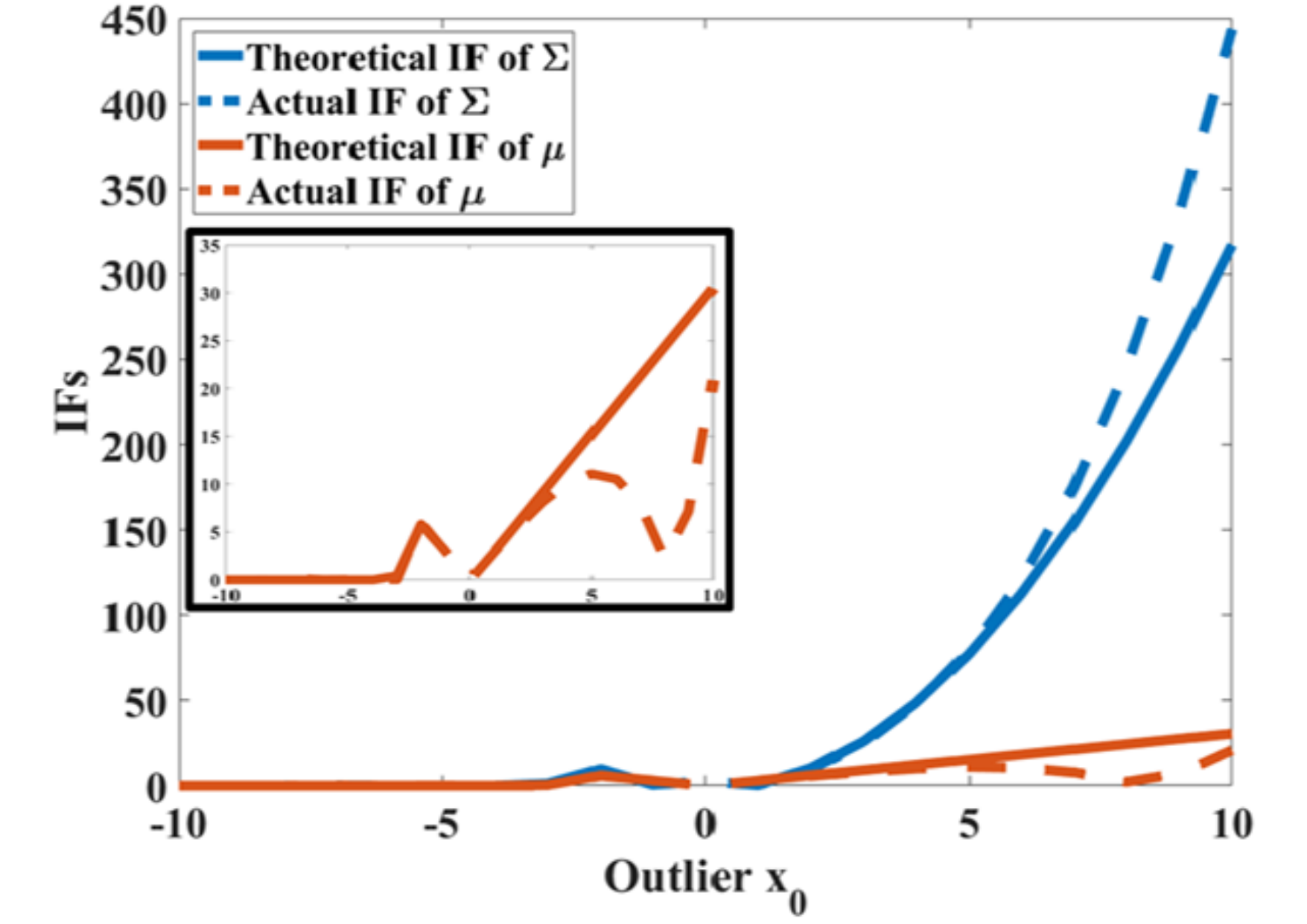}}
		\subfigure[Cauchy]{\includegraphics[width=0.24\textwidth]{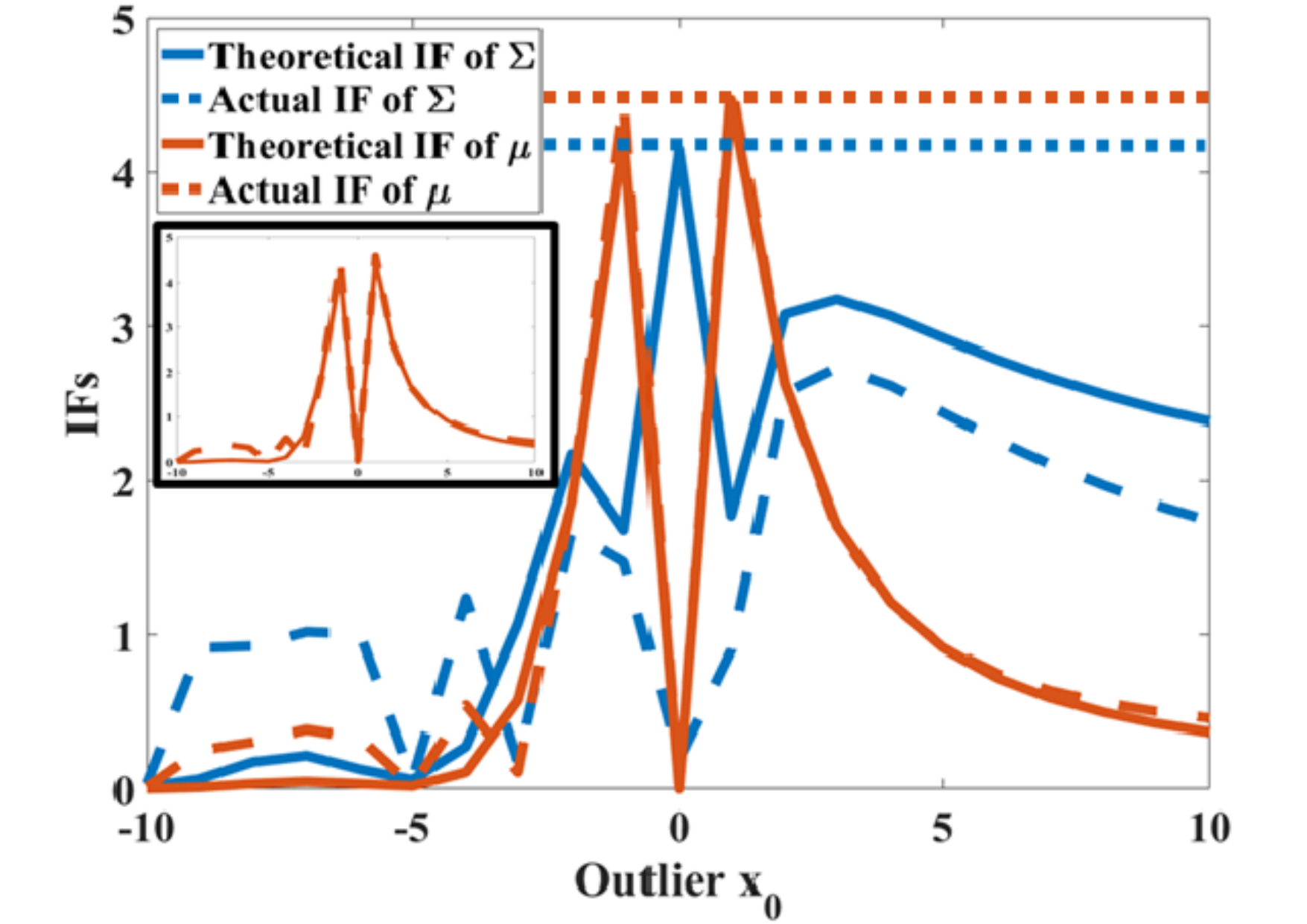}}
		\subfigure[Laplace]{\includegraphics[width=0.24\textwidth]{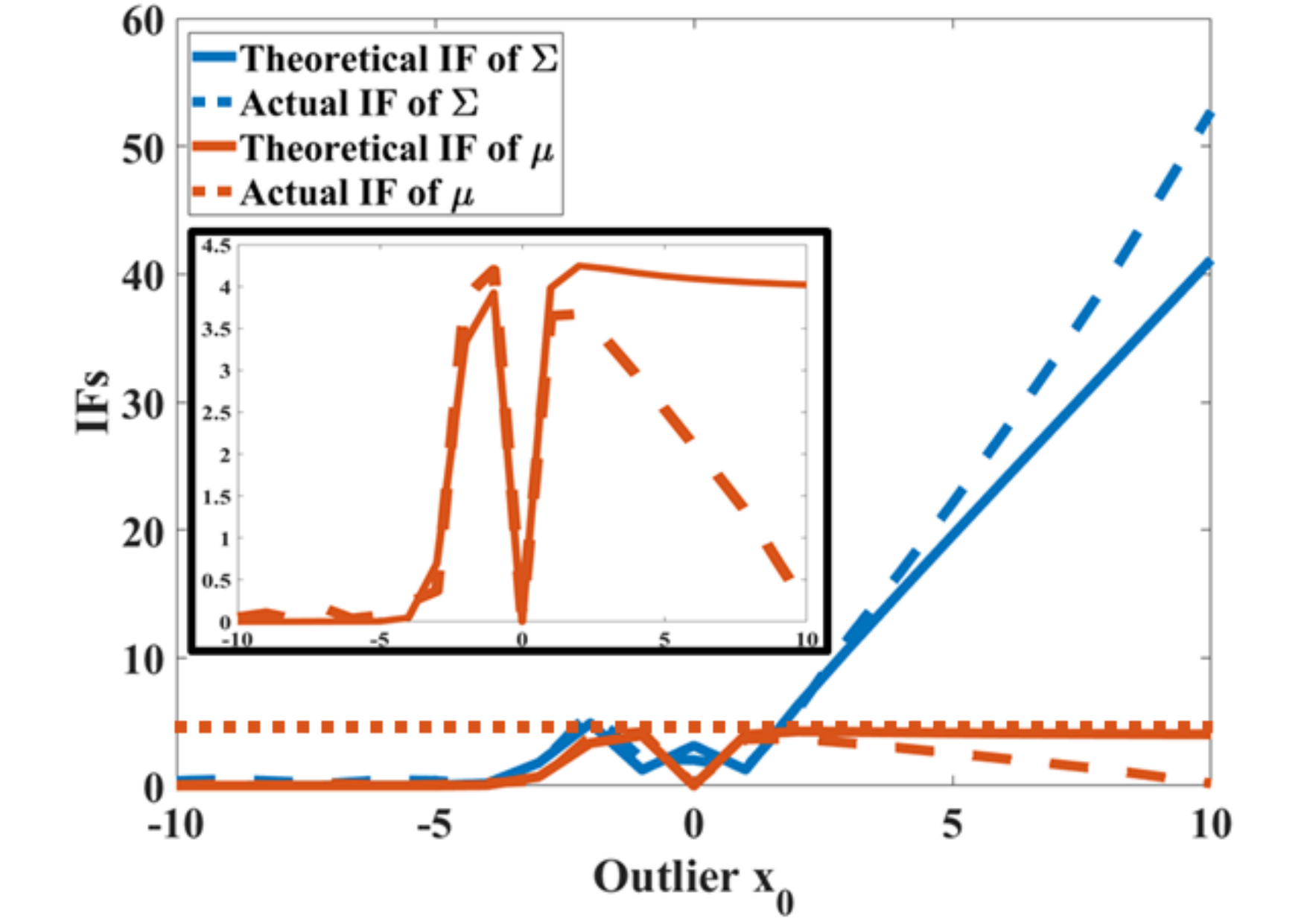}}
		\subfigure[GG1.5]{\includegraphics[width=0.24\textwidth]{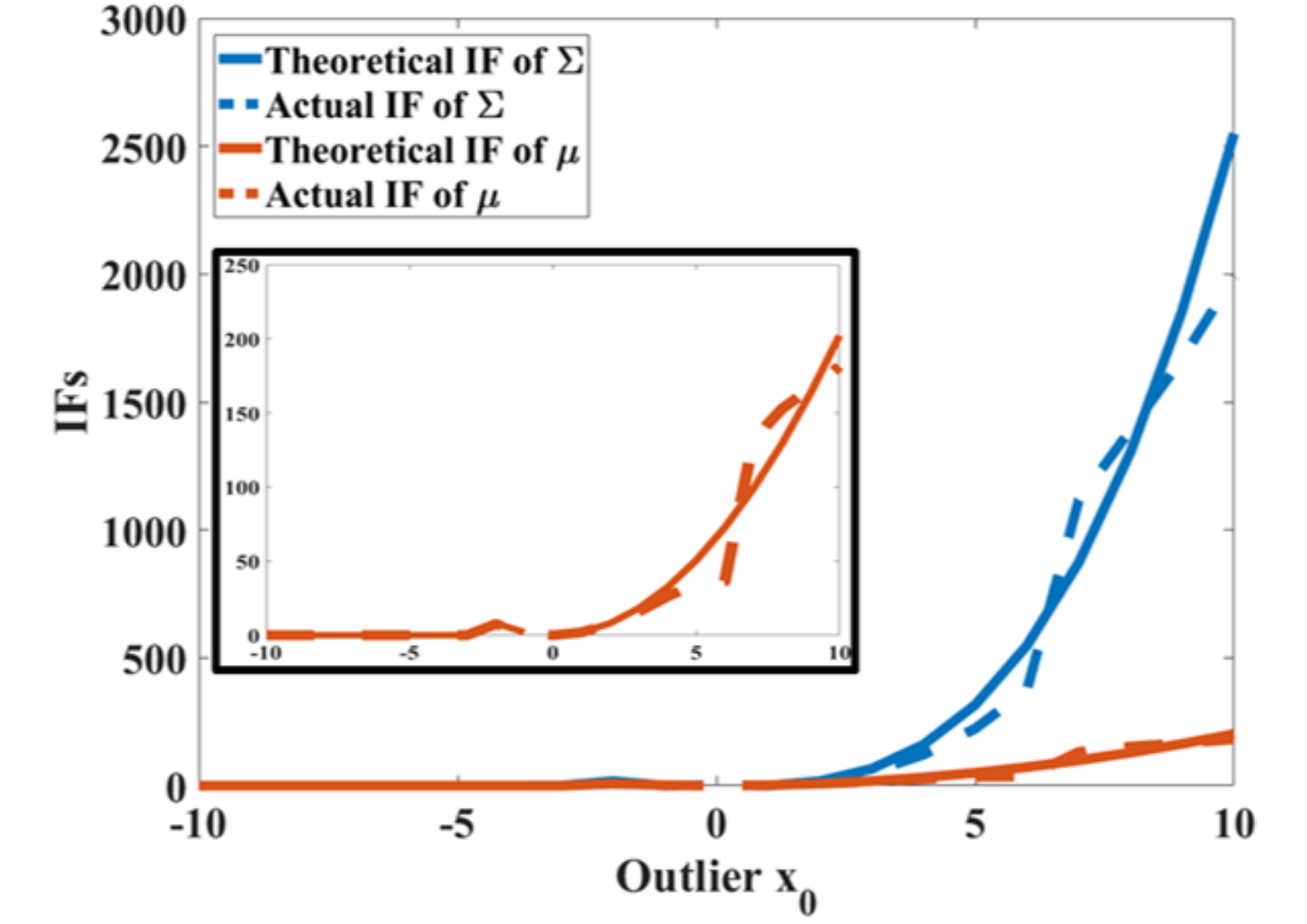}}
	\end{center}
	\vspace{-1.2em}
	\caption{\footnotesize For reproducibility, we follow the work of \cite{hennig2004breakdown} to generate three one-dimensional clusters using the inverse of the Gaussian cumulative distribution function. These three clusters are centred respectively at $\bm{\mu}_1 = 0$, $\bm{\mu}_2 = -5$ and $\bm{\mu}_3 = -10$, and IF curves for $\bm{\mu}_1$ are illustrated. The mixture distributions are the Gaussian, Cauchy, Laplace and GG1.5, as shown in Table \ref{distde}. The theoretical bounds are plotted in solid lines and the actual IFs are plotted in dotted lines. The horizontal dotted lines represent the boundedness (upper bounds) where the mixtures exhibit robustness. The zoomed versions of the IFs of the mean are given in the black box of each figure.}\label{IFcurve}
	\vspace{-1em}
\end{figure*}

\section{Influence functions of the EMM}\label{secIFEMM}
Robustness properties of a single elliptical distribution (or more generally, an M-estimator) have been extensively studied \cite{maronna1976robust, tyler2014breakdown, zoubir2012robust, ollila2012complex}, typically from the perspective of influence functions (IFs) \cite{hampel2011robust}. The IF is an important metric for quantifying the impact of an infinitesimal fraction of outliers on the estimations, which captures the local robustness. However, to the best of our knowledge, there exists no work on the IF of mixture models, especially for the EMMs. 
To calculate the IFs, we utilise $\mathbf{x}_0$ to denote point-mass outliers, which means that these outliers are point-mass distributed at $\mathbf{x}_0$ \cite{zoubir2012robust}. We also explicitly write the posterior distribution of latent variables as a function of $\mathbf{x}$ ($\xi_j(\mathbf{x}) = \frac{\mathcal{E}_j(\mathbf{x}|\bm{\mu}_j, \mathbf{\Sigma}_j, g_j)\pi_j}{\sum_{k = 1}^K\mathcal{E}_k(\mathbf{x}|\bm{\mu}_k, \mathbf{\Sigma}_k, g_k)\pi_k}$), because in robustness analysis, we need to quantify it with respect to outliers. For simplicity, $t_j$ is also defined as the Mahalanobis distance $(\mathbf{x}-\bm{\mu}_j)^T\mathbf{\Sigma}_j^{-1}(\mathbf{x}-\bm{\mu}_j)$ and $\mathbb{E}[\cdot]$ is the expectation over the true distribution of $\mathbf{x}$. Then, our analysis on the IFs is based on the following two lemmas.
\begin{lemma}\label{IFcovari}
	Consider the mixture of elliptical distributions, $\mathcal{E}_k(\mathbf{x}|\bm{\mu}_k, \mathbf{\Sigma}_k, g_k)$. When data are well separated, upon denoting $(\mathbf{x}_0 - \bm{\mu}_j)$ by $\overline{\mathbf{x}}_0$ for the $j$-th cluster, its IF is given by,
	\small{
		\begin{equation}\label{IFcov}
		\begin{aligned}
		&\mathcal{I}_{\mathbf{\Sigma}_j}(\mathbf{x}_0)\! =\! - \frac{\xi_j(\mathbf{x}_0)\psi_j(\overline{\mathbf{x}}_0^T\mathbf{\Sigma}_j^{-1}\overline{\mathbf{x}}_0)}{w_2}\overline{\mathbf{x}}_0\overline{\mathbf{x}}_0^T \\
		& + \mathbf{\Sigma}_j^{\frac{1}{2}}\!\left[\frac{2w_1\cdot \xi_j(\mathbf{x}_0)\psi_j(\overline{\mathbf{x}}_0^T\mathbf{\Sigma}_j^{-1}\overline{\mathbf{x}}_0)\overline{\mathbf{x}}_0^T\mathbf{\Sigma}_j^{-1}\overline{\mathbf{x}}_0 + w_2 \cdot\xi_j(\mathbf{x}_0)\mathbf{I} }{2(Mw_1-w_2)w_2}\right]\!\mathbf{\Sigma}_j^{\frac{1}{2}},
		\end{aligned}
		\end{equation}}
	\normalsize
	where $w_1$ and $w_2$ are constants (irrelevant to the outlier $\mathbf{x}_0$) given by
	\small
	\begin{equation}\label{defini}
	\begin{aligned}
	w_1 & = \frac{\mathbb{E}[(\xi_j(\mathbf{x})-\xi_j^2(\mathbf{x}))\psi_j^2(t)t^2]+\mathbb{E}[\xi_j(\mathbf{x})\psi'_j(t)t^2]}{M(M+1)}\\
	&~~~~+\frac{\mathbb{E}[(\xi_j(\mathbf{x})-\xi_j^2(\mathbf{x}))\psi_j(t)t]}{M}+ \frac{\mathbb{E}[\xi_j(\mathbf{x})-\xi_j^2(\mathbf{x})]}{4},\\
	w_2 & \!= \!\frac{\pi_j}{2} \!-\! \frac{\mathbb{E}[(\xi_j(\mathbf{x})-\xi_j^2(\mathbf{x}))\psi_j^2(t)t^2]\!+\!\mathbb{E}[\xi_j(\mathbf{x})\psi'_j(t)t^2]}{M(M+1)}.
	\end{aligned}
	\end{equation}
\end{lemma}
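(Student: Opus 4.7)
The plan is to treat the estimator $\mathbf{\Sigma}_j$ as a statistical functional $\mathbf{\Sigma}_j[F]$ defined implicitly by the stationarity equation in \eqref{solution}, and to compute its G\^ateaux derivative at a point-mass contamination. Rewriting that equation in population form as
\begin{equation*}
\mathbf{\Sigma}_j[F]\cdot \mathbb{E}_F[\xi_j(\mathbf{x})] \;=\; -2\,\mathbb{E}_F\bigl[\xi_j(\mathbf{x})\psi_j(t_j)(\mathbf{x}-\bm\mu_j)(\mathbf{x}-\bm\mu_j)^T\bigr],
\end{equation*}
substituting $F_\epsilon=(1-\epsilon)F+\epsilon\delta_{\mathbf{x}_0}$, and differentiating both sides with respect to $\epsilon$ at $\epsilon=0$ exposes $\mathcal{I}_{\mathbf{\Sigma}_j}(\mathbf{x}_0)$ on the left. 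The right-hand side splits naturally into an \emph{explicit} point-mass piece (obtained by evaluating the integrand at $\mathbf{x}_0$) and an \emph{implicit drift} piece arising from the dependence of $\xi_j$, $\psi_j$ and $t_j$ on the parameter $\mathbf{\Sigma}_j$ itself.

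For the drift piece I would use the exact identity $\partial\xi_j/\partial\mathbf{\Sigma}_j=\xi_j(1-\xi_j)\,\partial\log\mathcal{E}_j/\partial\mathbf{\Sigma}_j$ together with the matrix derivatives $\partial t_j/\partial\mathbf{\Sigma}_j=-\mathbf{\Sigma}_j^{-1}\overline{\mathbf{x}}\,\overline{\mathbf{x}}^T\mathbf{\Sigma}_j^{-1}$ and $\partial\log\mathcal{E}_j/\partial\mathbf{\Sigma}_j=-\tfrac{1}{2}\mathbf{\Sigma}_j^{-1}-\psi_j(t_j)\mathbf{\Sigma}_j^{-1}\overline{\mathbf{x}}\,\overline{\mathbf{x}}^T\mathbf{\Sigma}_j^{-1}$, writing $\overline{\mathbf{x}}=\mathbf{x}-\bm\mu_j$. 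Collecting terms after substitution recasts the differentiated equation as a linear matrix equation $\mathcal{L}[\mathcal{I}_{\mathbf{\Sigma}_j}(\mathbf{x}_0)]=\mathbf{R}(\mathbf{x}_0)$, where $\mathbf{R}(\mathbf{x}_0)$ consists of a rank-one point-mass term proportional to $\xi_j(\mathbf{x}_0)\psi_j(\cdot)\overline{\mathbf{x}}_0\overline{\mathbf{x}}_0^T$ together with an isotropic term proportional to $\xi_j(\mathbf{x}_0)\mathbf{I}$ coming from the $-\tfrac{1}{2}\mathbf{\Sigma}_j^{-1}$ piece, while $\mathcal{L}$ is a fixed linear operator whose coefficients are exactly the expectations in \eqref{defini}.

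The central reduction is evaluating $\mathcal{L}$ in closed form. Setting $\mathbf{u}=\mathbf{\Sigma}_j^{-1/2}\overline{\mathbf{x}}/\sqrt{t_j}$, the stochastic representation in \eqref{RESsto} and the independence noted in the accompanying Remark guarantee that $\mathbf{u}$ is uniform on the unit sphere and independent of $t_j$. Standard spherical identities such as $\mathbb{E}[\mathbf{u}\mathbf{u}^T]=\mathbf{I}/M$ and their fourth-order analogues decouple angular from radial expectations and produce precisely the combinations $\mathbb{E}[(\xi_j-\xi_j^2)\psi_j^2 t^2]$ and $\mathbb{E}[\xi_j\psi_j' t^2]$ that constitute $w_1$ and $w_2$. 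Conjugating the unknown as $\mathbf{A}=\mathbf{\Sigma}_j^{-1/2}\mathcal{I}_{\mathbf{\Sigma}_j}(\mathbf{x}_0)\mathbf{\Sigma}_j^{-1/2}$ then splits $\mathcal{L}$ into a scalar trace block, inverted with denominator $Mw_1-w_2$, and a traceless block, inverted with denominator $w_2$; recombining and conjugating back by $\mathbf{\Sigma}_j^{1/2}$ delivers \eqref{IFcov}.

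The main obstacle will be the careful separation of trace and traceless components when inverting $\mathcal{L}$, because the $\psi_j'(t)t^2$ term couples the scalar and matrix parts in a way that only diagonalises after the switch to $\mathbf{u}$-coordinates. A secondary difficulty is justifying the well-separated assumption quantitatively: strictly, $\mathcal{I}_{\bm\mu_j}(\mathbf{x}_0)$ and $\mathcal{I}_{\mathbf{\Sigma}_j}(\mathbf{x}_0)$ satisfy a coupled system through the dependence of $t_j$ on $\bm\mu_j$, and one must argue that cross-cluster posterior contributions render the mean/covariance off-diagonal Hessian block negligible, so that the two influence functions decouple and the matrix calculation above is self-contained.
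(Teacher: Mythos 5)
Your proposal is correct and follows essentially the same route as the paper's proof: differentiate the contaminated estimating equation at $\epsilon=0$, use the chain-rule terms for $\xi_j$ and $\psi_j$, invoke well-separation plus the stochastic representation to decouple the radial variable $t$ from the uniform spherical direction $\mathbf{u}$, apply the second- and fourth-order spherical moment identities to obtain $w_1$ and $w_2$, solve the resulting linear matrix equation by isolating the trace (denominator $Mw_1-w_2$), and recover general $\mathbf{\Sigma}_j$ by affine equivariance. The only cosmetic difference is that the paper first reduces to $\mathbf{\Sigma}_j=\mathbf{I}$, $\bm{\mu}_j=\mathbf{0}$ and conjugates back at the end, whereas you carry general $\mathbf{\Sigma}_j$ and conjugate the unknown mid-calculation.
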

\begin{lemma}\label{IFmean}
	Consider the mixture of elliptical distributions, $\mathcal{E}_k(\mathbf{x}|\bm{\mu}_k, \mathbf{\Sigma}_k, g_k)$. When data are well separated, for the $j$-th cluster, its IF on the mean is given by
	\begin{equation}
	\mathcal{I}_{\bm{\mu}_j}(\mathbf{x}_0) = \frac{1}{w_3}\xi_j(\mathbf{x}_0)\psi_j(\mathbf{x}_0)(\mathbf{x}_0-\bm{\mu}_j),
	\end{equation}
	where $w_3$ is a constant (irrelevant to the outlier $\mathbf{x}_0$) given by
	\begin{equation}
	\begin{aligned}
	w_3 = &\frac{2\mathbb{E}[\xi_j(\mathbf{x})\psi'_j(t)t]}{M}\\
	& + \mathbb{E}[\xi_j(\mathbf{x})\psi_j(t)]+\frac{2\mathbb{E}[(\xi_j(\mathbf{x})-\xi_j^2(\mathbf{x}))\psi_j^2(t)t]}{M}.
	\end{aligned}
	\end{equation}
\end{lemma}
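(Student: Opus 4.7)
The plan is to treat $\bm{\mu}_j$ as an implicit functional of the data distribution, defined through the estimating equation obtained by setting the gradient of the negative log-likelihood with respect to $\bm{\mu}_j$ to zero. Up to a harmless scalar, this yields the fixed-point condition $\mathbb{E}_F[\mathbf{h}_j(\mathbf{x};\bm{\mu}_j,\mathbf{\Sigma}_j)] = \mathbf{0}$ with $\mathbf{h}_j(\mathbf{x};\bm{\mu}_j,\mathbf{\Sigma}_j) := \xi_j(\mathbf{x})\psi_j(t_j)(\mathbf{x}-\bm{\mu}_j)$, which is exactly the population version of the $\bm{\mu}_k$ equation in \eqref{solution}. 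The standard influence-function recipe is then to contaminate $F$ to $F_\varepsilon = (1-\varepsilon)F + \varepsilon\delta_{\mathbf{x}_0}$, differentiate the estimating equation in $\varepsilon$ at $\varepsilon = 0$, and solve for $\mathcal{I}_{\bm{\mu}_j}(\mathbf{x}_0) = \tfrac{\partial \bm{\mu}_j(F_\varepsilon)}{\partial\varepsilon}|_{\varepsilon=0}$. This immediately gives
\begin{equation*}
\mathcal{I}_{\bm{\mu}_j}(\mathbf{x}_0) \;=\; -\Bigl(\mathbb{E}_F\bigl[\partial_{\bm{\mu}_j^T}\mathbf{h}_j(\mathbf{x};\bm{\mu}_j,\mathbf{\Sigma}_j)\bigr]\Bigr)^{-1}\xi_j(\mathbf{x}_0)\psi_j\bigl(t_j(\mathbf{x}_0)\bigr)(\mathbf{x}_0-\bm{\mu}_j),
\end{equation*}
so the target identity reduces to showing that the bracketed Jacobian equals $-w_3\mathbf{I}$.

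Next I would carefully differentiate $\mathbf{h}_j$ with respect to $\bm{\mu}_j$, collecting three contributions: differentiating the posterior weight $\xi_j(\mathbf{x}) = \pi_j \mathcal{E}_j/\sum_k \pi_k\mathcal{E}_k$ via $\partial_{\bm{\mu}_j}\ln\xi_j = (1-\xi_j)\partial_{\bm{\mu}_j}\ln\mathcal{E}_j = -2(1-\xi_j)\psi_j(t_j)\mathbf{\Sigma}_j^{-1}(\mathbf{x}-\bm{\mu}_j)$; differentiating $\psi_j(t_j)$ through $\partial_{\bm{\mu}_j}t_j = -2\mathbf{\Sigma}_j^{-1}(\mathbf{x}-\bm{\mu}_j)$, which produces the $\psi_j'(t_j)$ term; and the obvious $-\mathbf{I}$ factor from the explicit $(\mathbf{x}-\bm{\mu}_j)$. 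The result is a rank-one matrix in $(\mathbf{x}-\bm{\mu}_j)(\mathbf{x}-\bm{\mu}_j)^T\mathbf{\Sigma}_j^{-1}$ with coefficient $-2\xi_j(1-\xi_j)\psi_j^2(t_j)-2\xi_j\psi_j'(t_j)$, plus a $-\xi_j\psi_j(t_j)\mathbf{I}$ piece.

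The key analytical step is reducing the expectation of this Jacobian to the scalar $w_3$. Here I would invoke point 1 of the Remark after equation \eqref{RES}: writing $\mathbf{x}-\bm{\mu}_j =^d \sqrt{t_j}\,\mathbf{\Sigma}_j^{1/2}\mathcal{U}$ with $\mathcal{U}$ uniform on the unit sphere and independent of $t_j$, one gets $(\mathbf{x}-\bm{\mu}_j)(\mathbf{x}-\bm{\mu}_j)^T\mathbf{\Sigma}_j^{-1} =^d t_j\mathbf{\Sigma}_j^{1/2}\mathcal{U}\mathcal{U}^T\mathbf{\Sigma}_j^{-1/2}$, whose expectation over $\mathcal{U}$ alone is $(t_j/M)\mathbf{I}$. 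Under the well-separated assumption, the contribution of cluster $j$ to expectations over $F$ collapses onto $\mathcal{E}_j$, and crucially $\xi_j(\mathbf{x})$ can be treated as a function of $t_j$ (it depends on $\mathbf{x}$ only through its radial variable in the local frame of cluster $j$, since the other components are exponentially suppressed). This independence lets the $\mathcal{U}$-average factor out cleanly, turning the rank-one term into $-\tfrac{2}{M}\bigl(\mathbb{E}[(\xi_j-\xi_j^2)\psi_j^2(t)t]+\mathbb{E}[\xi_j\psi_j'(t)t]\bigr)\mathbf{I}$, and combining with the $-\mathbb{E}[\xi_j\psi_j(t)]\mathbf{I}$ piece gives exactly $-w_3\mathbf{I}$. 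Substituting into the IF expression yields the claim.

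The main obstacle is the well-separation reduction itself: justifying that we may exchange the $\mathcal{U}$-average with $\xi_j$ inside the expectation, and that the overall expectation over $F$ is captured by the $j$-th component density. I would handle this exactly as in the treatment preceding Lemma \ref{IFcovari}, where the same approximation is used, so the analytic burden here is limited to the Jacobian derivation and the spherical-average identity $\mathbb{E}[\mathcal{U}\mathcal{U}^T]=\mathbf{I}/M$; the rest is bookkeeping.
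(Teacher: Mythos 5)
Your proposal is correct and follows essentially the same route as the paper: the paper perturbs the population estimating equation $\mathbb{E}[\xi_j(\mathbf{x})\psi_j(t)\mathbf{\Sigma}_j^{-1}(\mathbf{x}-\bm{\mu}_j)]=\mathbf{0}$ with an $\epsilon$-contamination and differentiates at $\epsilon=0$, which is exactly the implicit-function/M-estimator sandwich computation you set up, producing the same three Jacobian terms ($(\xi_j-\xi_j^2)\psi_j^2 t$, $\xi_j\psi_j' t$, and $\xi_j\psi_j$) and the same use of the well-separation approximation, the independence of $t$ and the spherical factor, and $\mathbb{E}[\mathcal{U}\mathcal{U}^T]=\mathbf{I}/M$ to reduce the Jacobian to $-w_3\mathbf{I}$. (Your ``harmless scalar'' is in fact the fixed invertible matrix $\mathbf{\Sigma}_j^{-1}$, which is equally harmless.)
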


Proofs of the two lemmas are provided in the Appendices-\ref{prooflemma3} and \ref{prooflemma4}. The actual\footnote{The actual IF is obtained via numerical tests on the actual difference between the estimated parameter and the ground truth when increasing the absolute value of a single outlier, to establish whether an outlier could totally break down the estimation; this is cumbersome and requires extensive repeated estimations to obtain the curve.} and theoretical IF curves of the four EMMs are plotted in Fig. \ref{IFcurve}, showing that in practice the robustness of the EMMs can be well captured by our theoretical bounds. 

More importantly, the robustness of the EMM can also be analysed from Lemmas \ref{IFcovari} and \ref{IFmean},  and is determined by $\psi_k(\cdot)$ (or $g_k(\cdot)$) of each cluster. Specifically, when $\mathbf{x}_0 \rightarrow \infty$, $\mathcal{I}_{\mathbf{{\Sigma}}_j}(\mathbf{x}_0)$ is bounded (defined as \textit{covariance robust}) only when $\psi_j(t)t$ is bounded for $t\!\rightarrow\! \infty$, which leads to bounded $\psi_j(\overline{\mathbf{x}}_0^T\mathbf{\Sigma}_j^{-1}\overline{\mathbf{x}}_0)\overline{\mathbf{x}}_0^T\mathbf{\Sigma}_j^{-1}\overline{\mathbf{x}}_0$ in \eqref{IFcov}. Likewise, bounded $\mathcal{I}_{\bm{\mu}_j}(\mathbf{x}_0)$ (defined as \textit{mean robust}) requires bounded $\psi_j(t)\sqrt{t}$, which is slightly more relaxed than the requirement of \textit{covariance robust}. For example, in Fig. \ref{IFcurve}, by inspecting the boundedness of the curves, we find that the Gaussian and GG1.5 mixtures are neither \textit{covariance robust} and \textit{mean robust}, while the Cauchy mixtures are both \textit{covariance robust} and \textit{mean robust}. For the Laplace mixtures, they are not \textit{covariance robust} but are \textit{mean robust}, which shows that the \textit{covariance robust} is more stringent than the \textit{mean robust}. 
Thus, the developed bounds provide an extremely feasible and convenient treatment for qualifying or designing the robustness within EMMs.

\section{Experimental results}
Our experimental settings are first detailed in Section \ref{parameterssetiing}. We then employ in Section \ref{toyexamples} two toy examples to illustrate the flexibility of EMMs in capturing different types of data. This also highlights the virtues of our method in universally solving EMMs. In Section \ref{syntheticeva}, we systematically compare our EMM solver with other baselines on the synthetic dataset, followed by a further evaluation on the image data of BSDS500 in Section \ref{imagedata}.

\subsection{Parameter settings and environments}\label{parameterssetiing}
\textbf{Baselines:} We compared the proposed method (\textbf{Our}) with the regular manifold optimisation (\textbf{RMO}) method without reformulation (i.e., updating $\bm{\mu}_k$ and $\mathbf{\Sigma}_k$ separately) and the \textbf{IRA} method, by optimising different EMMs over various data structures. It should be pointed out that the IRA includes a range of existing works on solving certain EMMs, e.g., the standard EM algorithm for the Gaussian distribution, \cite{peel2000robust} for the $t$-distribution and \cite{browne2015mixture} for the hyperbolic distribution. Besides, the convergence criterion in all the experiments was set by the cost decrease of adjacent iterations of less than $10^{-10}$. For our method and the RMO, that involved manifold optimisation, we have utilised the default conjugate gradient solver in the Manopt toolbox \cite{manopt}. We should also point out that we evaluated all the methods on original EMM problems and due to the fact that priors are highly data-dependent, we leave the reasonable and comprehensive evaluations on regularised EMMs as part of our future work.

\textbf{Performance objectives:} We compared our method with the RMO and IRA methods by comprehensively employing 9 different elliptical distributions as components within EMMs. These are listed in Table \ref{distde}, with their properties provided in Table \ref{alldist}, where the Cauchy distribution is a special case of the student-$t$ distribution with $v=1$. We should also point out that the non-geodesic elliptical distributions cannot be solved by the IRA method \cite{zhang2013multivariate,sra2013geometric}. In contrast, as shown below, our method can provide a stable and fast solution even for the non-geodesic elliptical distributions.

\begin{table*}[!h]
	\renewcommand\arraystretch{2}
	\caption{Details of the 9 elliptical distributions used for assessments}\label{distde}
	\centering
	\scriptsize{{
			\begin{tabular}{c|cccc}\hline \hline
				\multicolumn{1}{c}{}   & Gaussian &  Student-$t$ ($v=1$ and $v=10$)  & GG1.5      & Logistic  \\\hline
				\multirow{1}{*}{$g(t)$ of \eqref{RES}}      &\multirow{1}{*}{$g(t) \propto \exp(-0.5t)$}  &   \multirow{1}{*}{$g(t) \propto (1+\nicefrac{t}{v})^{-\nicefrac{(M+v)}{2}}$}           & \multirow{1}{*}{$g(t) \propto \exp(-0.5t^{1.5})$}         & \multirow{1}{*}{$g(t) \propto \frac{\exp(-t)}{(1+\exp(-t))^2}$}    \\\hline
				\multicolumn{1}{c}{} &  Laplace    & Weib0.9 & Weib1.1    & Gamma1.1 \\\hline
				\multirow{1}{*}{$g(t)$ of \eqref{RES}} &  \multirow{1}{*}{$g(t) \propto \frac{\mathcal{K}_{(1-0.5M)}(\sqrt{2t})}{\sqrt{0.5t}^{0.5M-1}}$}  & \multirow{1}{*}{$g(t) \propto t^{-0.1}\exp(-0.5t^{0.9})$}       & \multirow{1}{*}{$g(t) \propto= t^{0.1}\exp(-0.5t^{1.1})$}    & \multirow{1}{*}{$g(t) \propto t^{0.1}\exp(-0.5t)$}     \\\hline\hline
				\multicolumn{1}{c|}{Note:} &\multicolumn{4}{l}{$\mathcal{K}_x(y)$ is the modified Bessel function of the second kind. Student-$t$ with $v=1$ is the Cauchy distribution.}\\\hline
	\end{tabular}}}
\vspace{-1em}
\end{table*}
\begin{table}[!htb]
	\centering
	\caption{Properties of the elliptical distributions used for evaluations}\label{alldist}
	\scriptsize{
		\begin{tabular}{ccccc}\hline\hline
			& Gaussian        & Student-$t$                & Laplace              & GG1.5                    \\\hline
			Covariance Robust                 & No              & Yes                   & No                   & No             \\
			Mean Robust                 & No              & Yes                   & Yes                  & No                     \\
			Heavily Tailed                 & No              & Yes                   & Yes                  & No                       \\
			Geodesic Convex                 & Yes             & Yes                   & Yes                  & Yes           \\\hline\hline 
			&  Logistic & Weib0.9         & Weib1.1         & Gamma1.1                            \\\hline
			Covariance Robust                   & No       & No              & No              & No                                  \\
			Mean Robust                    & No       & No              & No              & No                             \\
			Heavily Tailed                    & No       & Yes             & No              & No                                 \\
			Geodesic Convex                       & Yes      & Yes             & No              & No                   \\\hline\hline        
	\end{tabular}}
\vspace{-1em}
\end{table}
\textbf{Synthetic datasets:}
We generated the synthetic dataset via randomly choosing the mean and the covariance, except for the \textit{separation} $c$ and \textit{eccentricity} $e$ \cite{lindsay1995mixture,dasgupta1999learning}, which were controlled to comprehensively evaluate the proposed method under various types of data structures. 
The separation, $c$, of two clusters $k_1$ and $k_2$ is defined as  $||\bm{\mu}_{k_1}-\bm{\mu}_{k_2}||^2\geqslant c\cdot \max\{\mathrm{tr}(\mathbf{{{\Sigma}}}_{k_1}),\mathrm{tr}(\mathbf{{{\Sigma}}}_{k_2})\}$, and the eccentricity, $e$, is defined as a ratio of the largest and the smallest eigenvalue of the covariance matrix within one cluster. 
The smaller value of $c$ indicates the larger overlaps between clusters; the smaller value of $e$ means more spherically distributed clusters. In total, we generated $3\times 2=6$ types of synthetic datasets, whereby $M$ and $K$ were set in pairs to $\{8,8\}$, $\{16,16\}$, and $\{64,64\}$; $c$ and $e$ were set in pairs to  $\{10,10\}$ and $\{0.1,1\}$ to represent the two extreme cases. Each synthetic dataset contained $10,000$ samples in total ($N = 10,000$) drawn from different mixtures of Gaussian distributions. For each test case (i.e., for each method and for each EMM), we repeatedly ran the optimisation over $50$ trials, with random initialisations. Finally, we recorded average values of the iterations, the computational time and the final cost. When the optimisation failed, i.e., converging to singular covariance matrices or infinite negative likelihood, we also recorded and calculated the optimisation fail ratio within the $50$ initialisations for each test case, to evaluate the stability in optimisation.

\textbf{BSDS500 dataset:} 
Finally, we evaluated our method on the image data, over two typical tasks. The first was related to image segmentation, where all the 500 pictures in the Berkeley segmentation dataset BSDS500 \cite{amfm_pami2011} were tested and reported in our results. We set $K=2$ in this task in order to clearly show the effects of different EMMs in segmentation (as shown in Fig. \ref{esifigus}). Evaluation over multiple parameters $K$ was included in the second task. Moreover, each optimisation was initialised by the \textit{k-means++} using the vl-feat toolbox \cite{vedaldi08vlfeat}, which is a typical initialisation method in clustering tasks such as the \textit{k-means} clustering. The cost, iterations and computational time were recorded for all the 500 pictures. In the second task, our evaluation was implemented on another challenging task, by modelling and clustering $3\times3$ and $5\times5$ image patches from the image dataset. It needs to be pointed out that this task is a core part of many applications, such as image denoising, image super-resolution and image/video compression, where similarities of image patches play an important role. Specifically, we randomly extracted $100$ patches ($3\times3$ and $5\times5$) from each image in the BSDS500, and vectorised those patches as data samples. Thus, we finally obtained the test data with sizes $50,000\times27$ and $50,000\times75$, where $K$ was set to $3$ and $9$. Also, we ran each optimisation with $50$ times random initialisations, and recorded the average final cost and the standard deviation.

\subsection{Toy examples}\label{toyexamples}
Before comprehensively evaluating our method, we first provide some intuition behind its performance based on flower-shaped data with 4 clusters ($N = 10,000$) (shown in Fig. \ref{toypre}-(a)). The flexibility of the EMMs via our method is illustrated by: (i) adding 100$\%$ uniform noise (i.e., $10,000$ noisy samples), as shown in Fig. \ref{toypre}-(b); (ii) replacing two clusters by the Cauchy samples with the same mean and covariance matrices, as shown in Fig. \ref{toypre}-(c). The five distributions that were chosen as components in EMMs are shown in Fig. \ref{toypre}-(d).

The optimised EMMs are shown in Fig. \ref{toyfinalresult}. From this figure, we find that the GMM is inferior in modelling noisy or unbalanced data. This is mainly due to its lack of robustness, and thus similar results can be found in another non-robust EMM, i.e., the GG1.5. In contrast, for a robust EMM, such as the Cauchy and the Laplace, the desirable level of estimation is ensured in both cases. Therefore, a universal solver is crucial as it enables flexible EMMs can be well optimised for different types of data.

\begin{figure}[!h]
	\begin{center}
		\subfigure[Ground truth]{\includegraphics[width=0.21\textwidth]{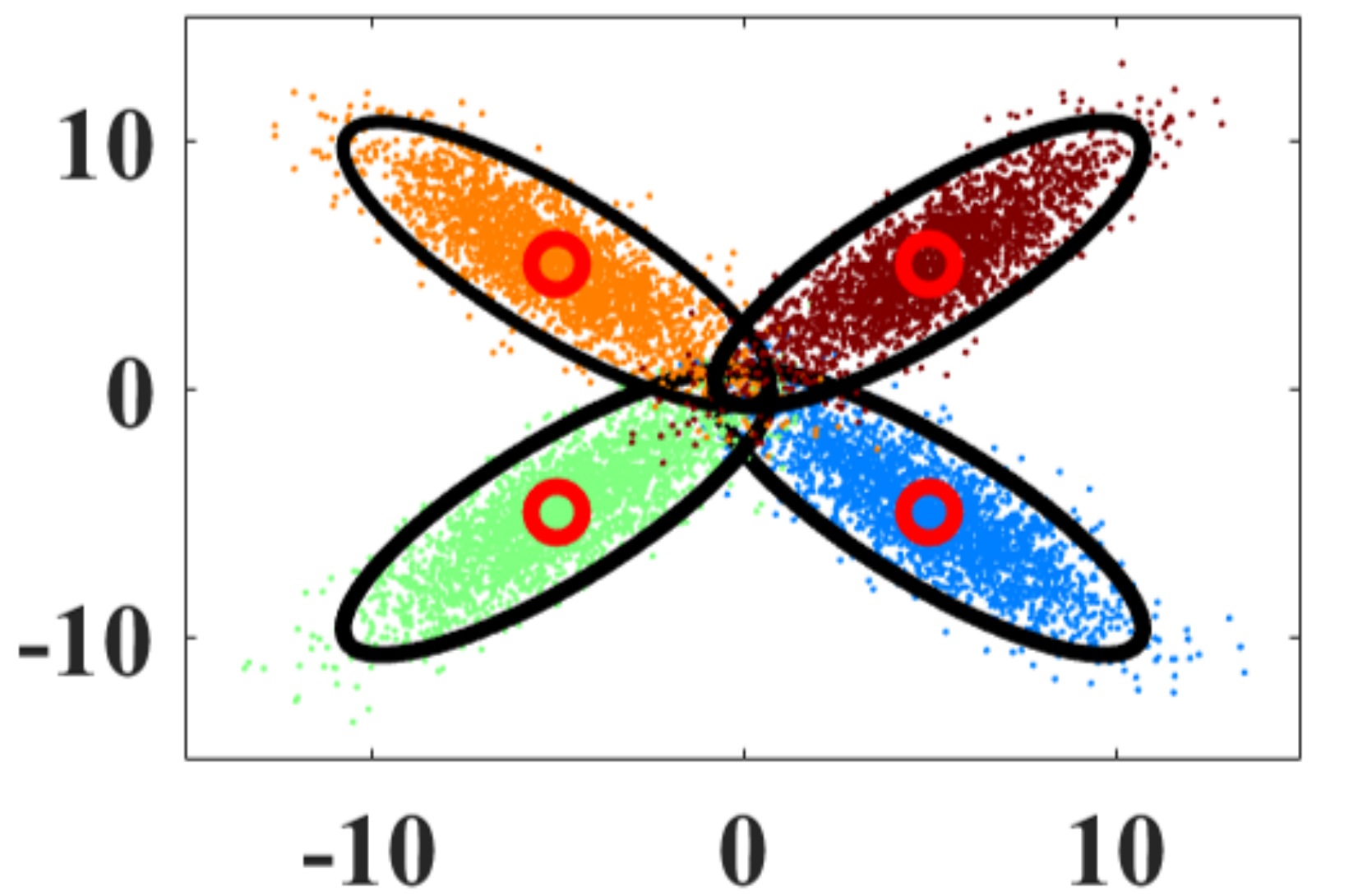}}
		\subfigure[Case 1: $100\%$ Noise]{\includegraphics[width=0.21\textwidth]{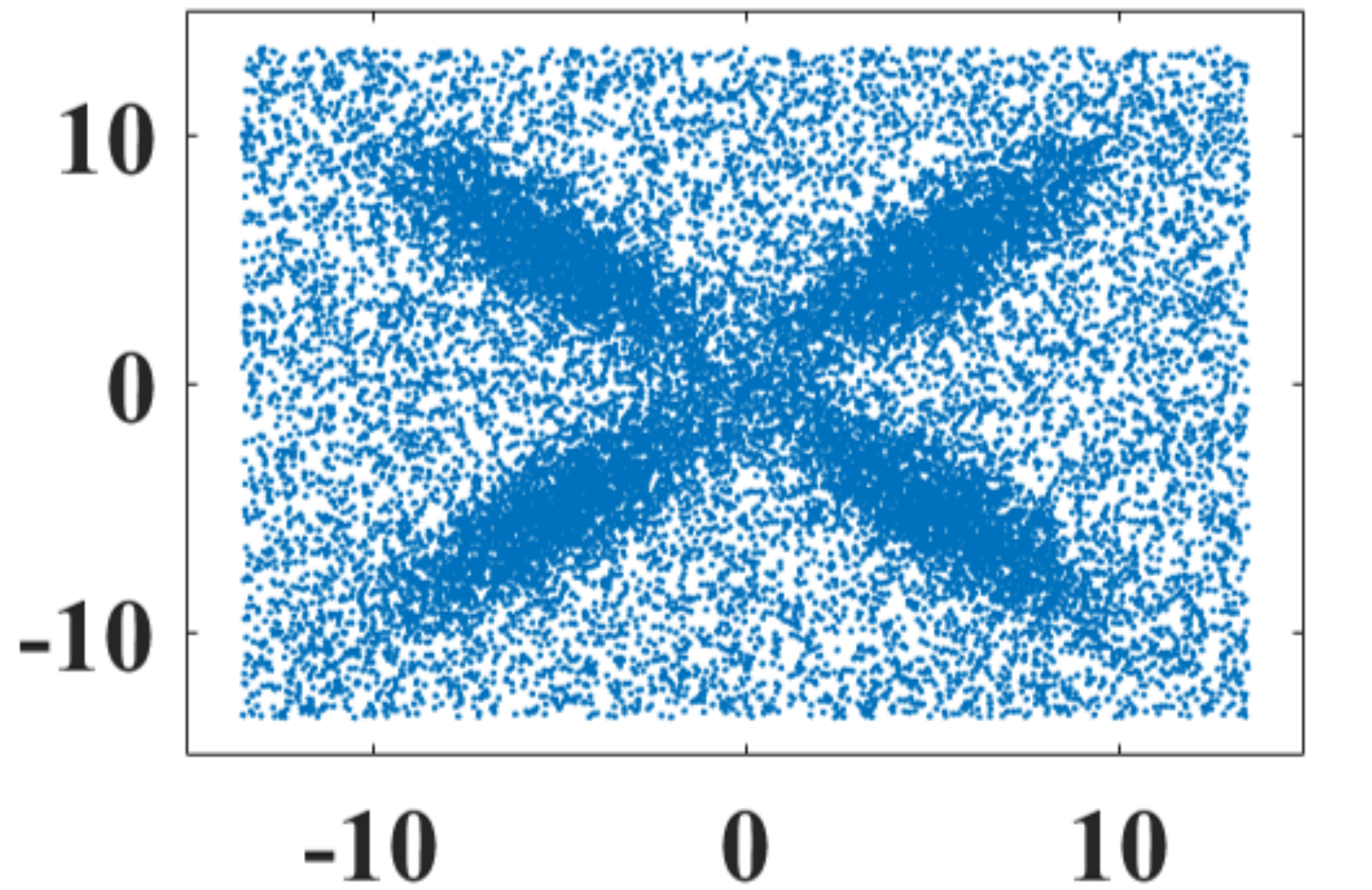}}
		\subfigure[Case 2: Cauchy clusters]{\includegraphics[width=0.21\textwidth]{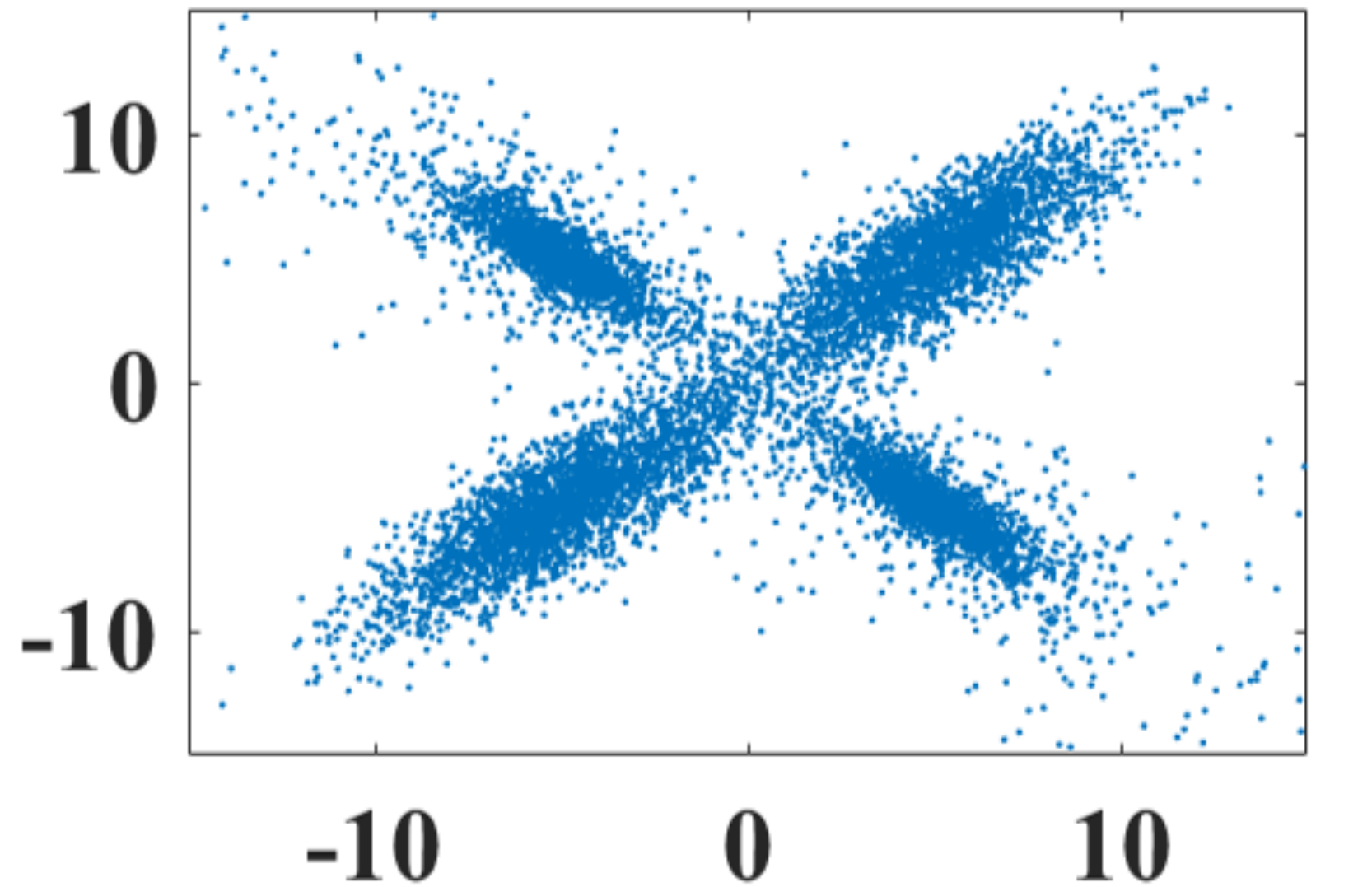}}
		\subfigure[Tails of distributions]{\includegraphics[width=0.21\textwidth]{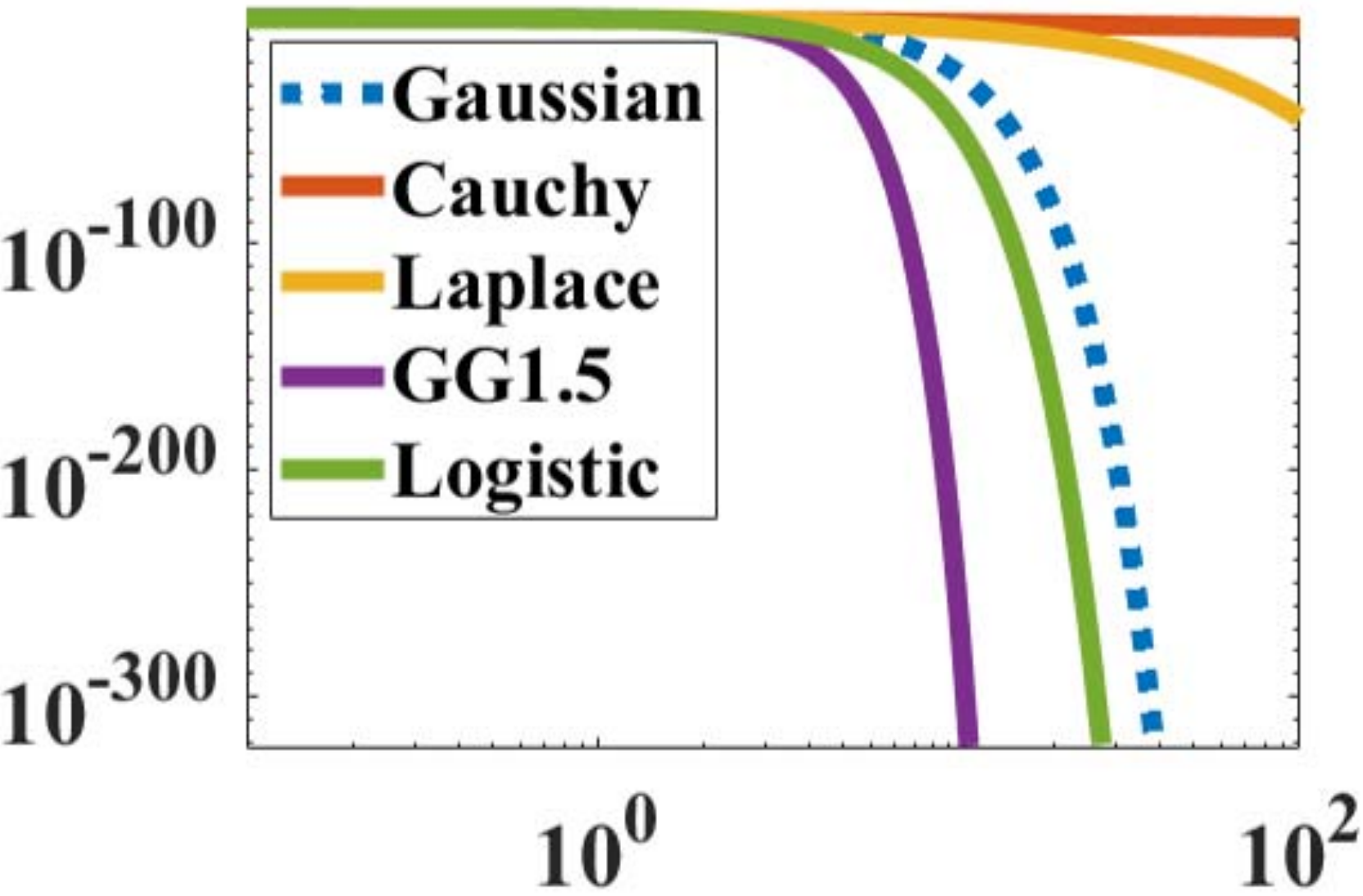}}
	\end{center}
\vspace{-1em}
	\caption{Toy examples consisting of four Gaussian sets. (a) Data structure: The red circles represent the mean values at $(5,5)$, $(5,-5)$, $(-5,5)$ and $(-5,-5)$, and the black circles denote covariance ellipses including 95$\%$ data samples of each Gaussian distribution. (b) Test case adding 100$\%$ uniform noisy samples to the data. (c) Test case of data that consist of two Gaussian and two Cauchy sets. (d) Tails of the distributions which were utilised in this test. It needs to be pointed out that the Cauchy samples in (c) are spread over a wide range, so that we show (c) within $(\pm15,\pm15)$ for illustration convenience.}\label{toypre}
\end{figure}

\begin{figure*}[!h]
	\begin{center}
		\subfigure{\includegraphics[width=0.17\textwidth]{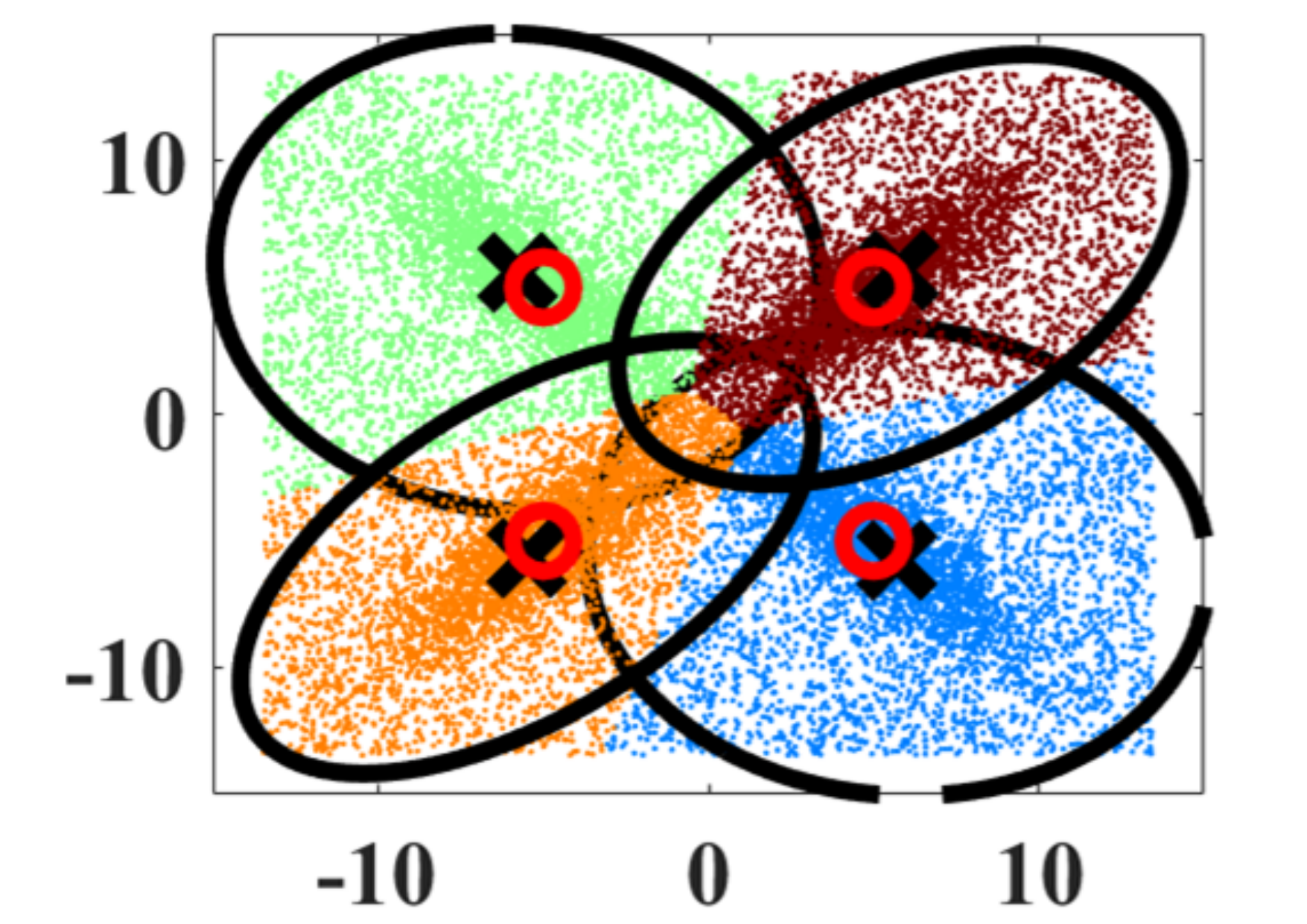}}
		\subfigure{\includegraphics[width=0.17\textwidth]{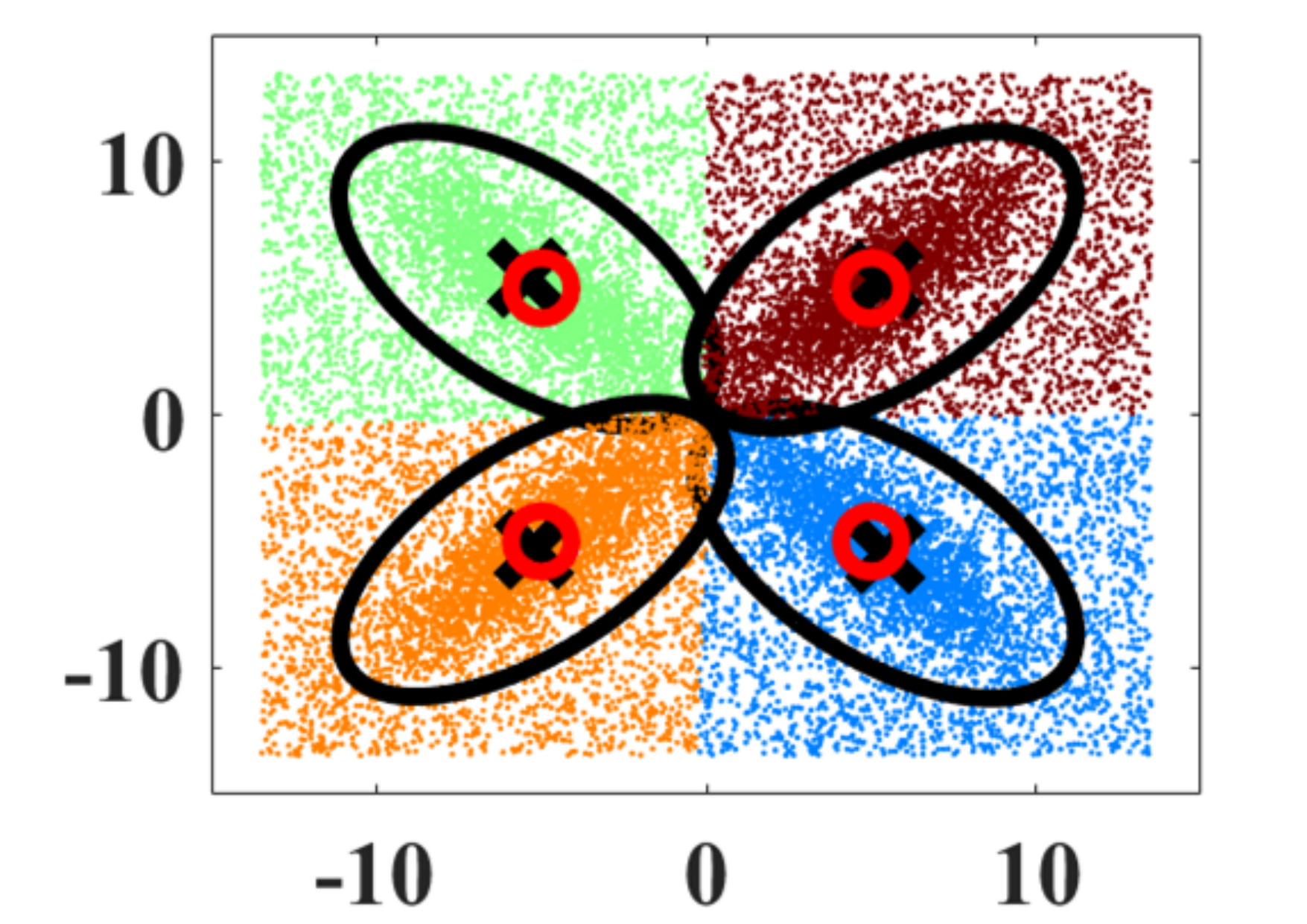}}
		\subfigure{\includegraphics[width=0.17\textwidth]{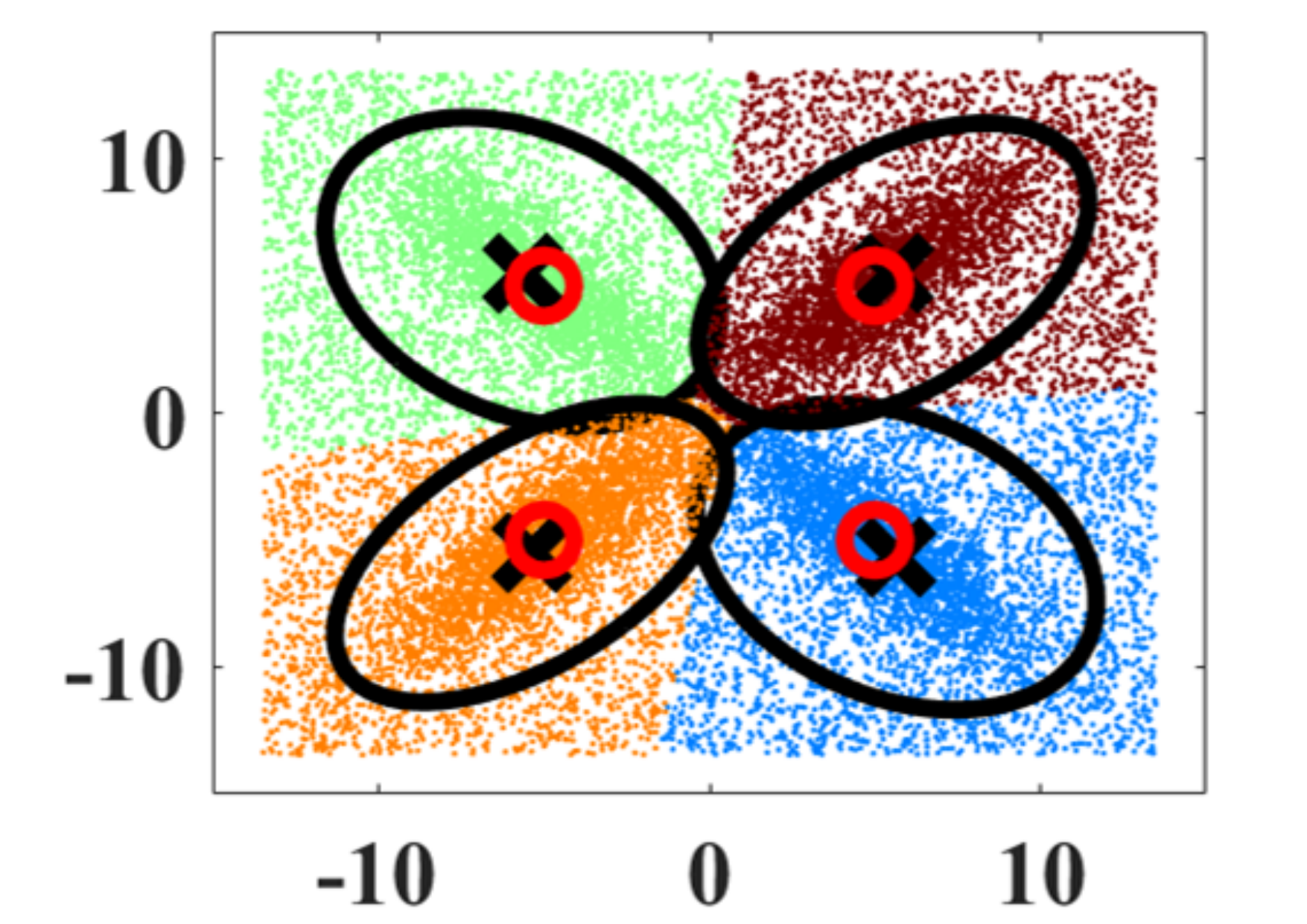}}
		\subfigure{\includegraphics[width=0.17\textwidth]{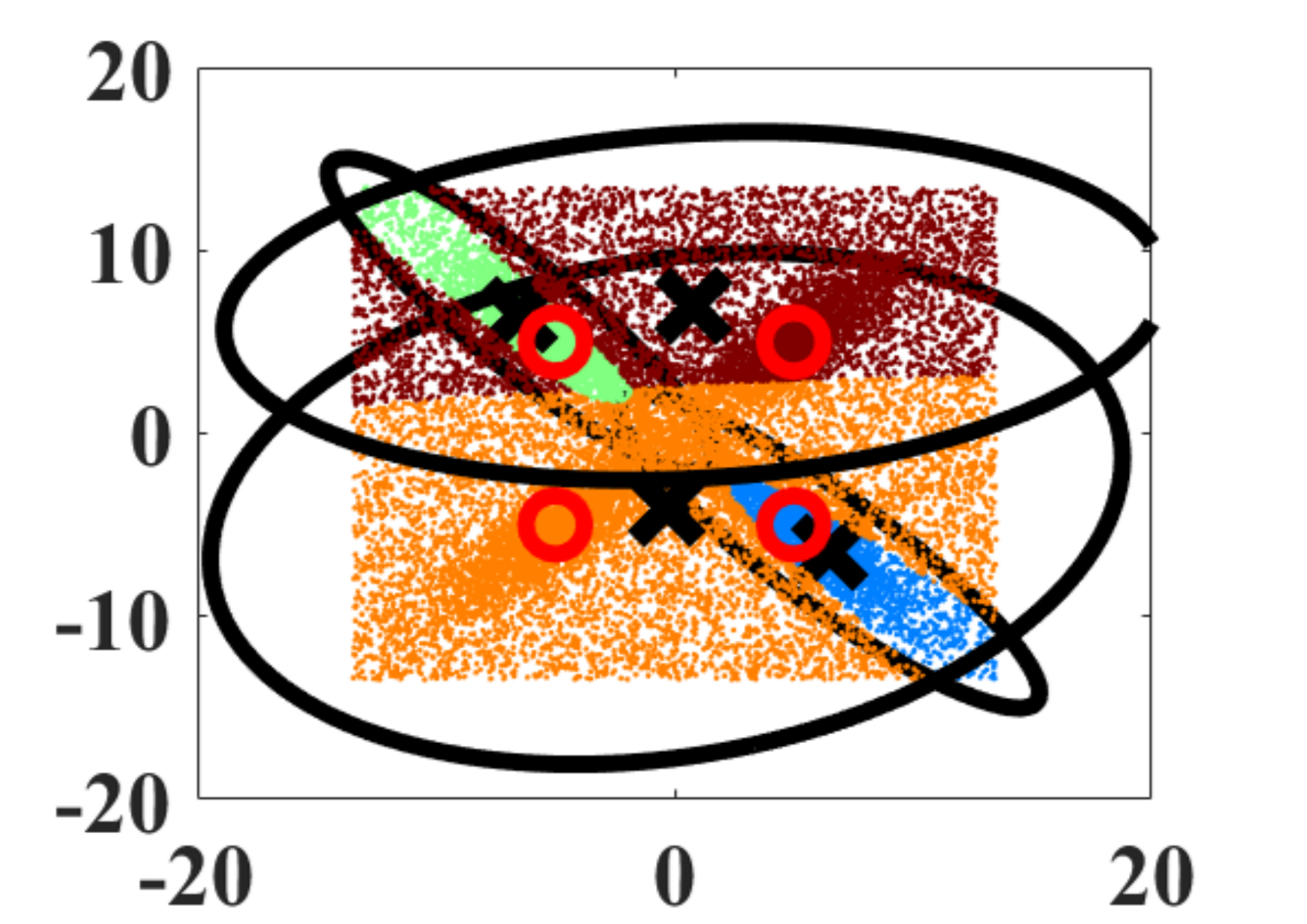}}
		\subfigure{\includegraphics[width=0.17\textwidth]{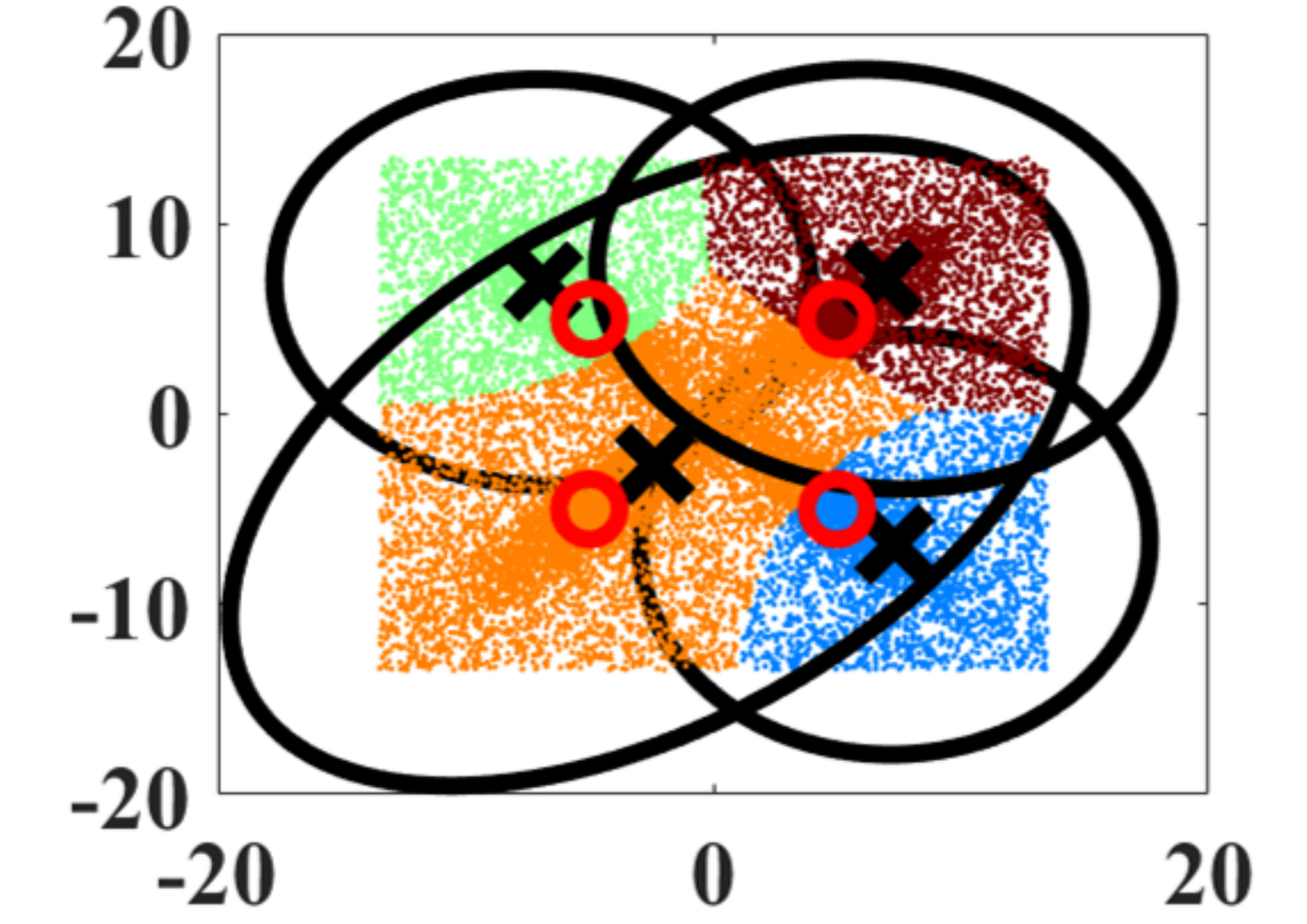}}
		\setcounter{subfigure}{0}
		\subfigure[Gaussian]{\includegraphics[width=0.17\textwidth]{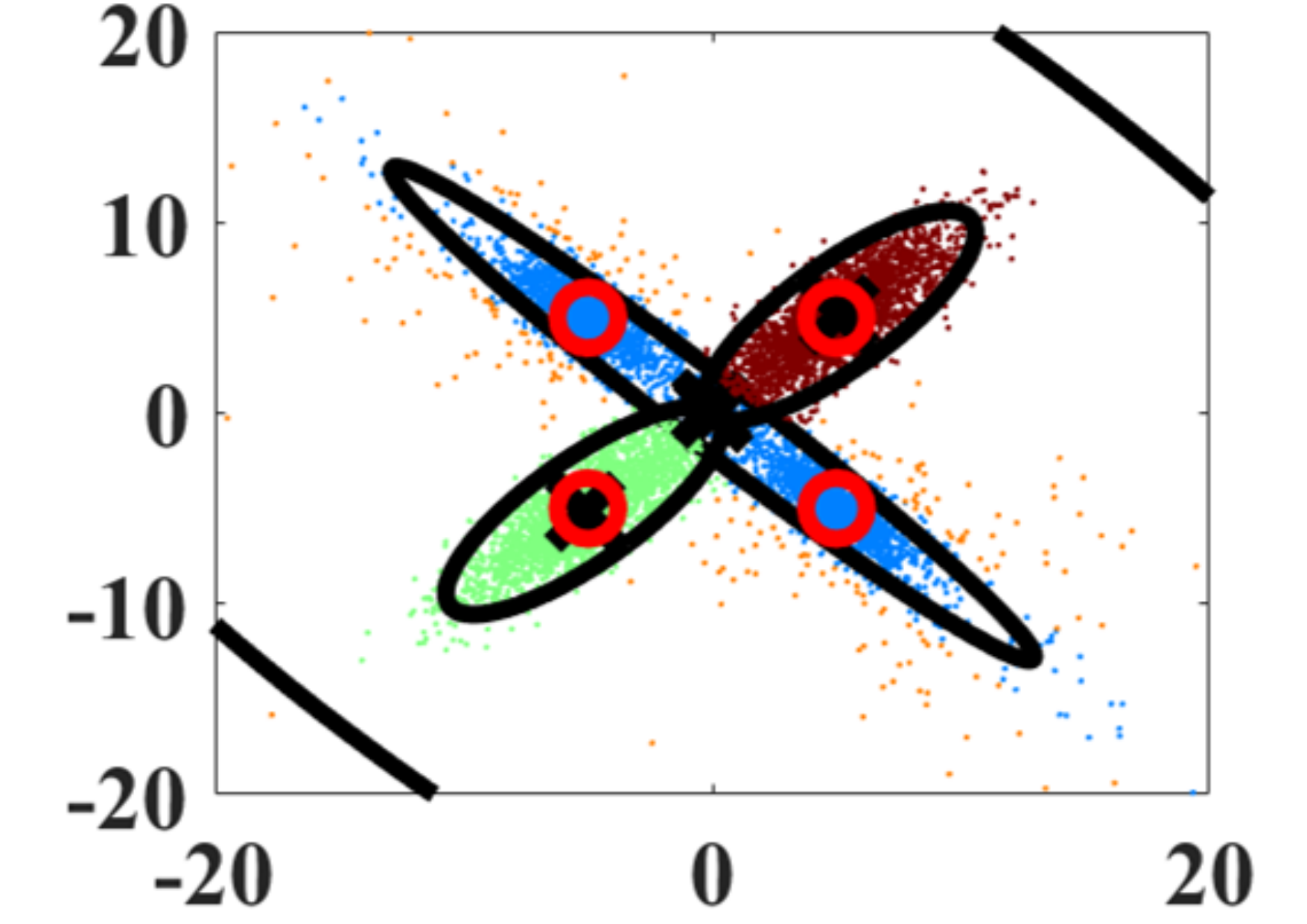}}
		\subfigure[Cauchy]{\includegraphics[width=0.17\textwidth]{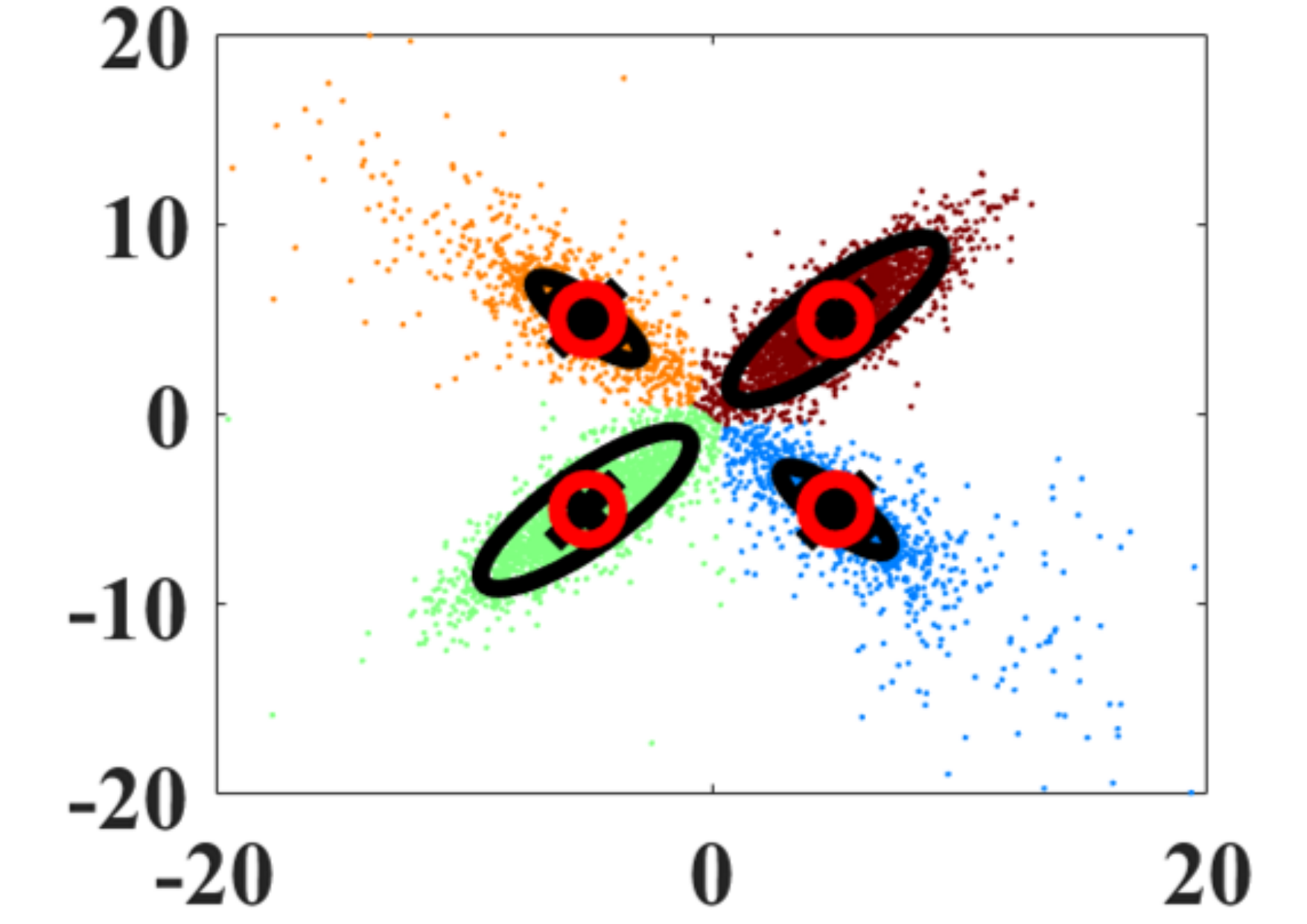}}
		\subfigure[Laplace]{\includegraphics[width=0.17\textwidth]{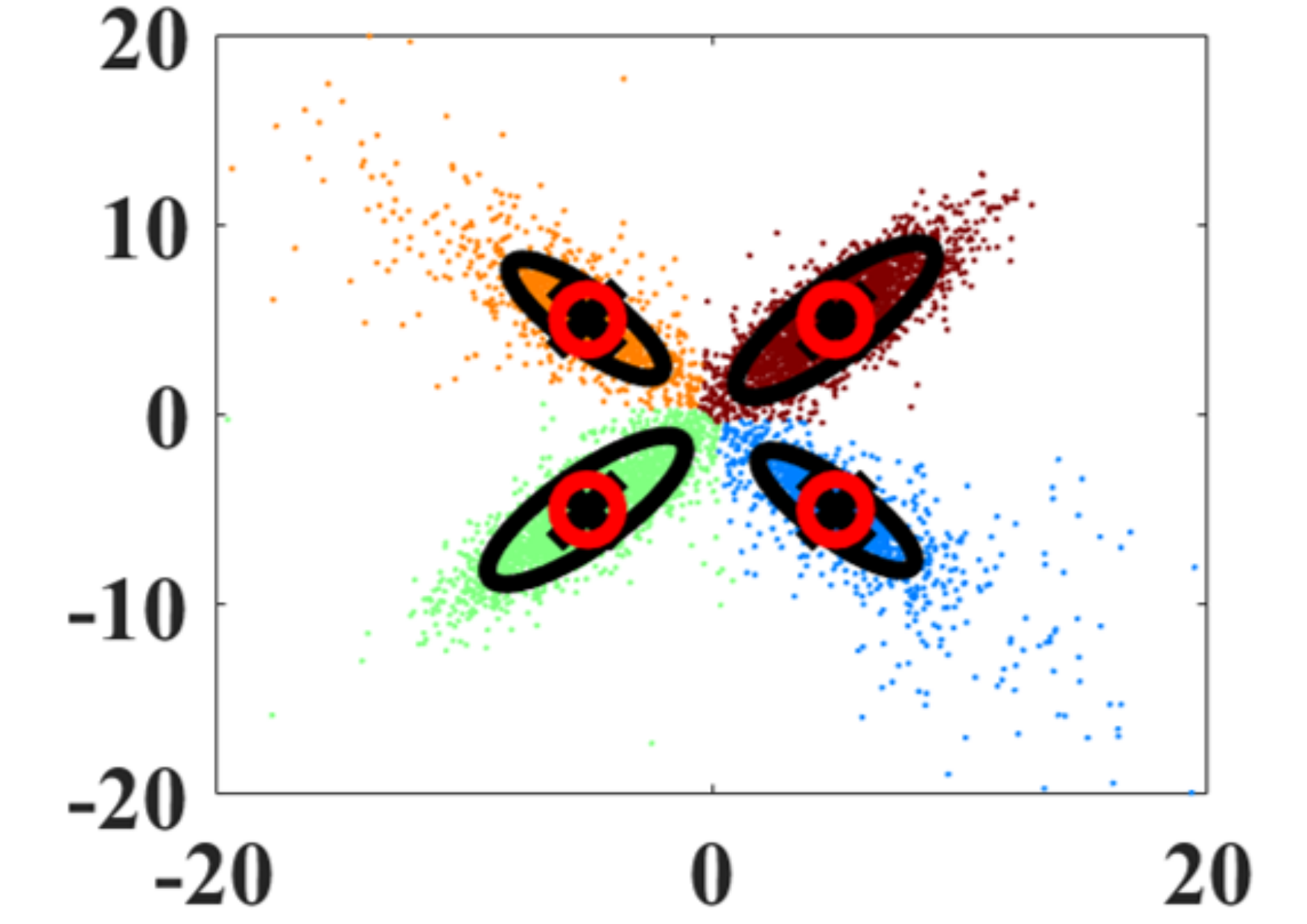}}
		\subfigure[Logistic]{\includegraphics[width=0.17\textwidth]{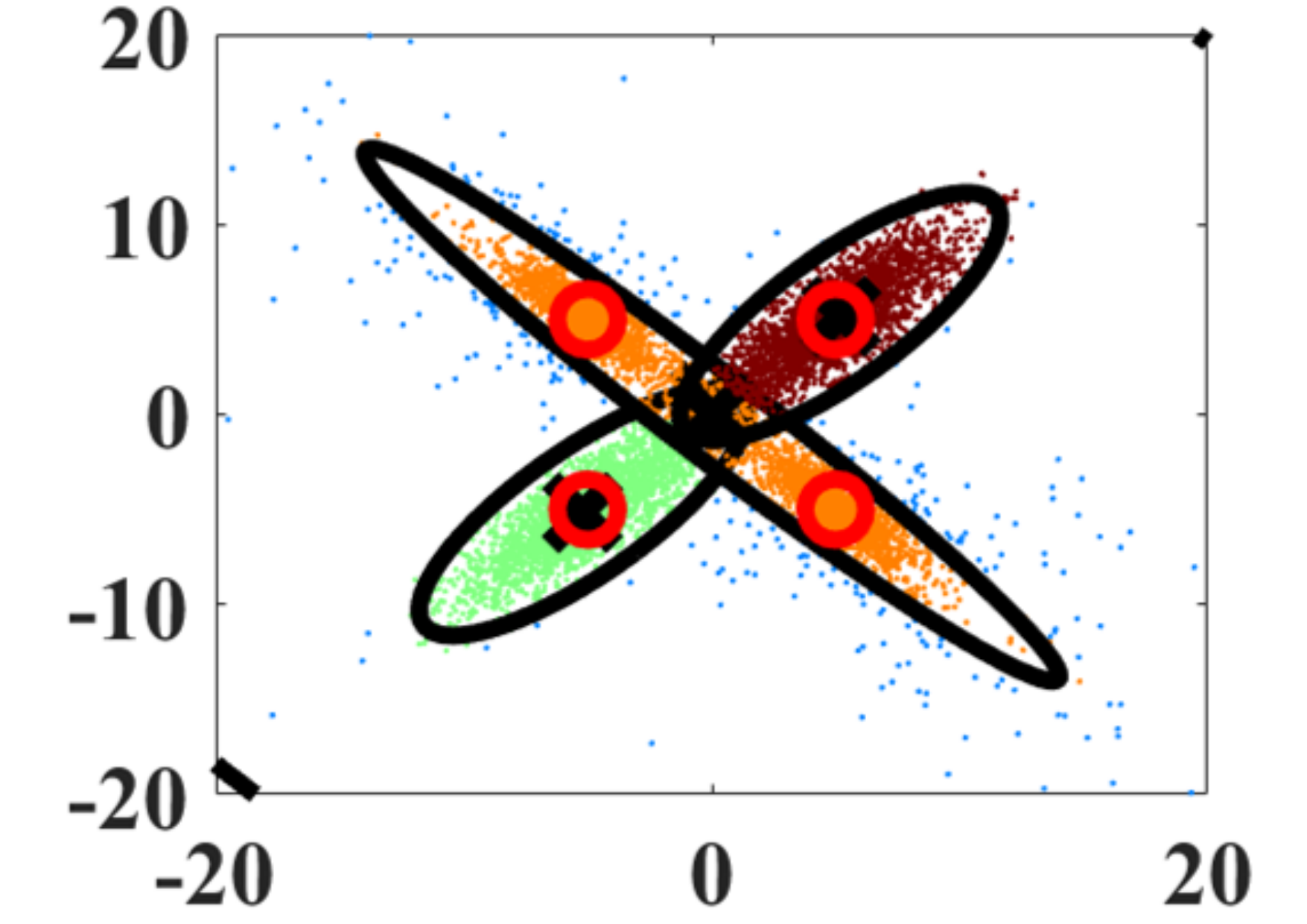}}
		\subfigure[GG1.5]{\includegraphics[width=0.17\textwidth]{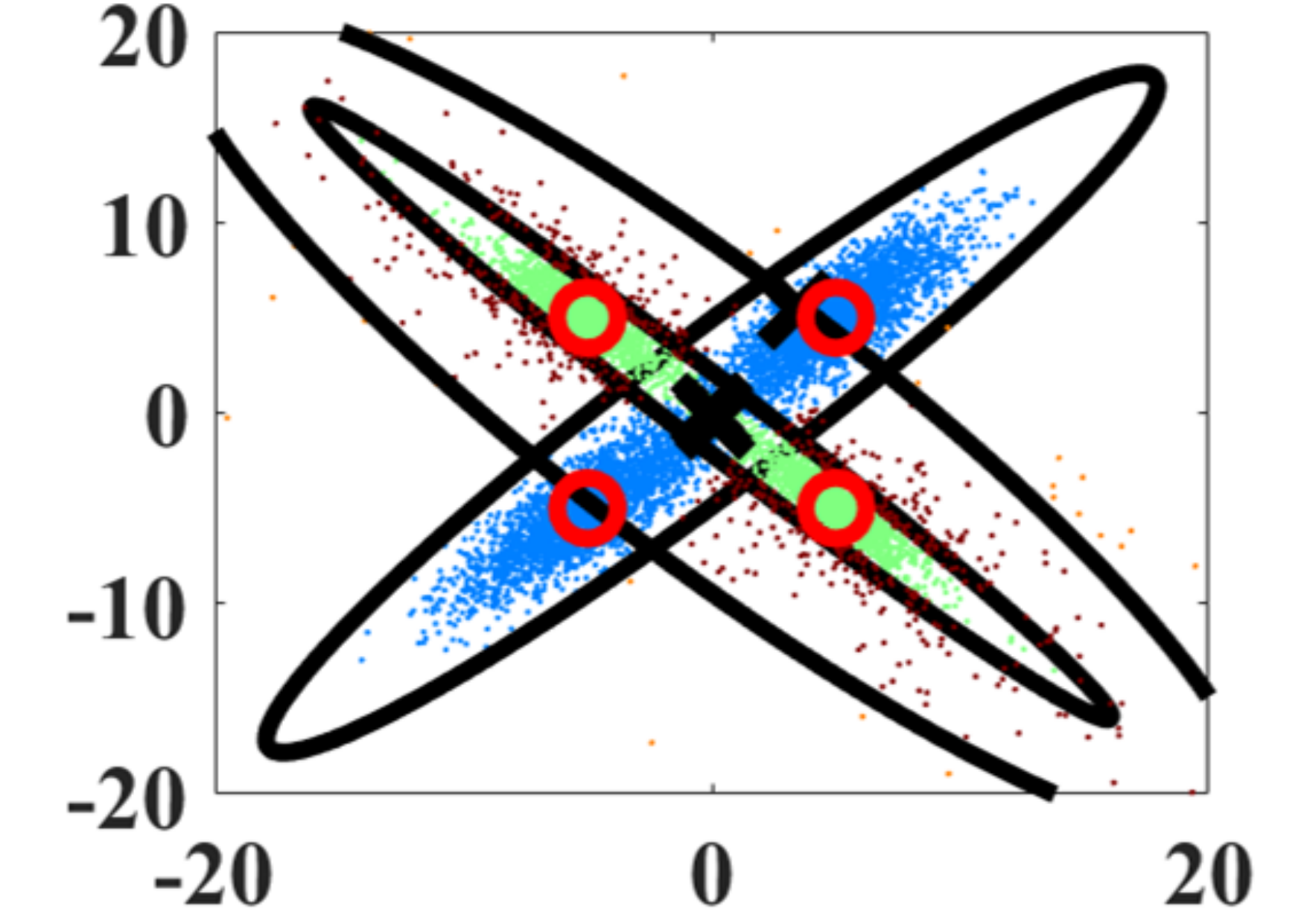}}
	\end{center}
	\vspace{-1em}
	\caption{Optimisation results of the proposed method across the 5 EMMs. The top row shows the results for Case 1 and the bottom row shows the results for Case 2. The red circles denote the ground-truth as shown in Fig. \ref{toypre}-(a), whilst the black crosses and ellipses represent the estimated mean and covariance matrices. The colour of each sample is corresponding to that of Fig. \ref{toypre}-(a), and is classified by selecting the maximum posterior among the clusters.}\label{toyfinalresult}
	\vspace{-1em}
\end{figure*}
\begin{figure*}[!h]
	\begin{center}
		\subfigure[Gaussian]{\includegraphics[width=0.18\textwidth]{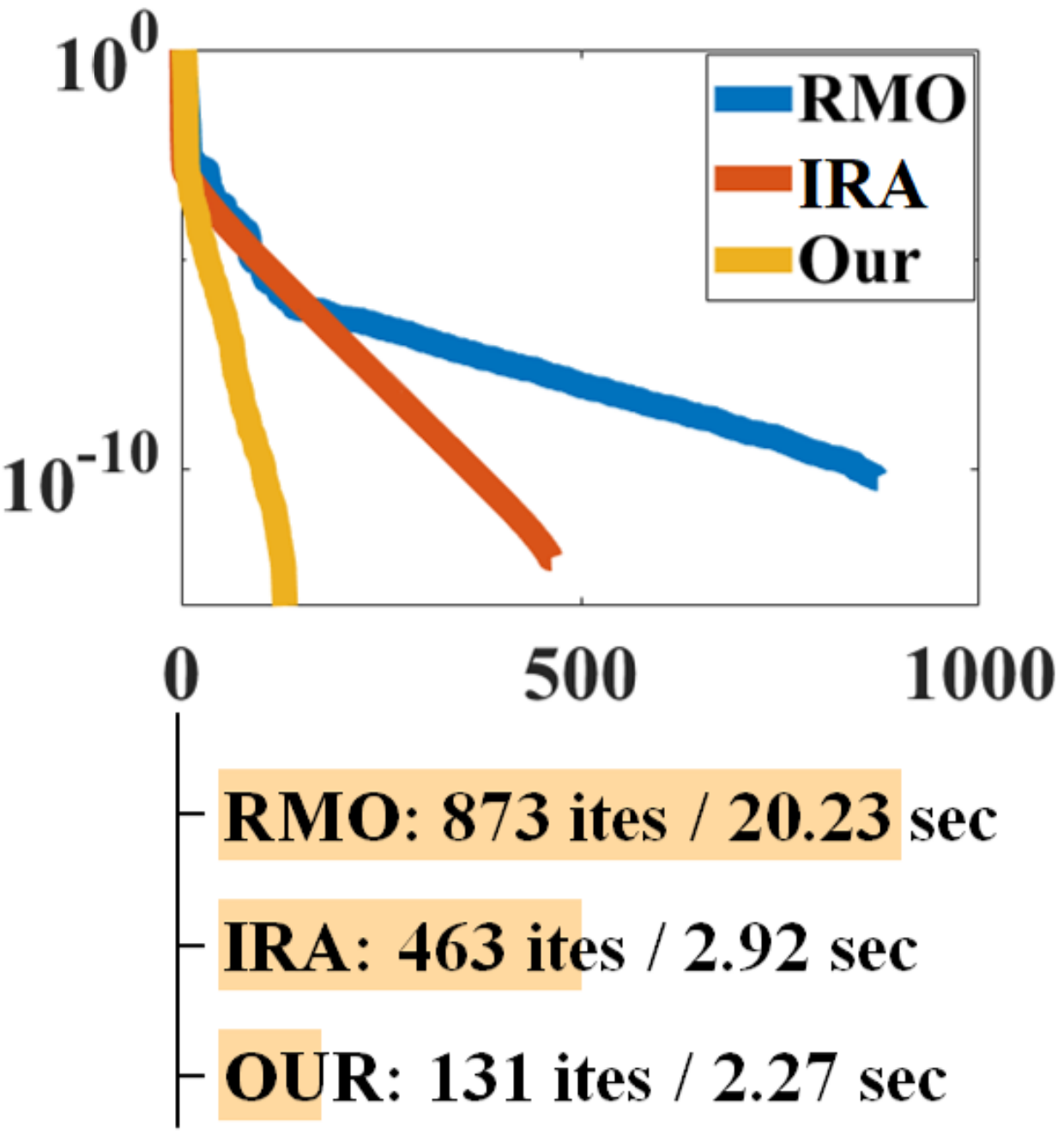}}
		\subfigure[Cauchy]{\includegraphics[width=0.18\textwidth]{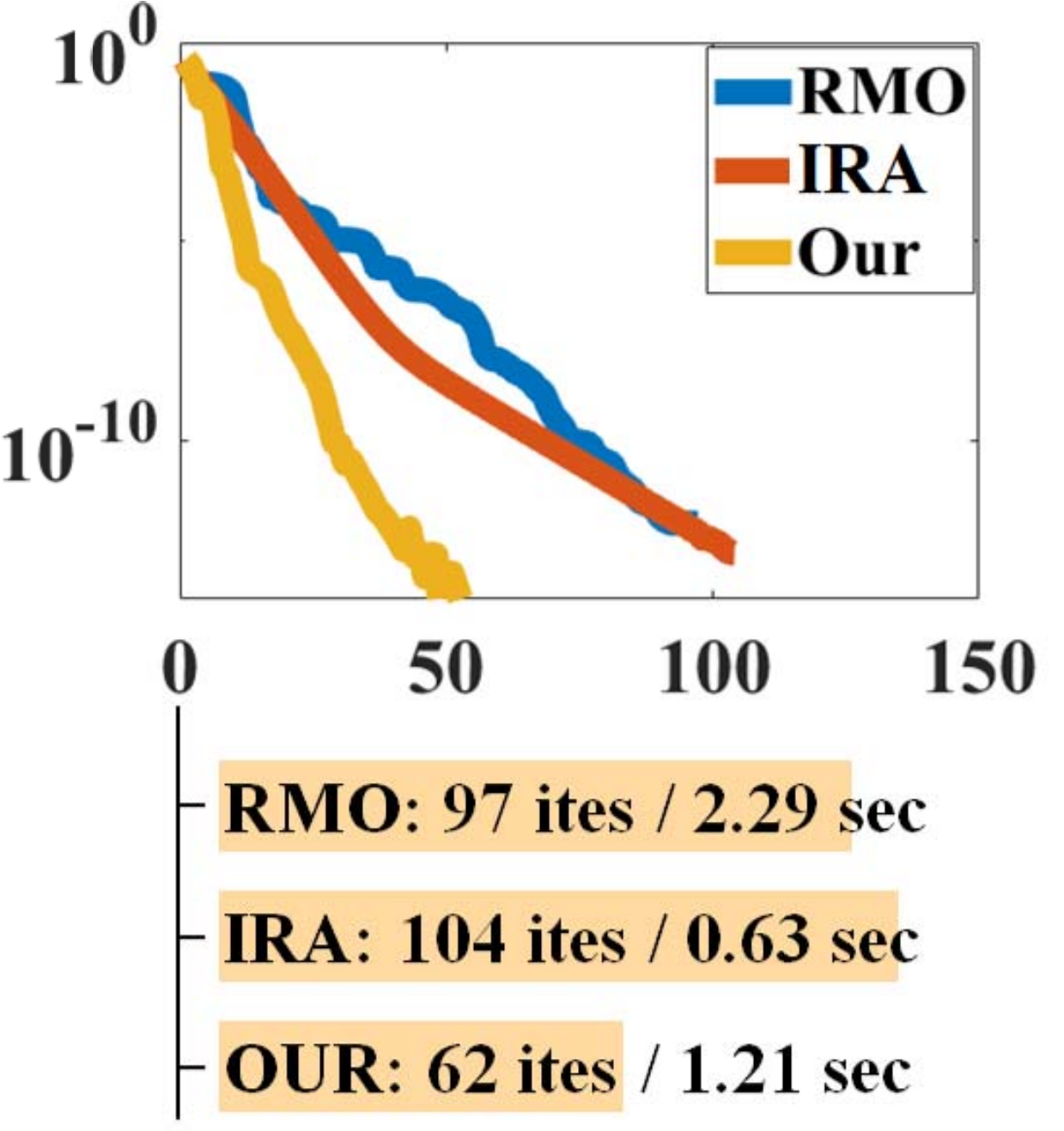}}
		\subfigure[Laplace]{\includegraphics[width=0.18\textwidth]{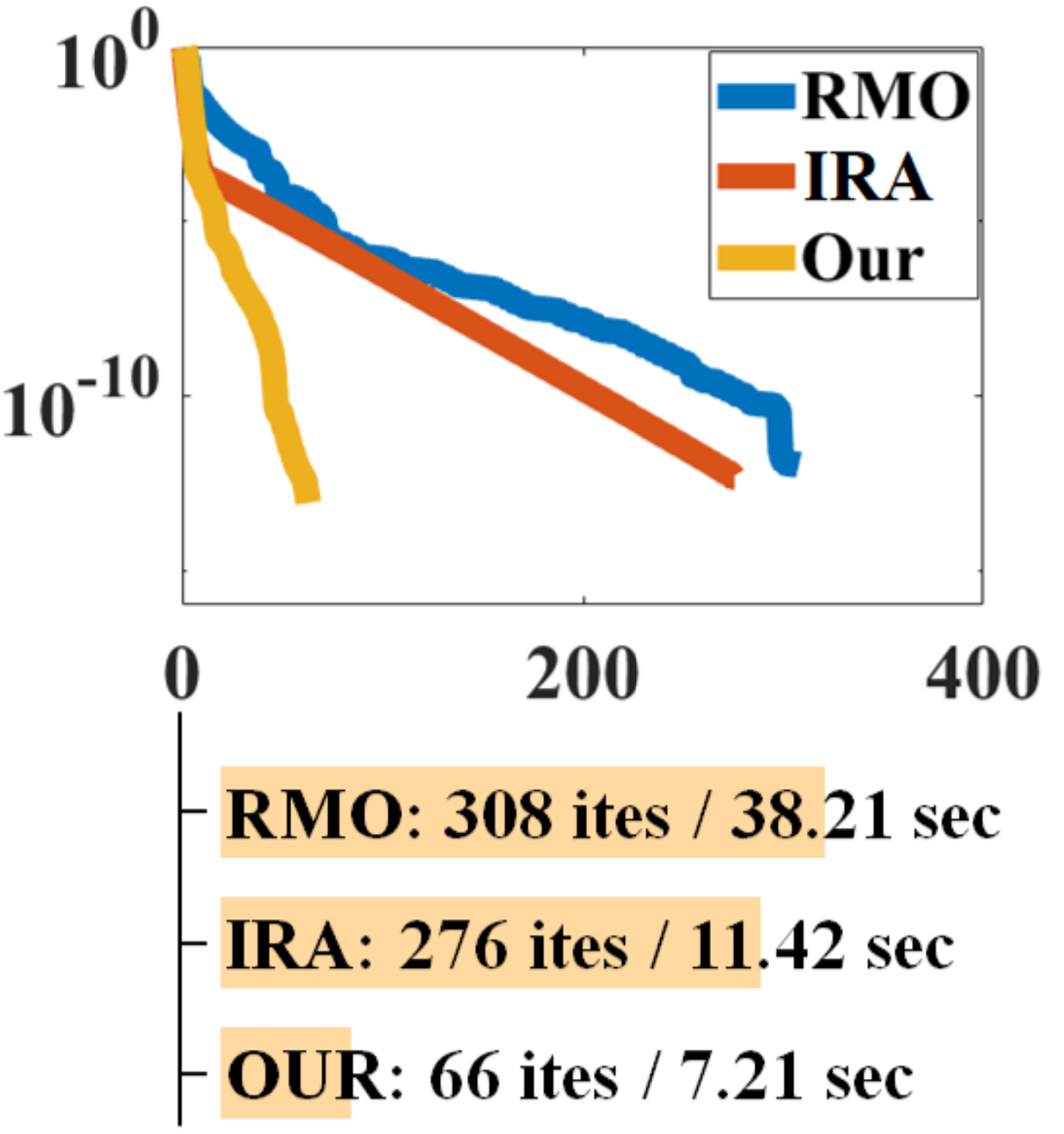}}
		\subfigure[Logistic]{\includegraphics[width=0.18\textwidth]{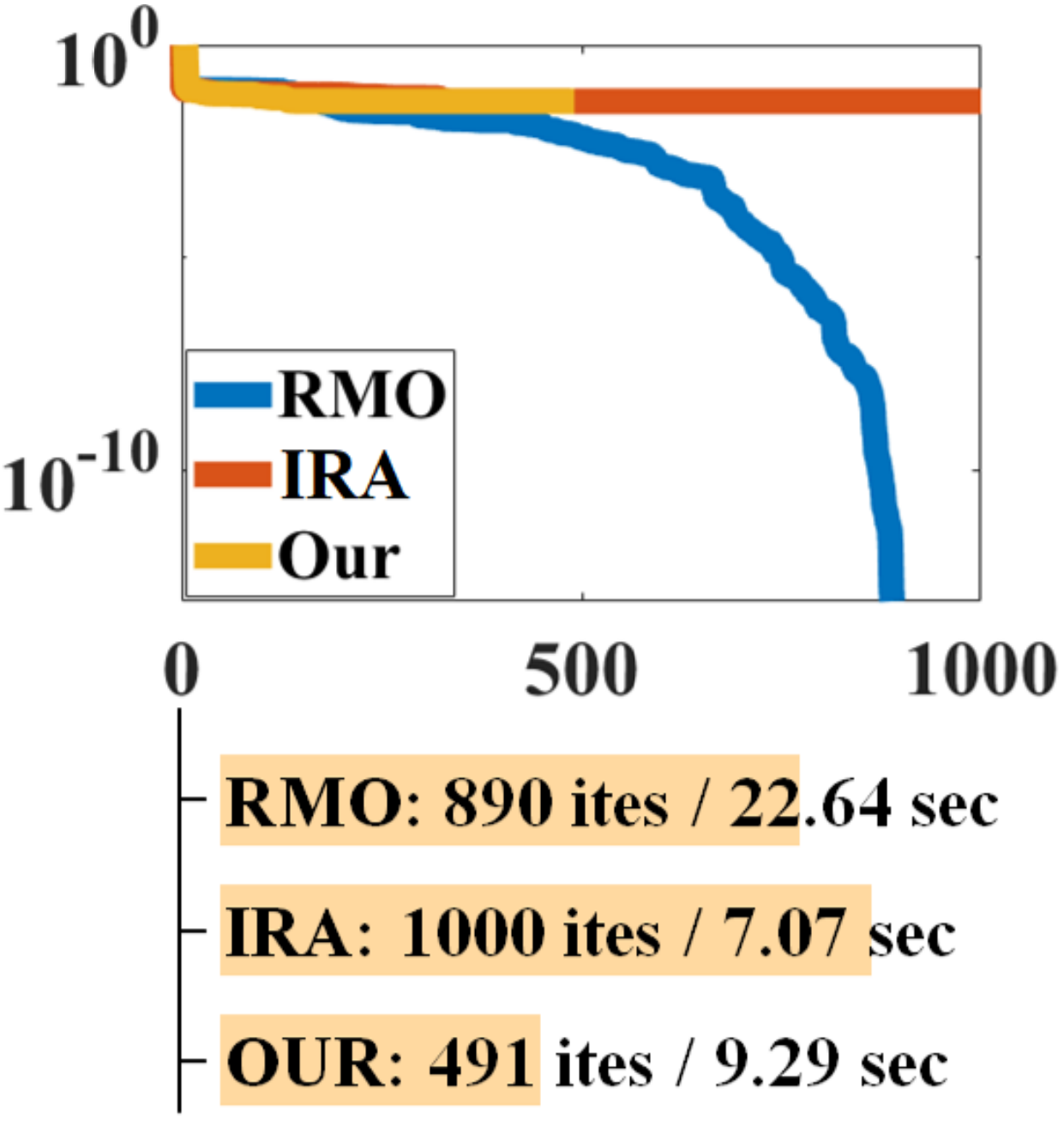}}
		\subfigure[GG1.5]{\includegraphics[width=0.18\textwidth]{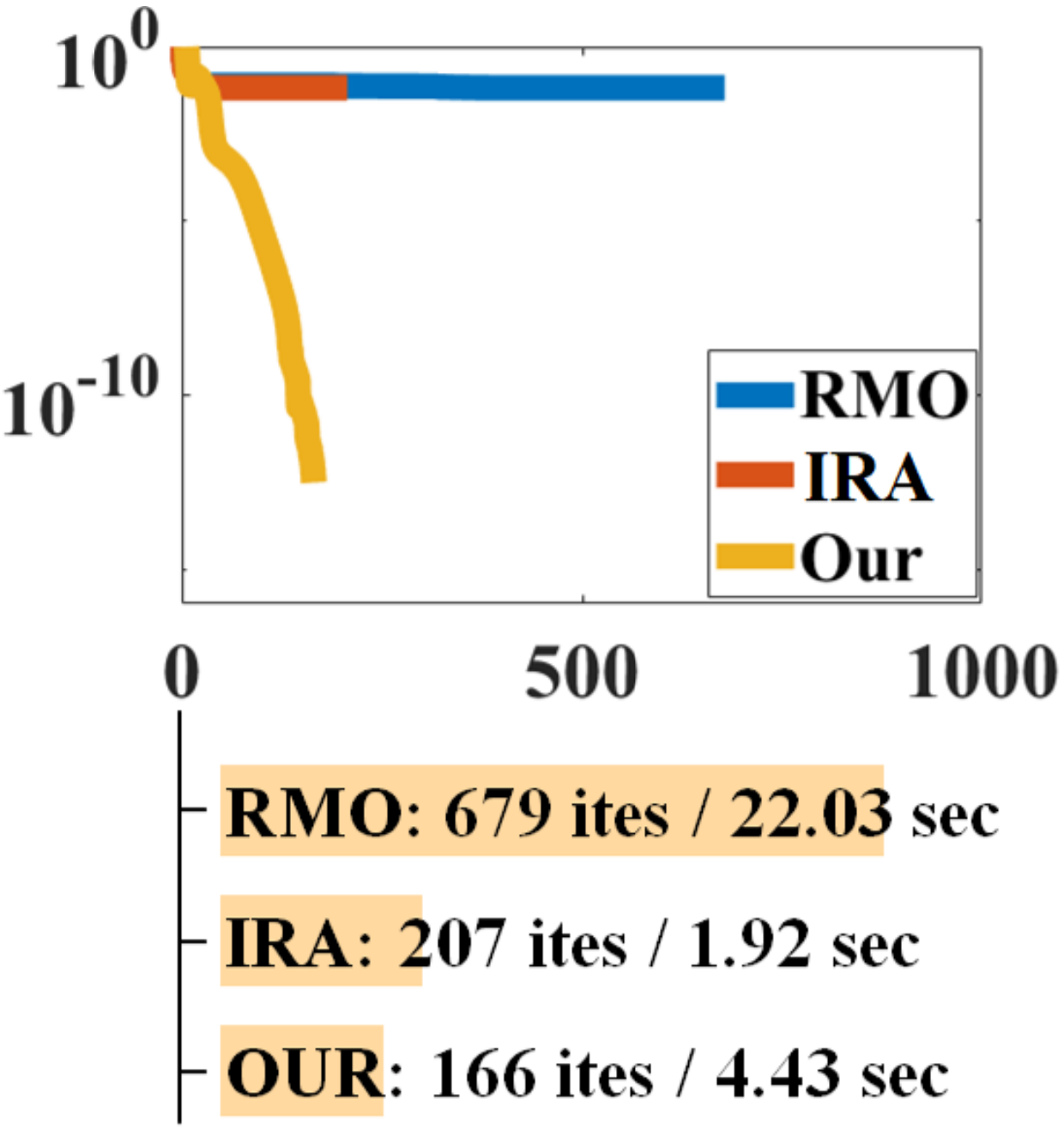}}
	\end{center}
	\vspace{-1em}
	\caption{Cost difference against the number of iterations of the 5 EMMs for Case 1. The top figures show the cost difference against the iterations optimised via Our, IRA and RMO methods. The bottom quantities are the final convergence speed in terms of the number of iterations and execution time (ites/sec).} \label{noisyite}
	\vspace{-1em}
\end{figure*}

\begin{figure*}[!h]
	\begin{center}
		\subfigure[Gaussian]{\includegraphics[width=0.18\textwidth]{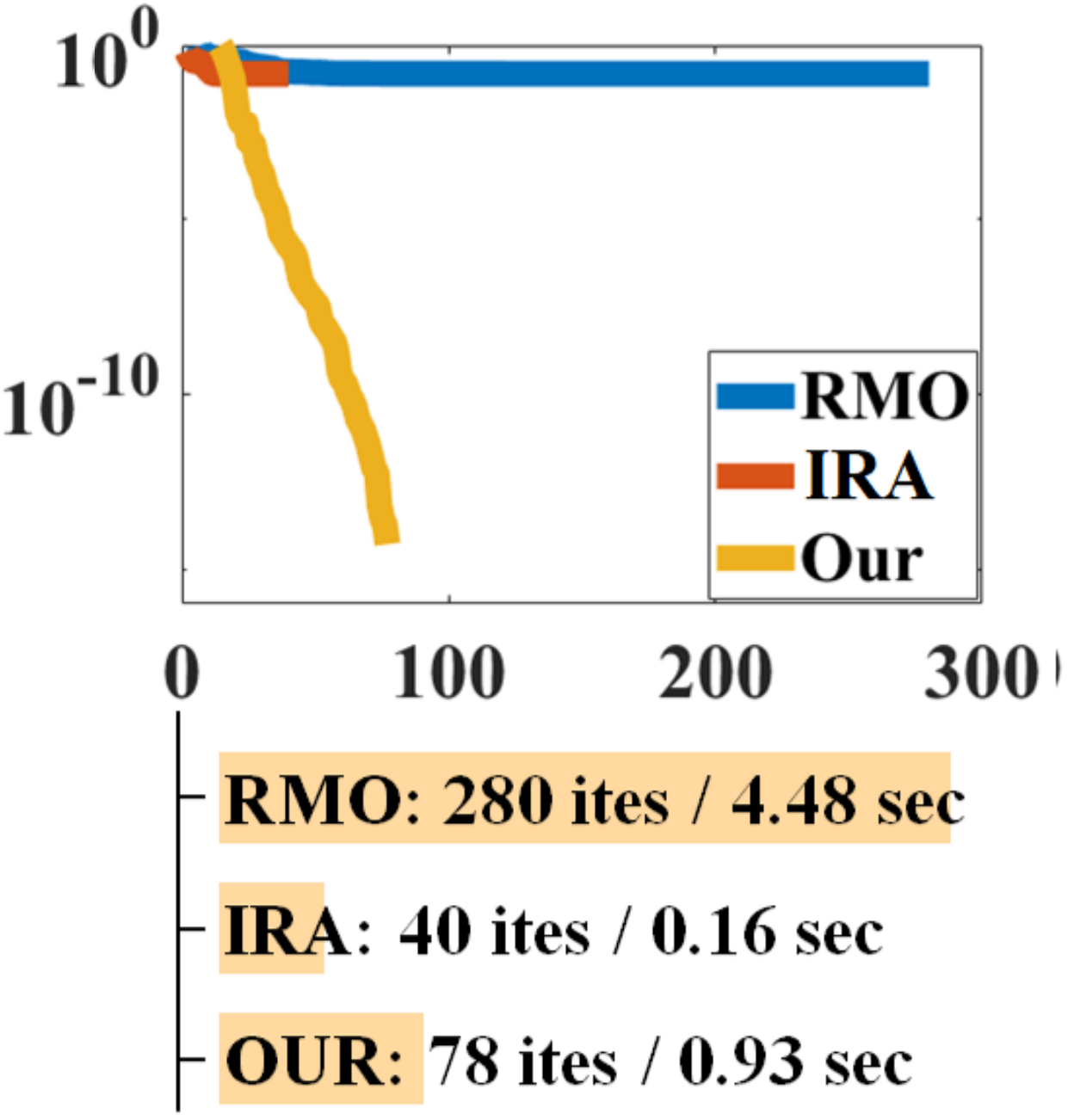}}
		\subfigure[Cauchy]{\includegraphics[width=0.18\textwidth]{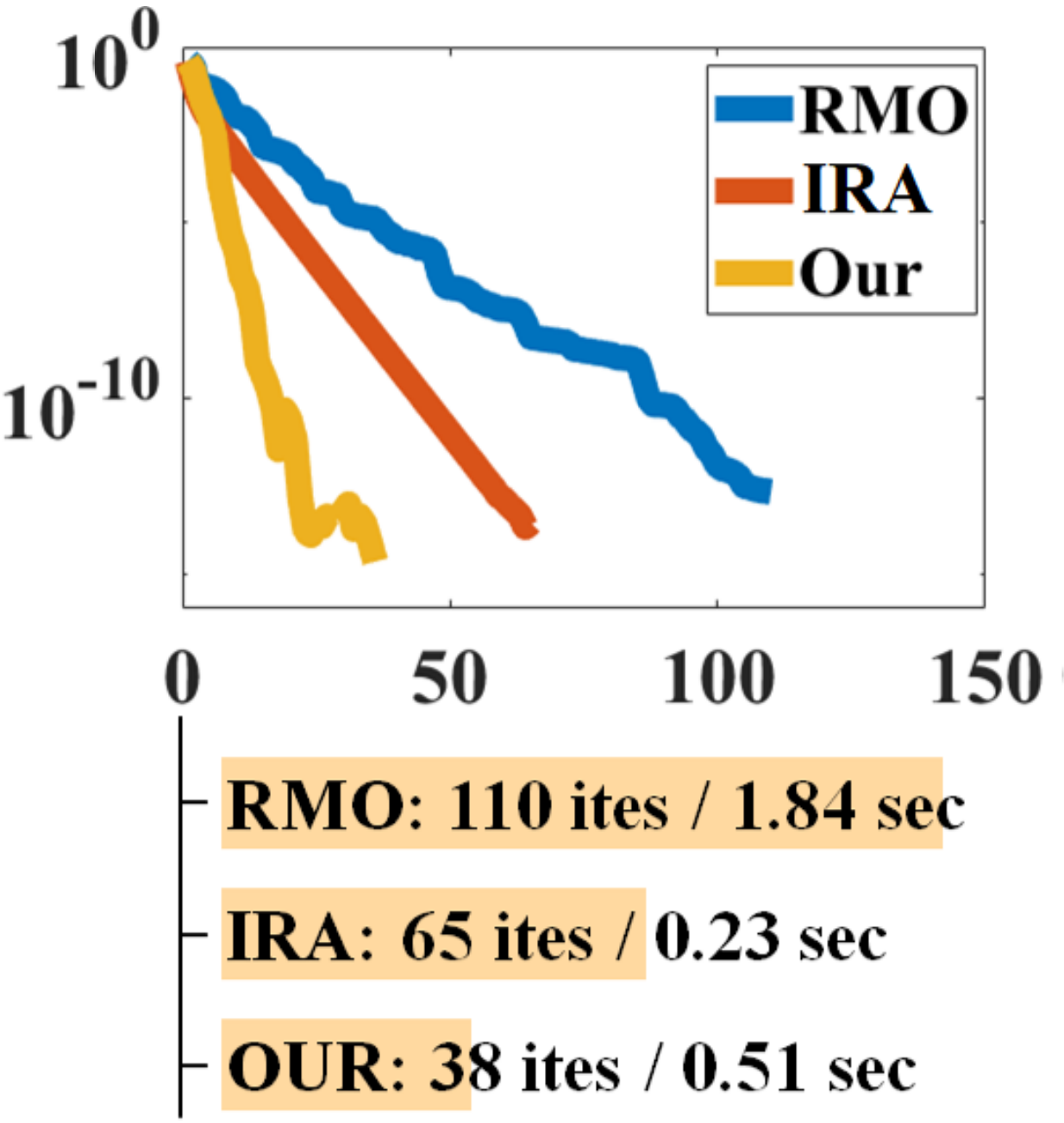}}
		\subfigure[Laplace]{\includegraphics[width=0.18\textwidth]{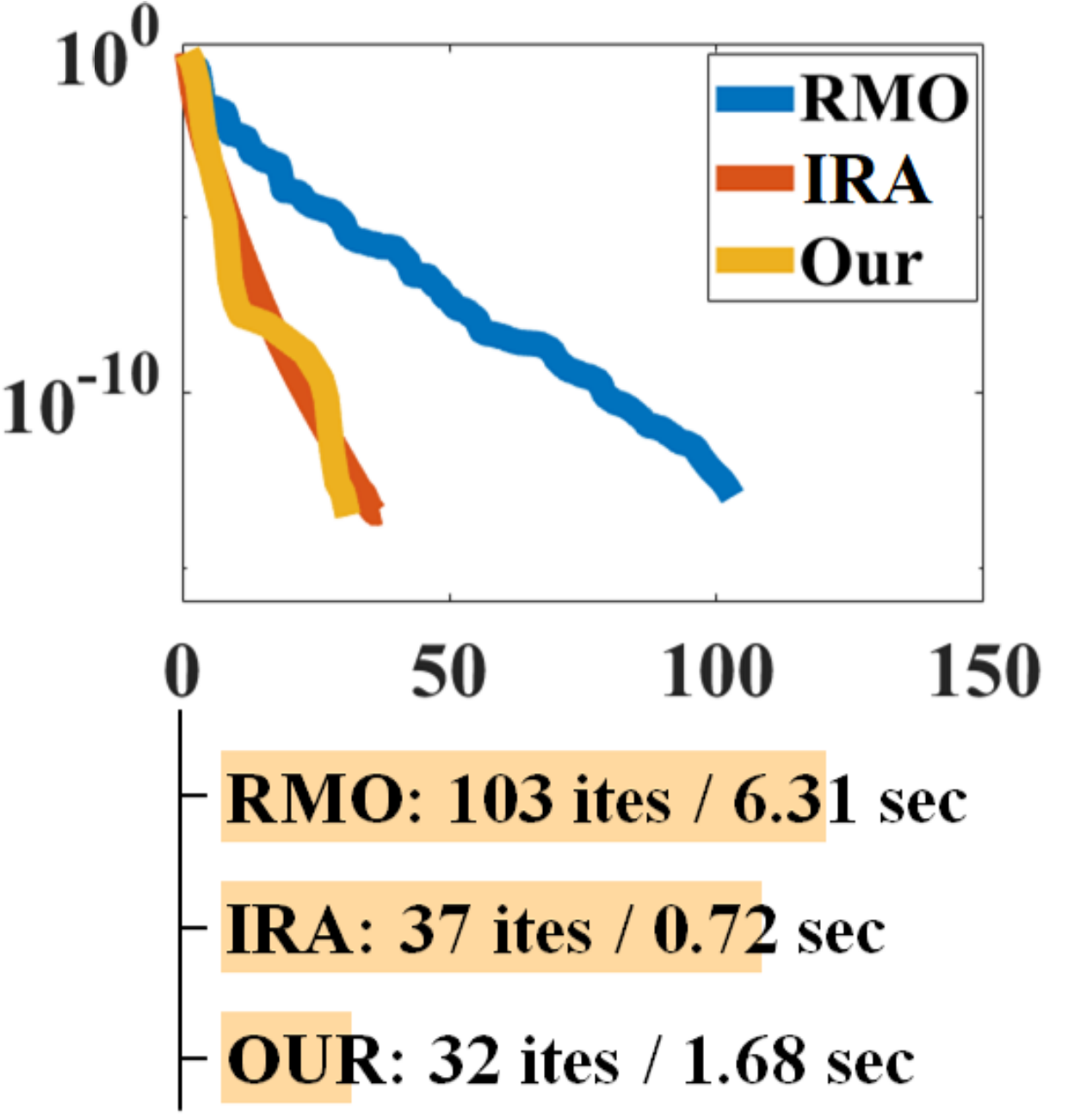}}
		\subfigure[Logistic]{\includegraphics[width=0.18\textwidth]{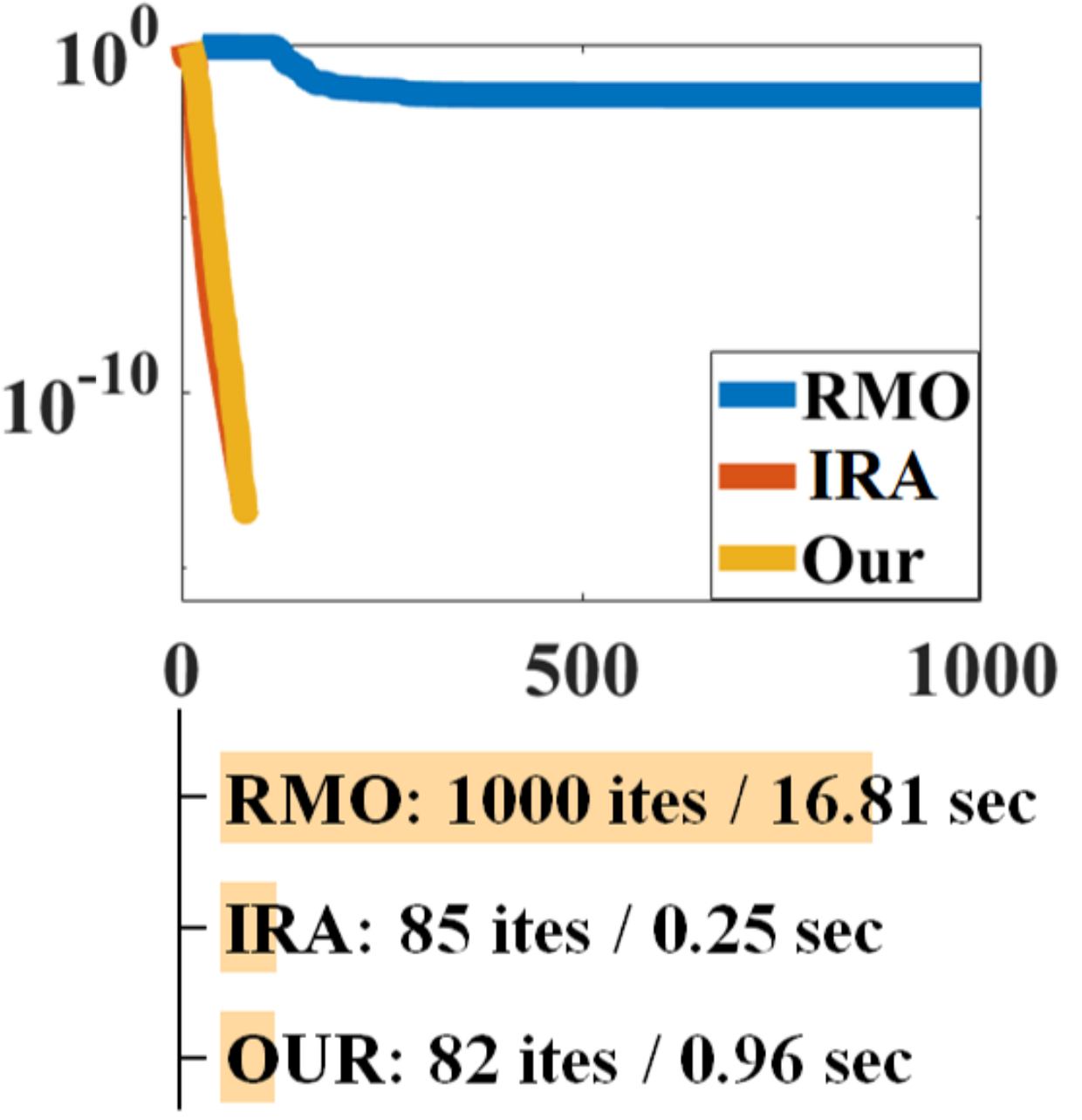}}
		\subfigure[GG1.5]{\includegraphics[width=0.18\textwidth]{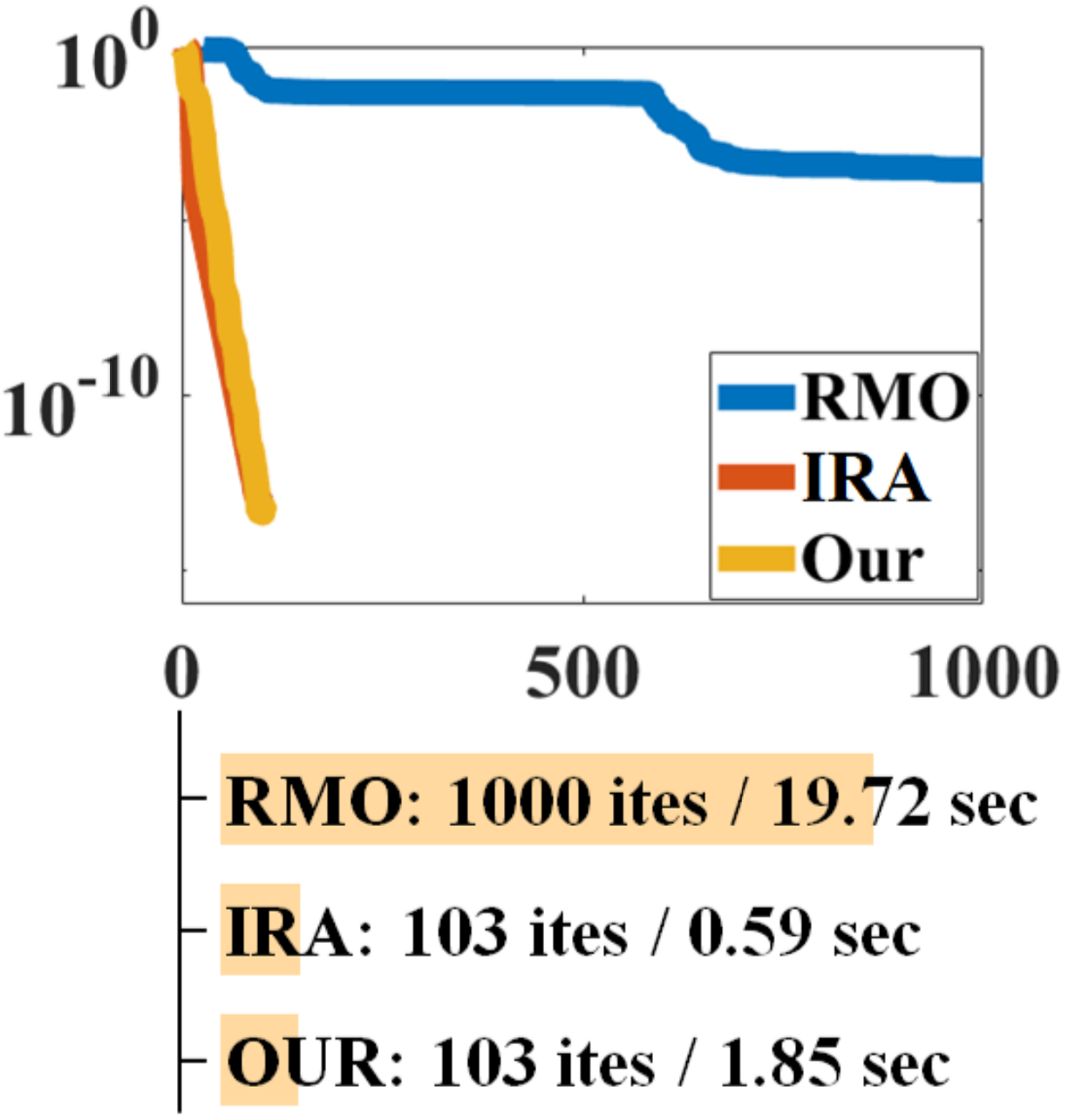}}
	\end{center}
	\vspace{-1em}
	\caption{Cost difference against the number of iterations of the 5 EMMs for Case 2. The top figures show the cost difference against the iterations optimised via Our, IRA and RMO methods. The bottom quantities are the final convergence speed in terms of the number of iterations and execution time (ites/sec).}\label{mixite}
	\vspace{-1em}
\end{figure*}
We further plot the iteration numbers against the average cost difference of Our, IRA and RMO methods when optimising the two cases in Fig. \ref{noisyite} and \ref{mixite}. Note that for the purpose of illustrations \cite{hosseini2015matrix}, the cost difference is defined by the absolute difference between the cost of each iteration and the relatively ``optimal'' cost, which was obtained by choosing the lowest value among the final costs of the three methods. From the two figures, the IRA achieved a monotonic convergence, because it consistently increases the lower bound of the log-likelihood; our method, although occasionally fluctuating at the late stage of convergence, such as for the Cauchy distribution in Fig. \ref{mixite}, consistently achieved the lowest cost among all EMMs and converged with the least number of iterations. A further cautious choice of optimisers as well as line search methods can probably achieve a monotonic convergence. Moreover, although one iteration of our method takes longer time than that of the IRA method due to the line search, the overall computational time of our method is comparable to that of the IRA method. As shown in the next section, our method even performs much faster in terms of computational time than the IRA for higher dimensions and larger numbers of clusters ($M>2$ and $K>4$).

\begin{table*}[h]
	\caption{Overall results averaged across $M$ and $K$ for 9 types of EMMs optimised by Our, IRA and RMO methods.}\label{synoverall}
	\centering
	\scriptsize{
		\begin{tabular}{cc|ccccccccc}\hline \hline
			\multicolumn{2}{c}{$c=10$, $e=10$}  & {Gaussian}   & Cauchy  & Student-$t$ ($v=10$)   & GG1.5      & Logistic   & Weib0.9    & Weib1.1    & Gamma1.1 & Mix  \\\hline
			\multirow{2}{*}{Our} & Ite. / T. (s) & \textbf{108} / \textbf{40.3} & \textbf{135} / \textbf{34.5}    & \textbf{136} / \textbf{32.7} & \textbf{219} / \textbf{160}  & \textbf{113} / \textbf{43.2} & \textbf{158} / \textbf{37.8} & \textbf{139} / \textbf{56.5} & \textbf{138} / \textbf{38.3} & \textbf{132} /\textbf{ 51.8} \\
			& Cost      & 64.5       & \textbf{65.6}    &   \textbf{64.8}   & \textbf{69.8}       & 47.8       & \textbf{64.4}       & \textbf{64.4}       & \textbf{64.4}       & \textbf{67.5}       \\
			\multirow{2}{*}{IRA} & Ite. / T. (s) & 367 / 90.0 & ----   &  ----  & ----          & 428 / 249  & 310 / 46.8 & ----       & ----       & ----       \\
			& Cost      & 66.2       & ----   &  ----   & ----       & 49.0       & 64.5       & ----       & ----       & ----       \\
			\multirow{2}{*}{RMO} & Ite. / T. (s) & 658 / 1738 & 848 / 1590   & 776 / 1681 & 796 / 1435 & 744 / 1651 & 825 / 1806 & 773 / 1625 & 760 / 1550 & 711 / 1547 \\
			& Cost      & \textbf{64.3}       & 65.8  & 64.7 & 71.6       & \textbf{47.5}       & 64.4       & 64.5       & 64.5       & 66.8       \\\hline
			\multicolumn{2}{c}{$c=0.1$, $e=1$} & & & & & & & & &  \\\hline
			\multirow{2}{*}{Our} & Ite. / T. (s) &       \textbf{693} /  \textbf{201}              &   \textbf{424} / \textbf{99.9}    & \textbf{621} / \textbf{106}  & \textbf{581} / \textbf{100}      &     \textbf{734} / \textbf{242}     & \textbf{686} / 119  & \textbf{655} / \textbf{111}  & \textbf{664} / \textbf{94.6} & \textbf{470} / \textbf{65.7} \\
			& Cost      &     \textbf{40.0}       &        \textbf{41.0}             & \textbf{40.2} &   \textbf{39.8}      &      \textbf{23.5 }     & \textbf{39.1}       & \textbf{40.1}       & \textbf{40.1}       & \textbf{39.2}       \\
			\multirow{2}{*}{IRA} & Ite. / T. (s) &    ----                    &   ----      & ---- &   ----      &       ----     & 711 / \textbf{101}  & ----       & ----       & ----       \\
			& Cost      &    ----        &    ----   &  ----   &       ----              &      ----      & 40.6       & ----       & ----       & ----       \\
			\multirow{2}{*}{RMO} & Ite. / T. (s) &     1000 / 1707                 &   956 / 1661    &  951 / 1789 & 898 / 1206      &     963 / 1508     & 969 / 1632 & 975 / 1744 & 958 / 1561 & 917 / 1505 \\
			& Cost      &     40.4       &     41.5       & 40.8  &   42.1              &    23.8        & 40.3       & 40.5       & 40.4       & 40.4     \\\hline \hline 
			\multicolumn{2}{c|}{Note:} &\multicolumn{9}{l}{T. (s): Time (seconds); Ite.: Iteration numbers; ----: Singularity or infinity in either $M=8$, $M=16$ or $M=64$.}\\\hline
	\end{tabular}}
	\vspace{-1em}
\end{table*}

\begin{table*}[!h]
	\caption{Detailed comparisons among Our, IRA and RMO on optimising 3 EMMs in the case of $c = 10$ and $e = 10$}\label{detail}
	\scriptsize{\resizebox{\textwidth}{!}{
			\begin{tabular}{cc|ccc|ccc|ccc}\hline \hline
				\multirow{2}{*}{(M, K)}  & \multirow{2}{*}{} & \multicolumn{3}{c|}{Gaussian}                         & \multicolumn{3}{c|}{Cauchy}                          & \multicolumn{3}{c}{Logistic}                        \\
				&                   & Our             & IRA             & RMO              & Our             & IRA             & RMO             & Our             & IRA             & RMO             \\\hline
				\multirow{3}{*}{(8, 8)}  & T. (s)              & \textbf{3.70 $\pm$ 5.44} & 5.48 $\pm$ 4.13 & 4.23 $\pm$ 10.76 & \textbf{2.23 $\pm$ 1.67} & 3.04 $\pm$ 2.45 & 20.4 $\pm$ 10.6 & \textbf{4.28 $\pm$ 4.60} & 4.71 $\pm$ 4.08 & 18.9 $\pm$ 17.3 \\
				& Ite.              & 165 $\pm$ 240   & 538 $\pm$ 403   & \textbf{122 $\pm$ 308}    & \textbf{107 $\pm$ 81.3}  & 340 $\pm$ 275   & 640 $\pm$ 334   & \textbf{177 $\pm$ 191}   & 463 $\pm$ 398   & 537 $\pm$ 489   \\
				& Co./Fa.           & 19.4 / 0$\%$    & 19.6 / 0$\%$    & \textbf{19.3} / \textbf{0$\%$}     & \textbf{20.3} / \textbf{0$\%$}    & 20.4 / 0$\%$    & 20.3 / 0$\%$    & \textbf{14.9} / \textbf{0$\%$}    & 15.0 / 0$\%$    & 14.9 / 0$\%$    \\	\hline	
				\multirow{3}{*}{(16, 16)} & T.  (s)            & \textbf{15.8 $\pm$ 13.4} & 28.8 $\pm$ 15.2 & 221 $\pm$ 78.3   & \textbf{38.1 $\pm$ 34.5} & 40.9 $\pm$ 18.2 & 260 $\pm$ 1.84  & \textbf{15.7 $\pm$ 7.64} & 23.6 $\pm$ 14.6 & 193 $\pm$ 115   \\		
				& Ite.              & \textbf{115 $\pm$ 94.0}  & 400 $\pm$ 207   & 853 $\pm$ 304    & \textbf{272 $\pm$ 243}   & 570 $\pm$ 254   & 1000 $\pm$ 0.00 & \textbf{115 $\pm$ 54.0}  & 325 $\pm$ 201   & 734 $\pm$ 433   \\	
				& Co./Fa.           & \textbf{37.7} / \textbf{0$\%$}    & 38.0 / 0$\%$    & 37.8 / 0$\%$     & \textbf{38.7} / \textbf{0$\%$}    & 38.8 / 0$\%$    & 39.0 / 0$\%$    & \textbf{28.6} / \textbf{0$\%$}    & 28.8 / 0$\%$    & 28.6 / 0$\%$    \\	\hline	
				\multirow{3}{*}{(64, 64)} & T. (s)             & \textbf{101 $\pm$ 18.0}  & 236 $\pm$ 0.00  & 4988 $\pm$ 152   & \textbf{63.2 $\pm$ 11.7} & ----            & 4491 $\pm$ 1549 & \textbf{110 $\pm$ 19.3}  & 720 $\pm$ 343   & 4741 $\pm$ 475  \\
				& Ite.              & \textbf{45.8 $\pm$ 5.67} & 163 $\pm$ 0.00  & 1000 $\pm$ 0.00  & \textbf{26.9 $\pm$ 3.70} & ----            & 902 $\pm$ 309   & \textbf{49.3 $\pm$ 5.98} & 497 $\pm$ 235   & 960 $\pm$ 121   \\
				& Co./Fa.           & \textbf{136} / \textbf{0$\%$ }    & 141 / 90$\%$    & 136 / 10$\%$     & \textbf{138} / \textbf{0$\%$}     & -- / 100$\%$    & 138 / 0$\%$     & 99.7 / \textbf{0$\%$}    & 103 / 70$\%$    & \textbf{99.1} / 10$\%$  \\ \hline \hline
				\multicolumn{2}{c|}{Note:} & \multicolumn{9}{l}{T. (s): Time (seconds); Ite.: Iteration numbers; Co.: Final cost; Fa.: Optimisation fail ratio; ----: Singularity or infinity}\\\hline
	\end{tabular}}}
	\vspace{-1em}
\end{table*}
\subsection{Evaluations over the synthetic datasets}\label{syntheticeva}
\vspace{-.5em}
We next systematically evaluated our method based on the synthetic dataset described in Section \ref{parameterssetiing}. For each dataset, we had $8\times 3=24$ test cases, i.e., 9 types of EMMs for the 3 methods. The 9 types of EMMs include 7 types of elliptical distributions and the other one (denoted as Mix) is composed by half the number of Cauchy distributions and the other half of Gaussian distributions. 
Table \ref{synoverall} shows the overall result averaged across dimensions $M$ and $K$ for the 9 EMMs. As can be seen from Table \ref{synoverall}, our method exhibits the fastest convergence speed in terms of both the number of iterations and computation time, and it also obtains the minimum cost. It can also be found that datasets with more overlaps (i.e., $\{c=0.1,e=1\}$) take a longer time to optimise, whereby iterations and computational time increase for all the 3 methods. On the other hand, by comparing the results of our method and those of the RMO method, we can clearly see a significant improvement in both convergence speed and final minimum, which verifies the effectiveness of our reformulation technique.

We provide further details of the comparisons of different $M$ and $K$ in Table \ref{detail}, where 3 EMMs and $\{c=10,e=10\}$ were reported due to the space limitation and the fact that similar results can be found for other EMMs and settings of Table \ref{synoverall}. We can see from this table that the superior performance of our method is consistent over different dimensions $M$ and cluster numbers $K$. Table \ref{detail} shows that our method requires the minimum number of iterations as well as least computational time. More importantly, the standard deviations for the iterations and computational time of our method are almost the lowest, which means that our method is able to stably optimise the EMMs. With an increase in $M$ and $K$, our method consistently achieves the best performance of the average costs with $0\%$ fail ratio. In contrast, the IRA can become extremely unstable. One reason is due to the fact that, as mentioned in Section \ref{relatedworks}, the IRA cannot converge for the non-geodesic convex distributions such as the \textit{Weib1.1} and \textit{Gamma1.1} in Table \ref{synoverall} \cite{wiesel2012geodesic}. Another perspective is that it even failed on geodesic convex distributions in Table \ref{detail} (e.g., $90\%$ fail ratio for the Gaussian and $100\%$ fail ratio for the Cauchy when $M\!=\!64,K\!=\!64$). This may be due to the separate updating scheme on $\bm{\mu}_k$ and $\mathbf{\Sigma}_k$, which has been mentioned in Section \ref{FEMM}. Although we see an enhanced stability when using manifold optimisation, this separate updating scheme, we believe, is also the reason that the RMO requires extremely large computational complexity to converge ($>900$ iterations and $>4000$ seconds when $M\!=\!64,K\!=\!64$) and in several cases it even failed to converge altogether. Our method, on the one hand, re-parametrises the parameters to perform a simultaneous update on $\bm{\mu}_k$ and $\mathbf{\Sigma}_k$ via $\mathbf{\tilde\Sigma}_k$; the re-parametrised EMMs also have a fixed zero mean that prevents potential clusters to move to the data boundary. These two aspects enable our method to consistently and stably optimise EMMs with the fastest convergence speed and lowest cost.

\begin{figure*}[!htb]
	\begin{center}
		\subfigure{\includegraphics[width=0.16\textwidth]{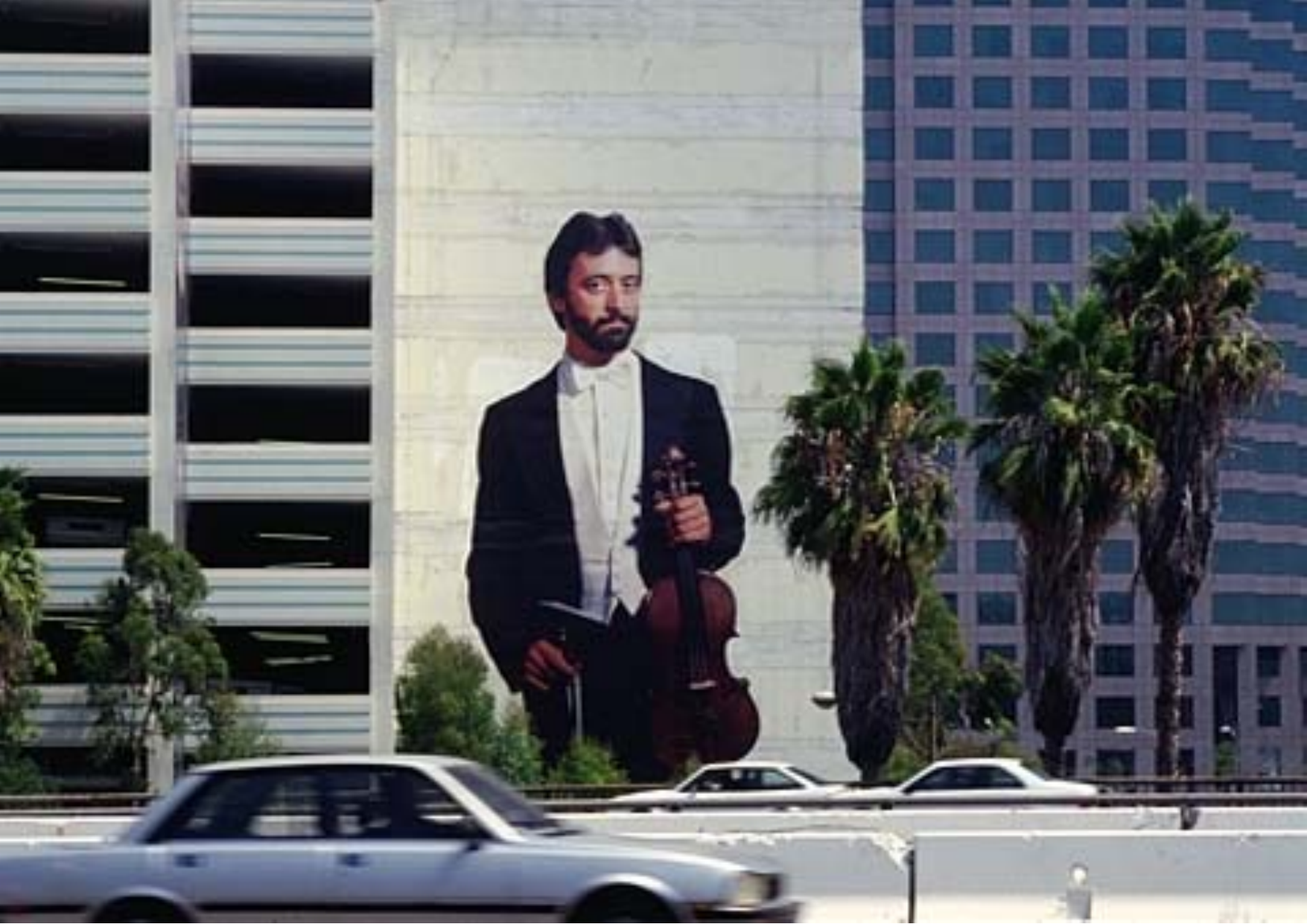}}
		\subfigure{\includegraphics[width=0.16\textwidth]{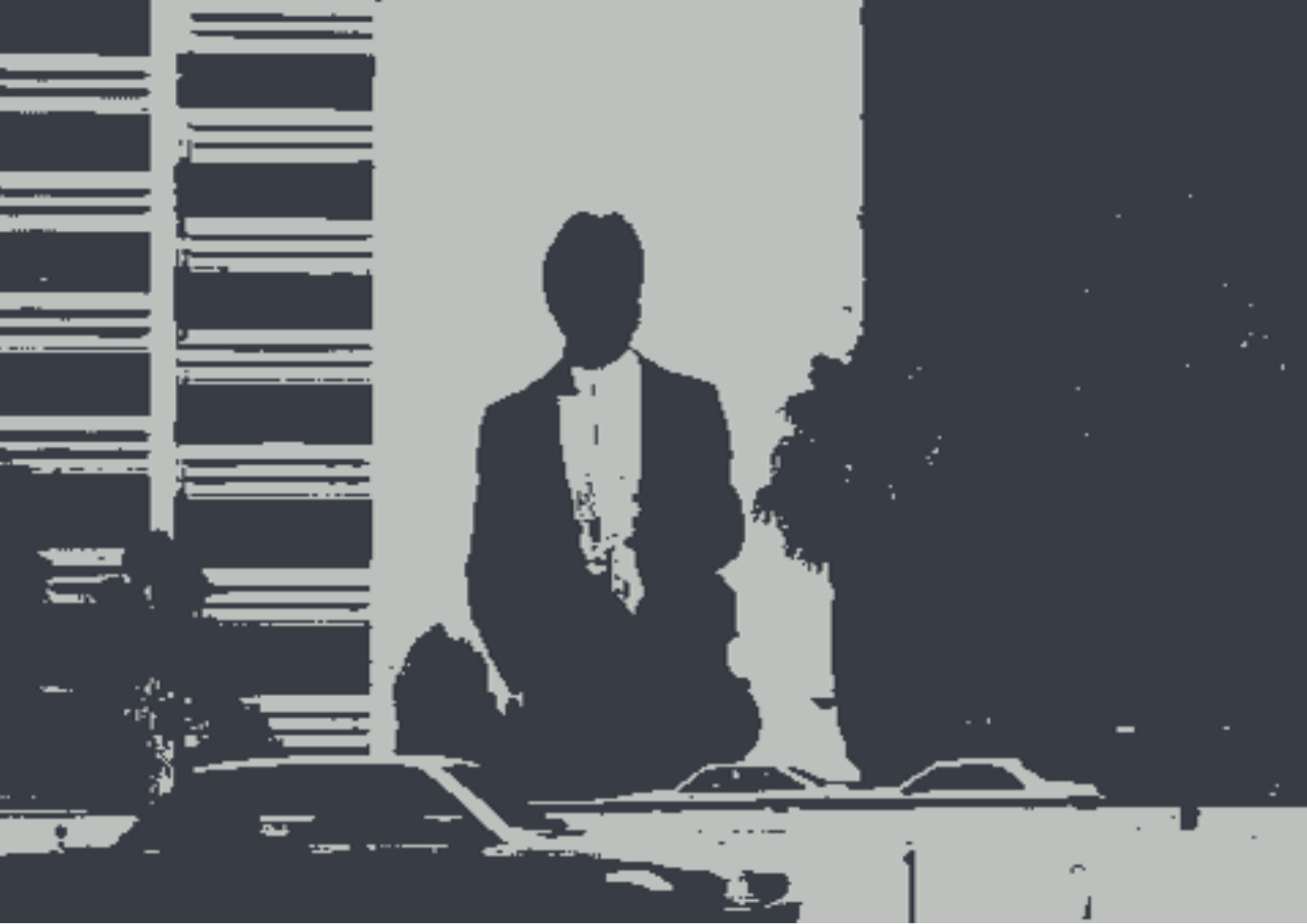}}
		\subfigure{\includegraphics[width=0.16\textwidth]{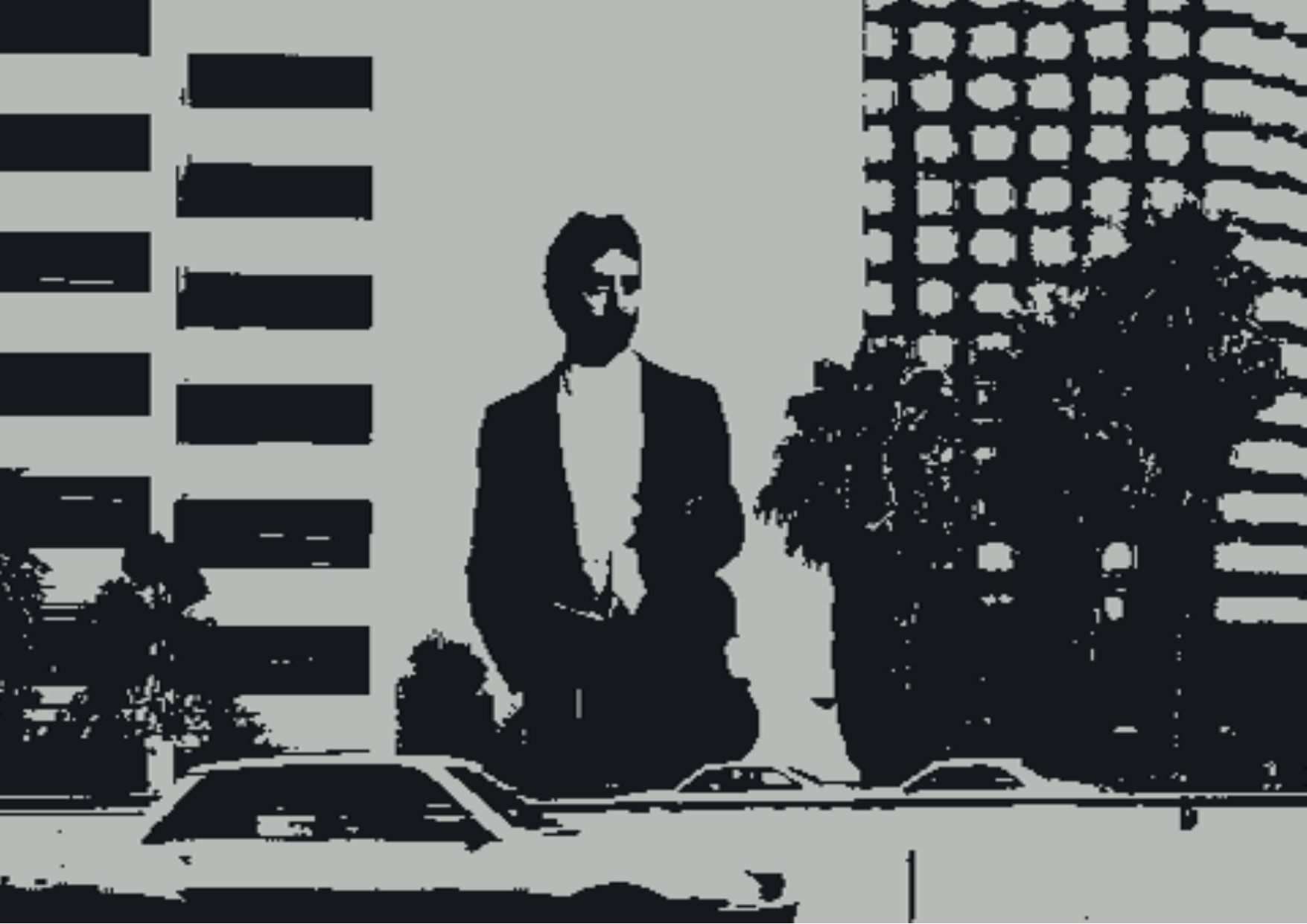}}
		\subfigure{\includegraphics[width=0.16\textwidth]{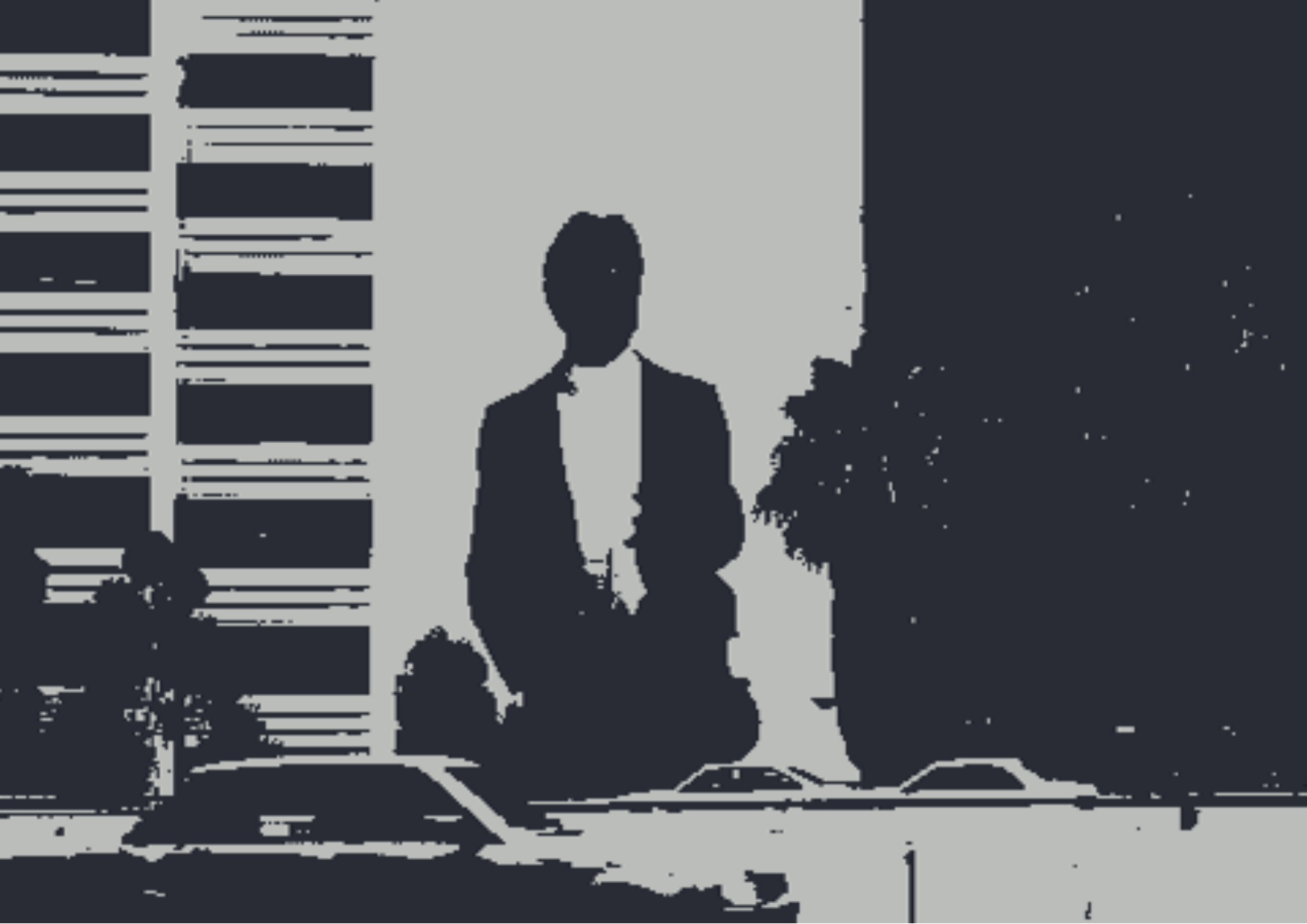}}
		\subfigure{\includegraphics[width=0.16\textwidth]{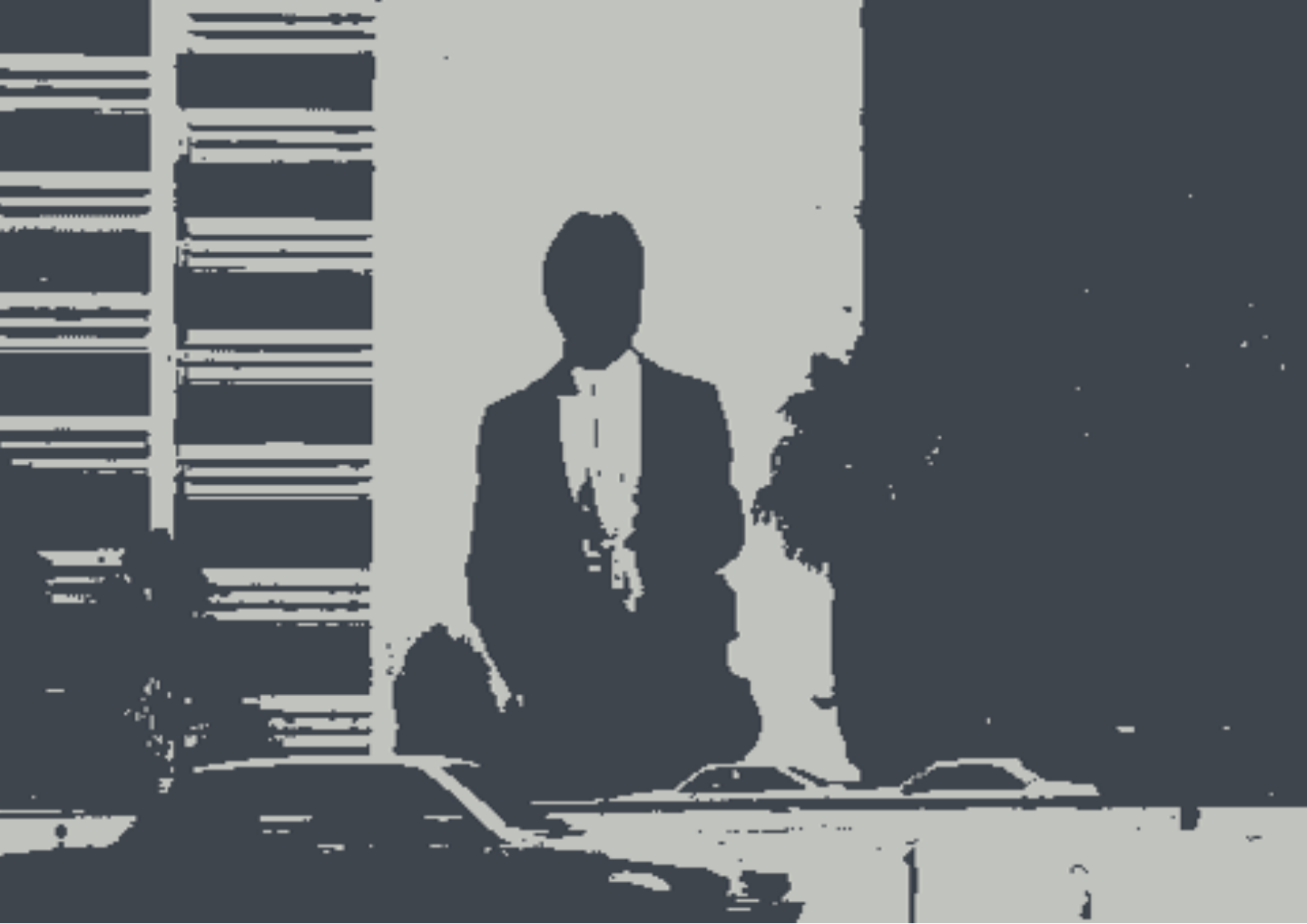}}
		\subfigure{\includegraphics[width=0.16\textwidth]{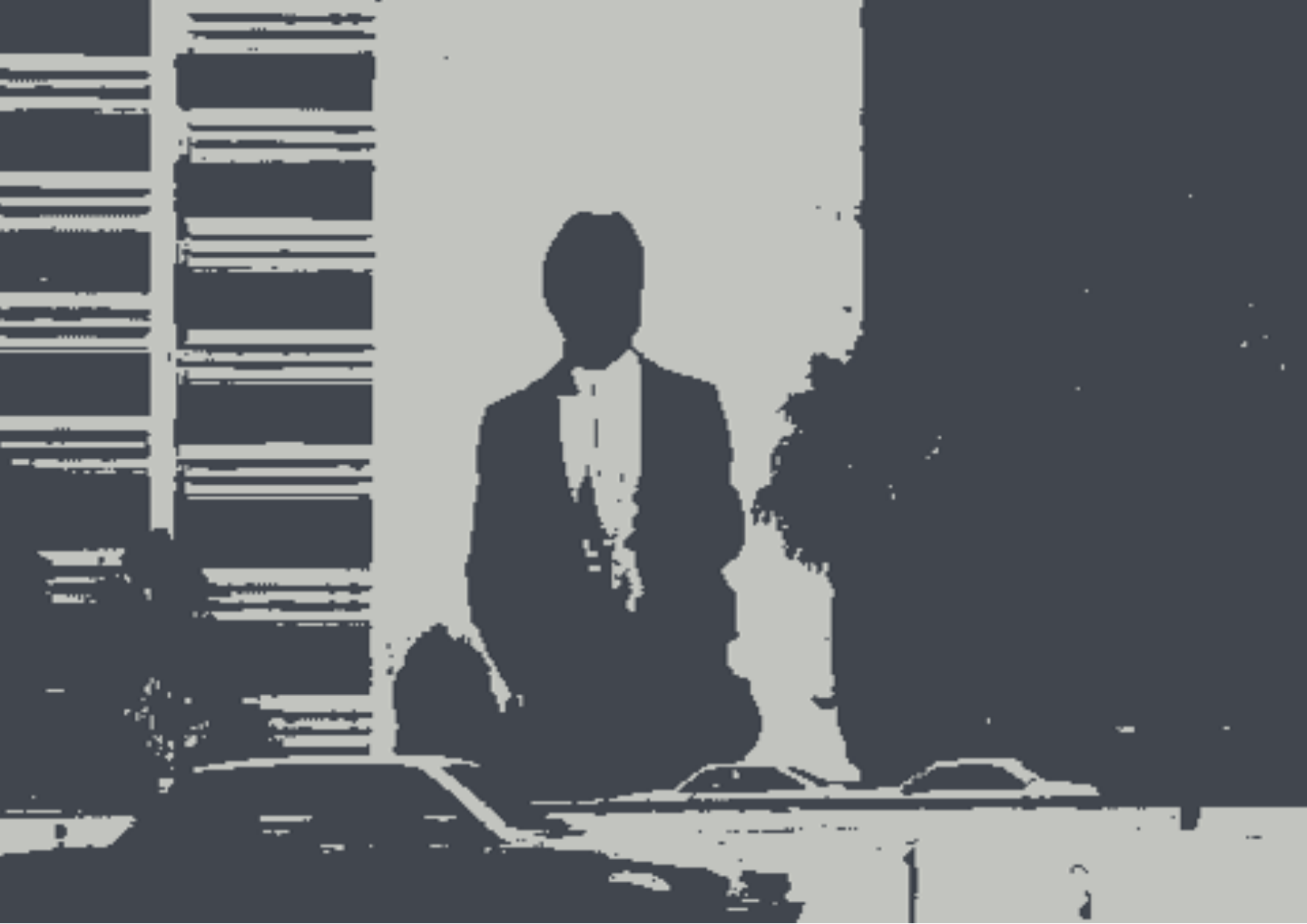}}
		\subfigure{\includegraphics[width=0.16\textwidth]{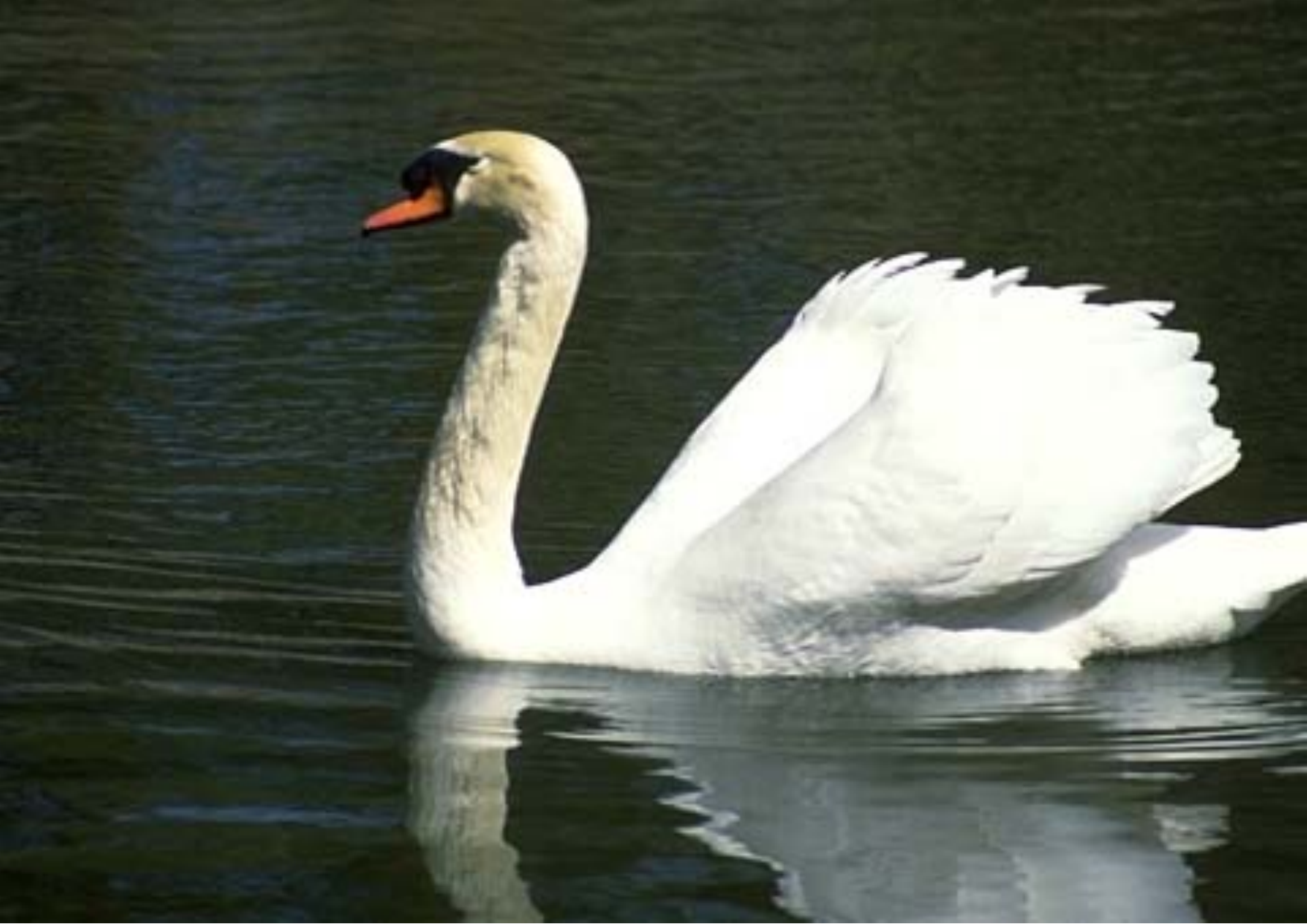}}
		\subfigure{\includegraphics[width=0.16\textwidth]{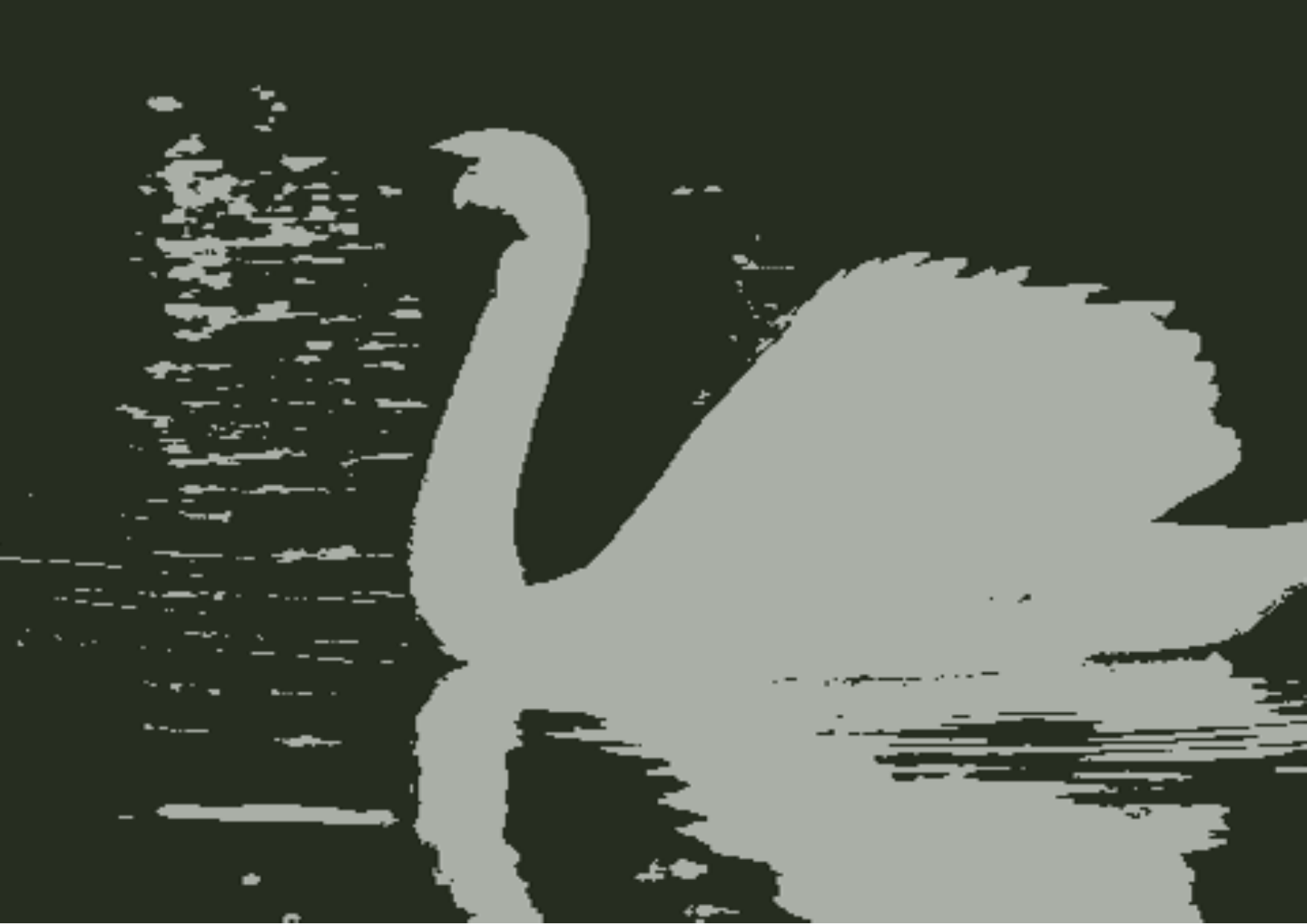}}
		\subfigure{\includegraphics[width=0.16\textwidth]{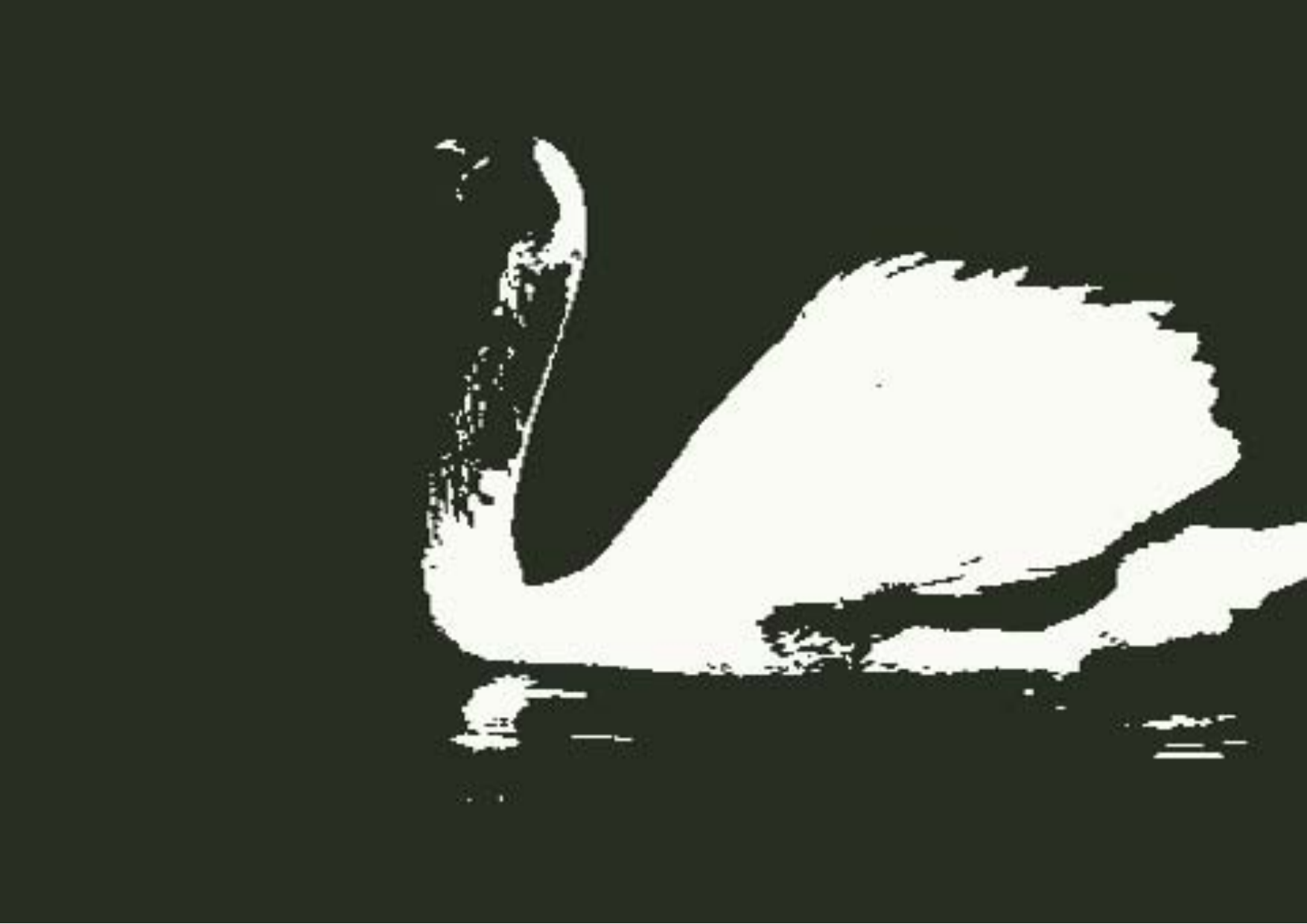}}
		\subfigure{\includegraphics[width=0.16\textwidth]{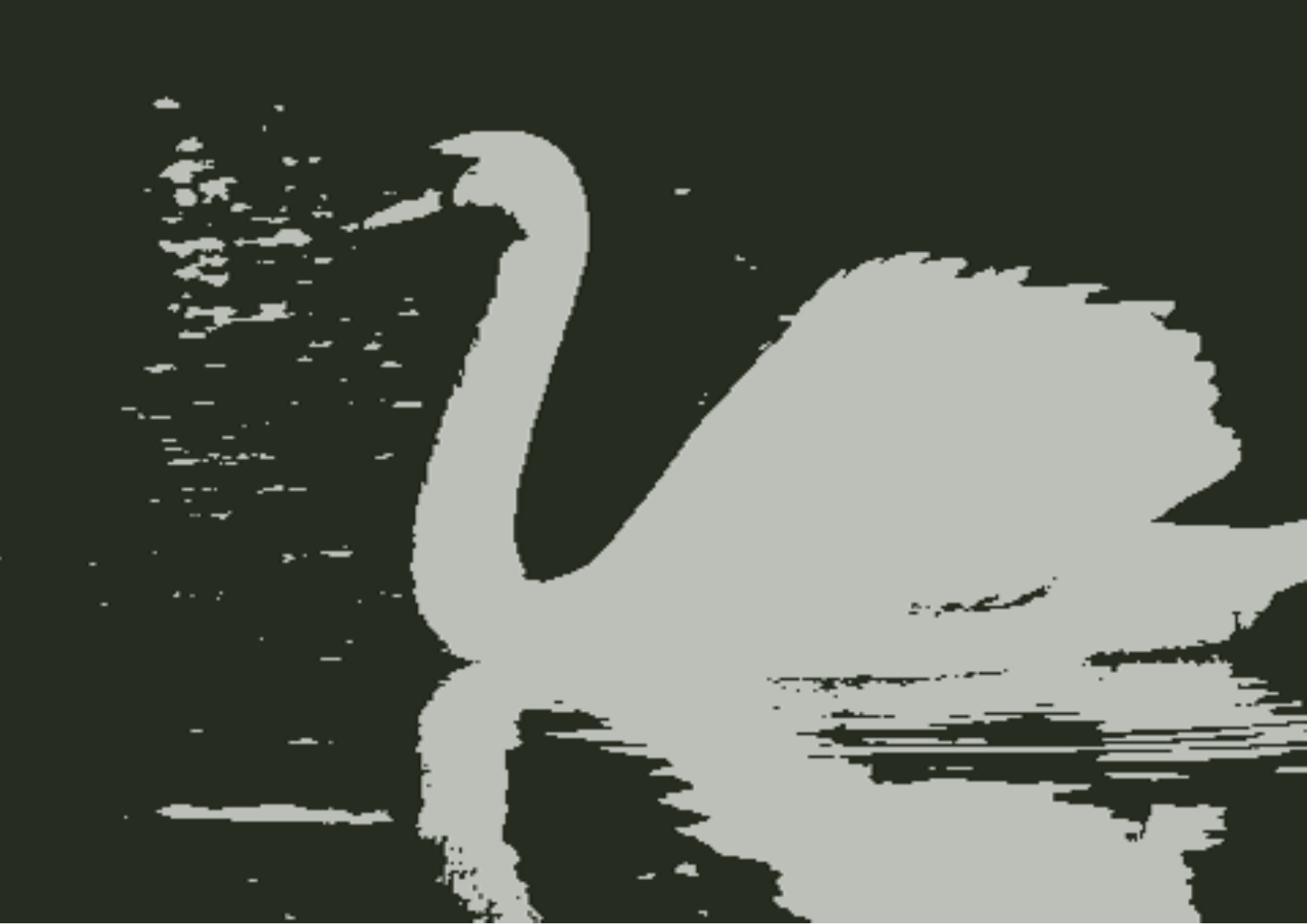}}
		\subfigure{\includegraphics[width=0.16\textwidth]{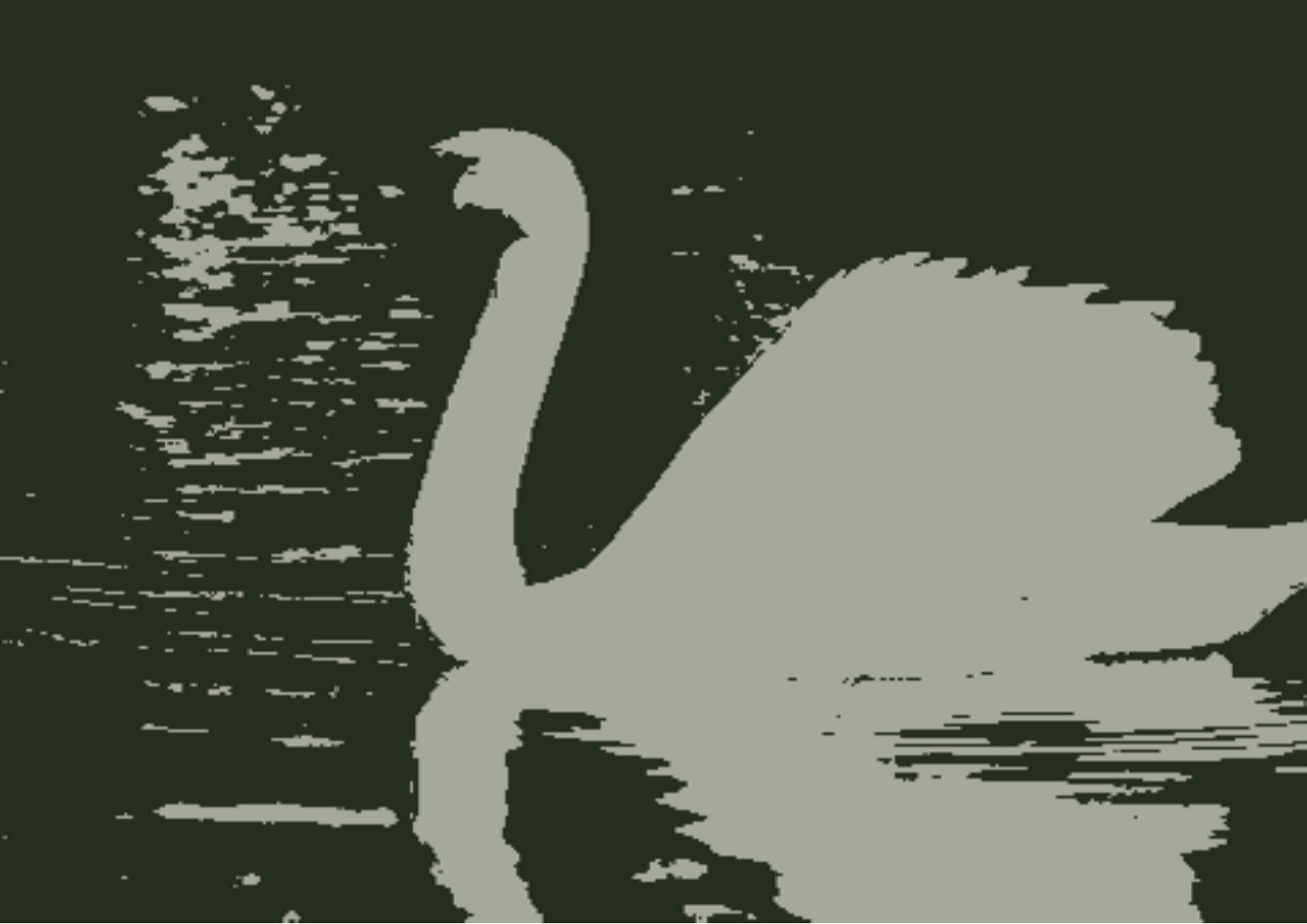}}
		\subfigure{\includegraphics[width=0.16\textwidth]{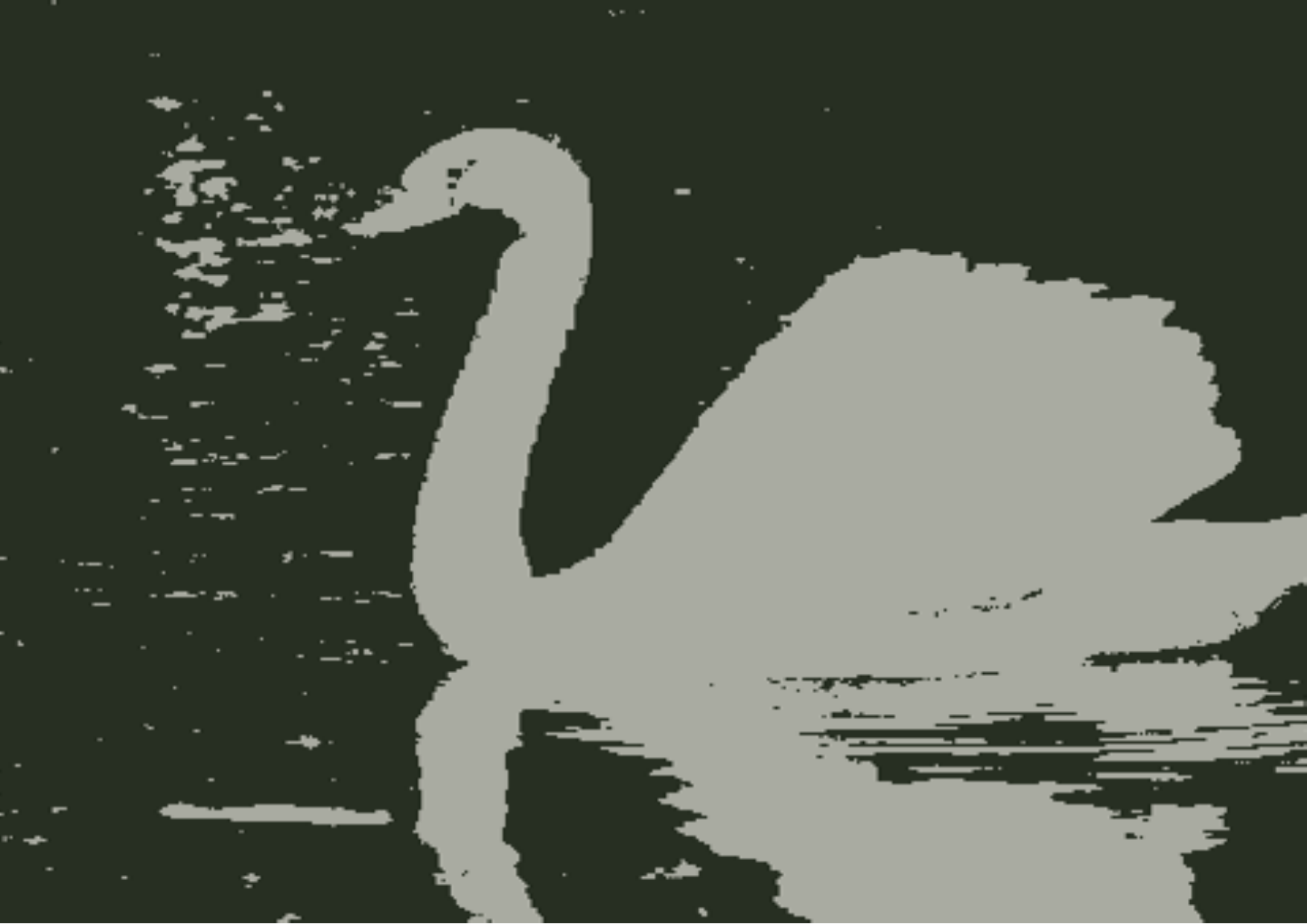}}
		\subfigure{\includegraphics[width=0.16\textwidth]{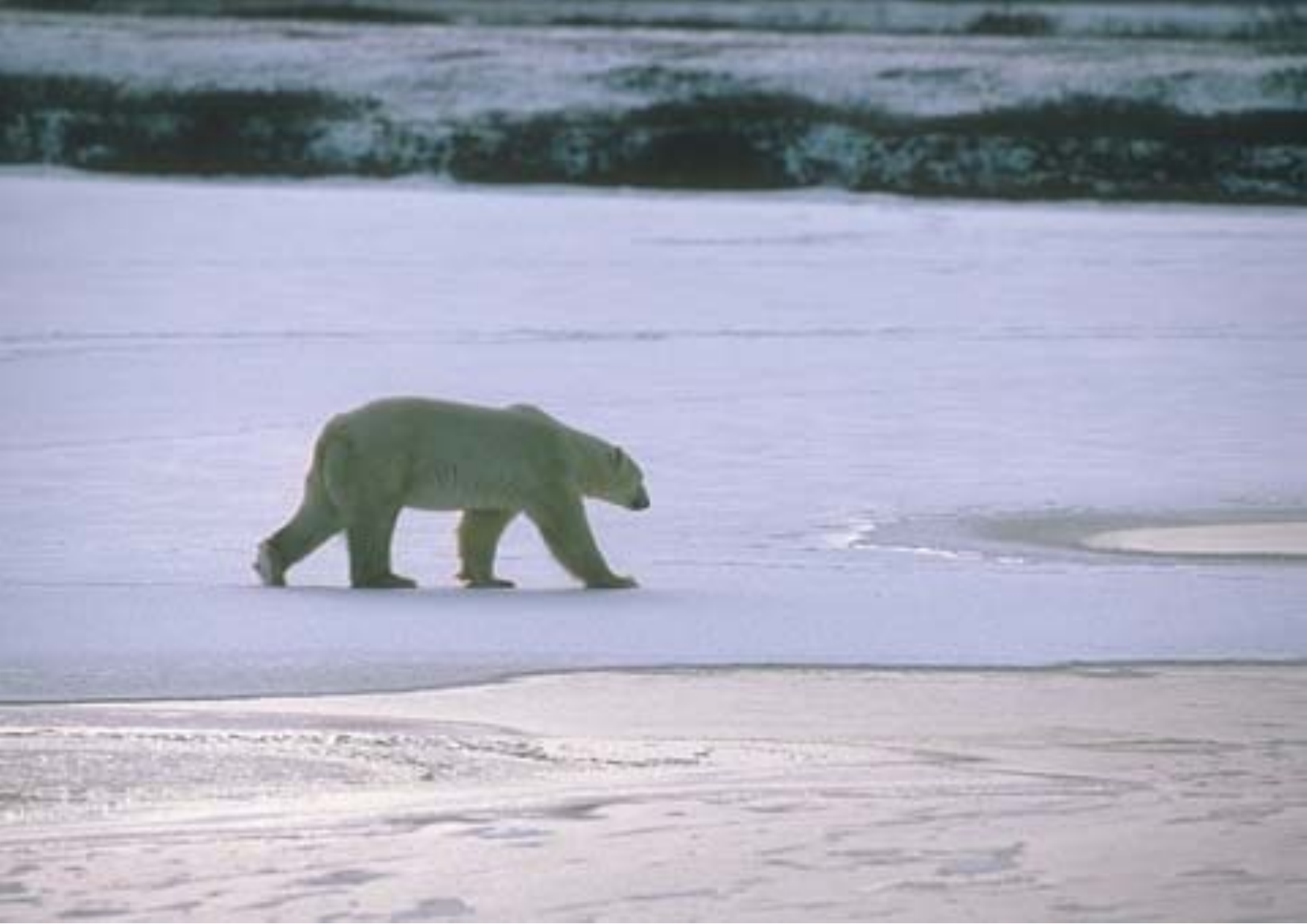}}
		\subfigure{\includegraphics[width=0.16\textwidth]{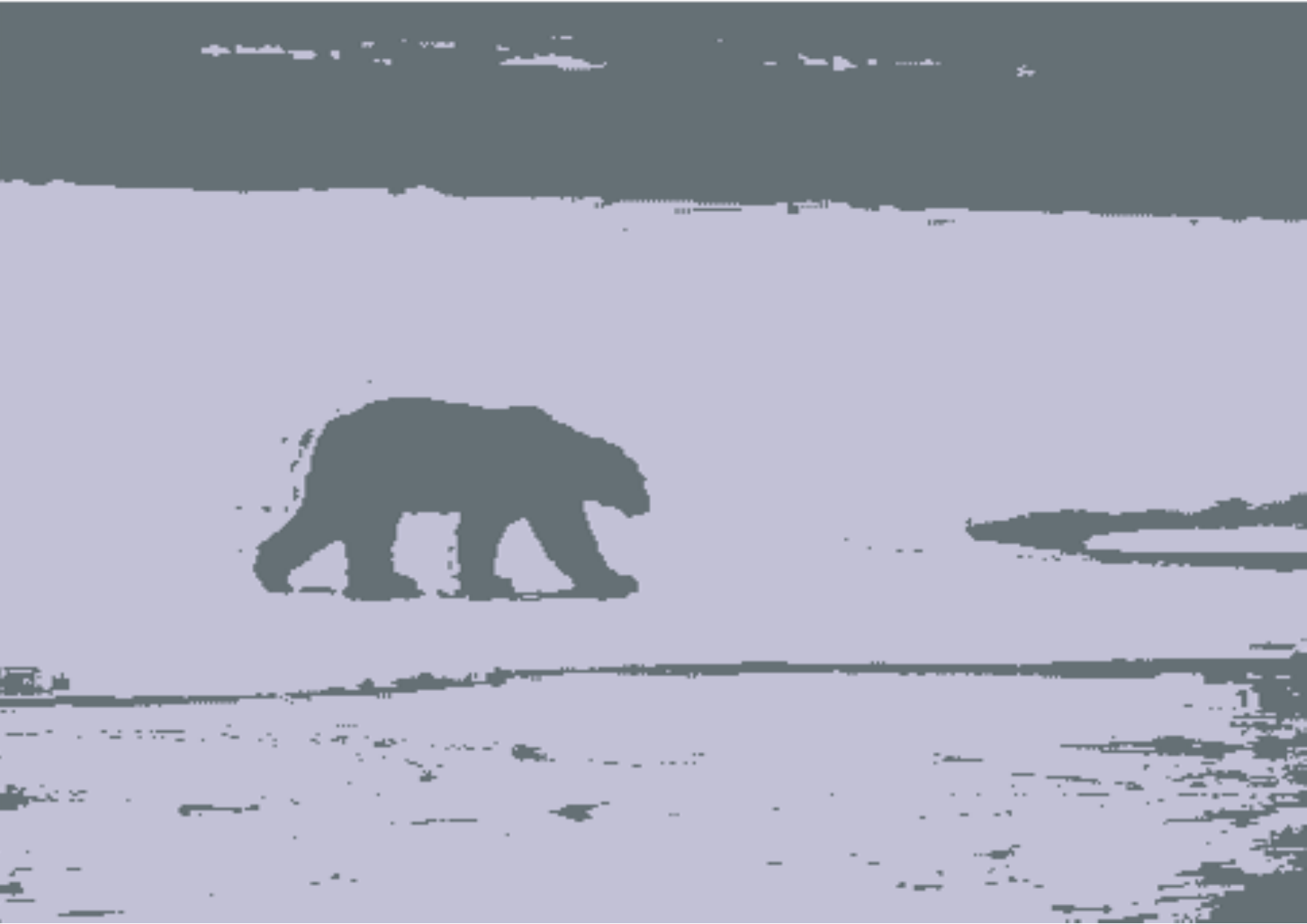}}
		\subfigure{\includegraphics[width=0.16\textwidth]{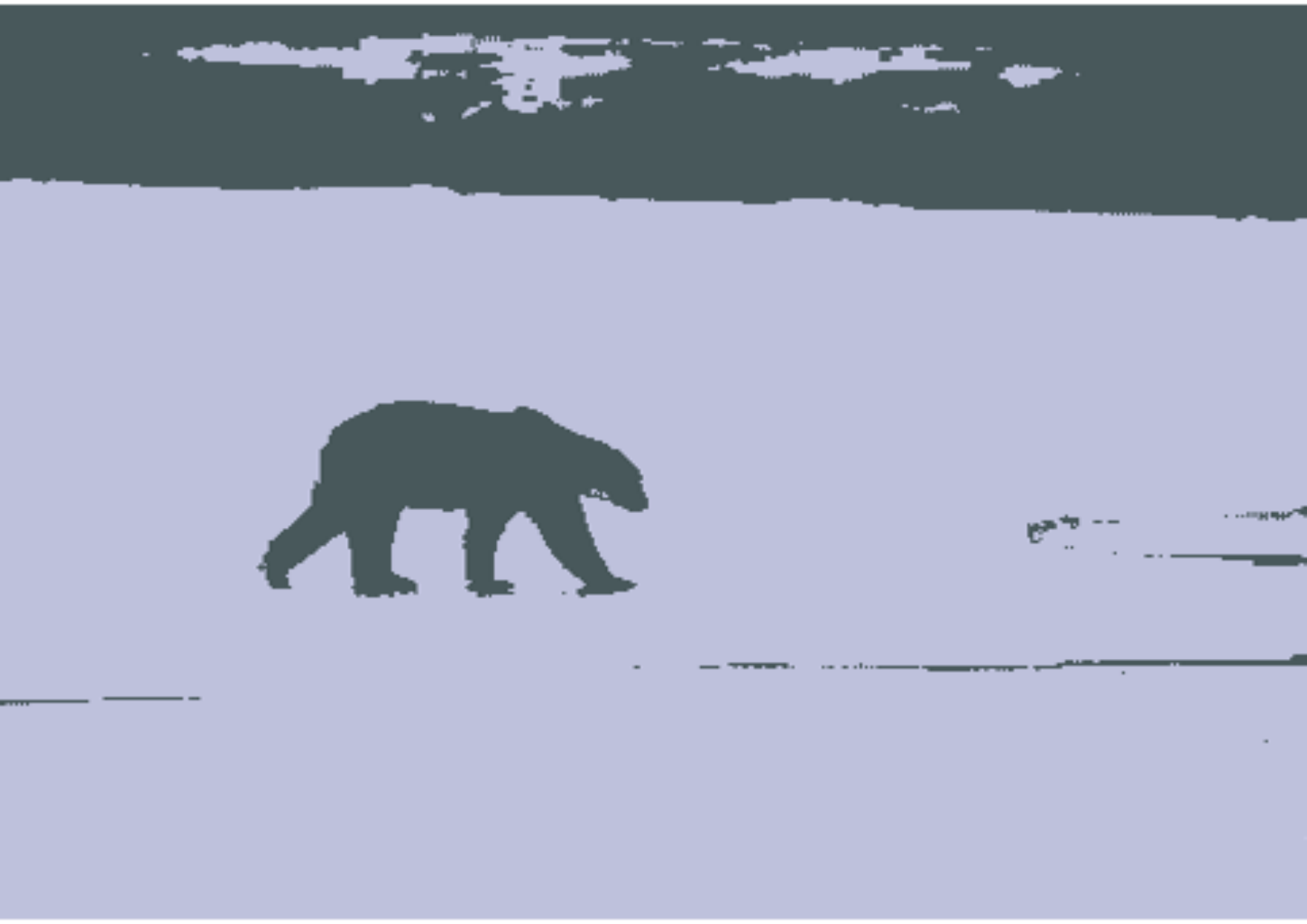}}
		\subfigure{\includegraphics[width=0.16\textwidth]{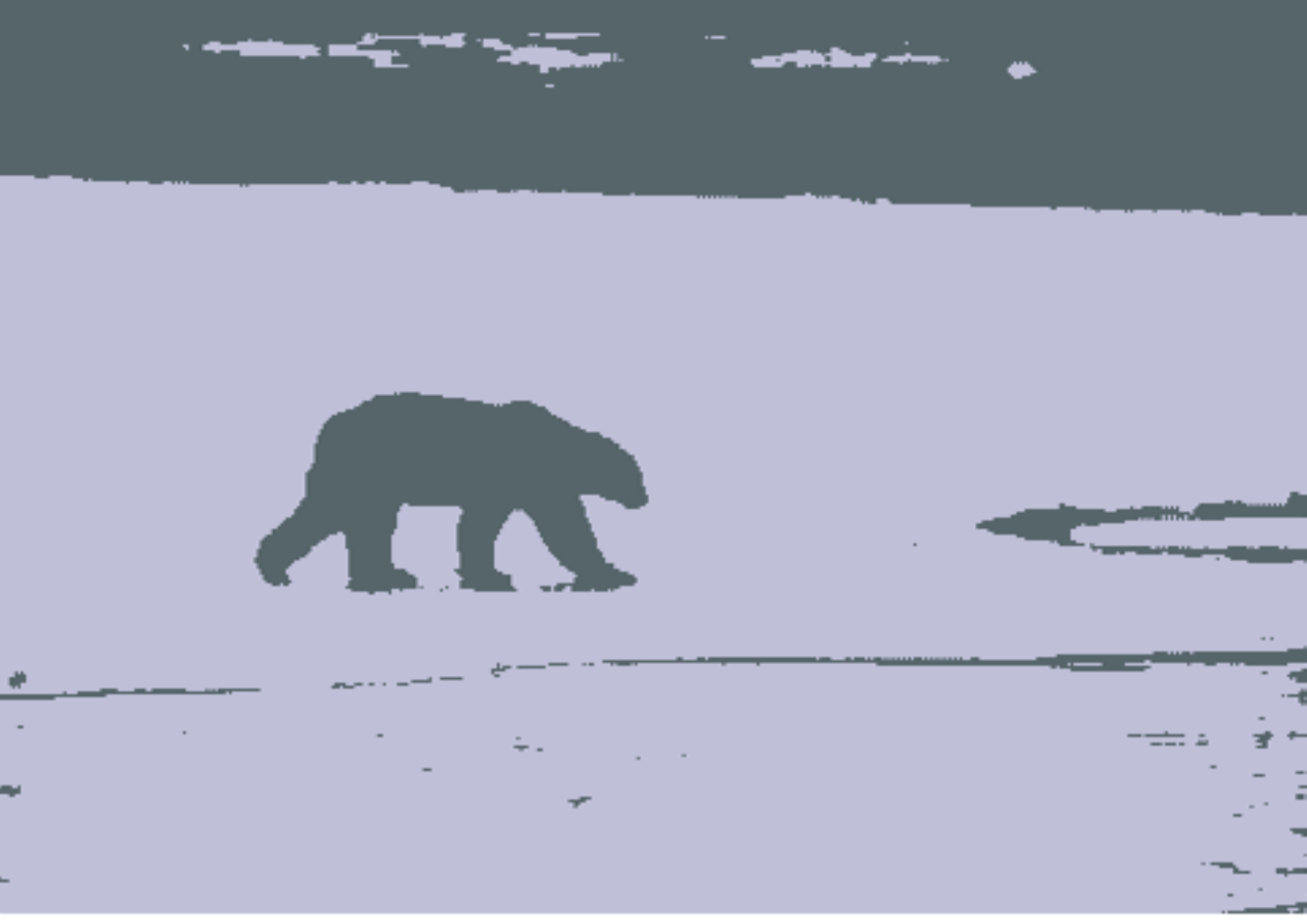}}
		\subfigure{\includegraphics[width=0.16\textwidth]{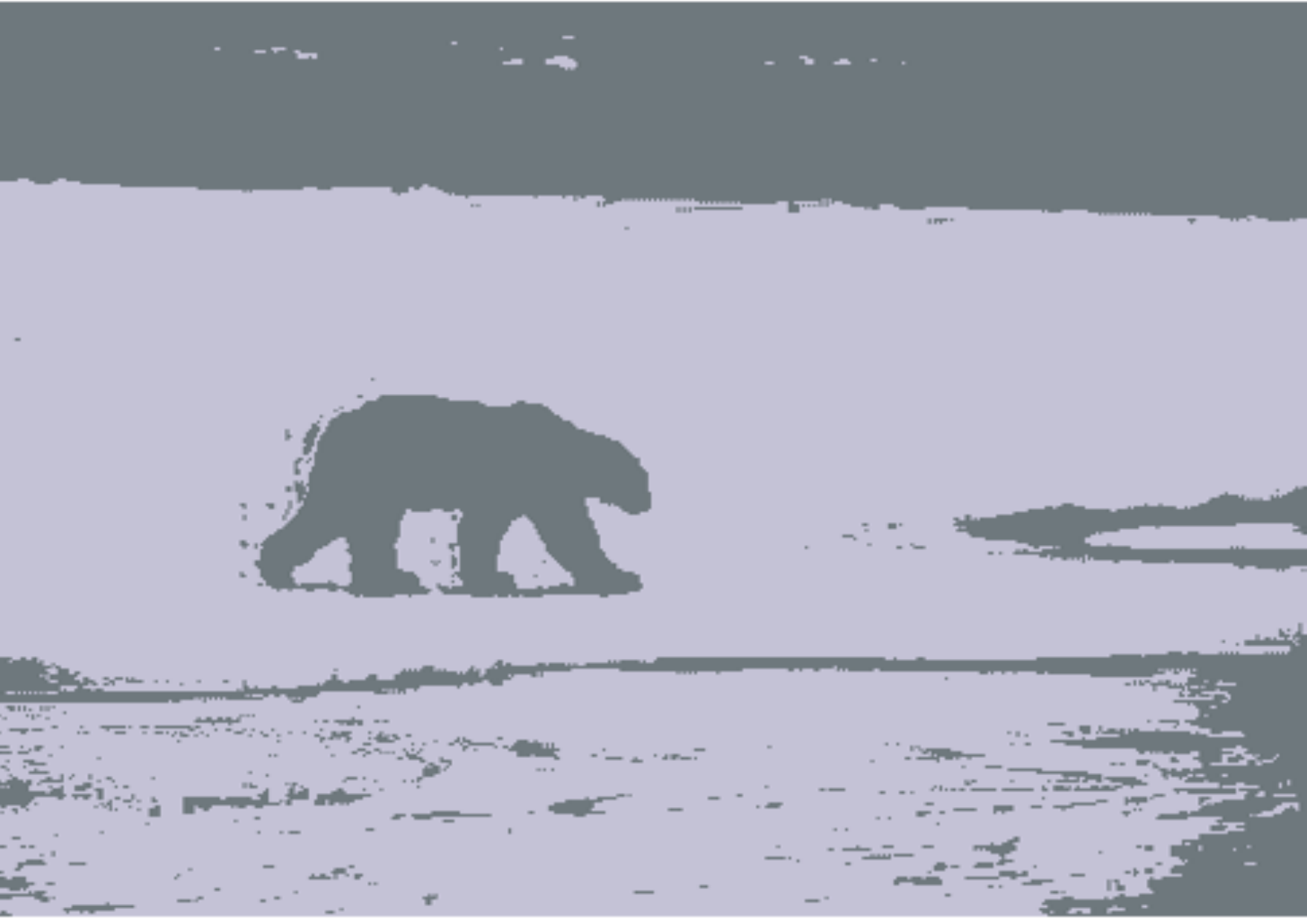}}
		\subfigure{\includegraphics[width=0.16\textwidth]{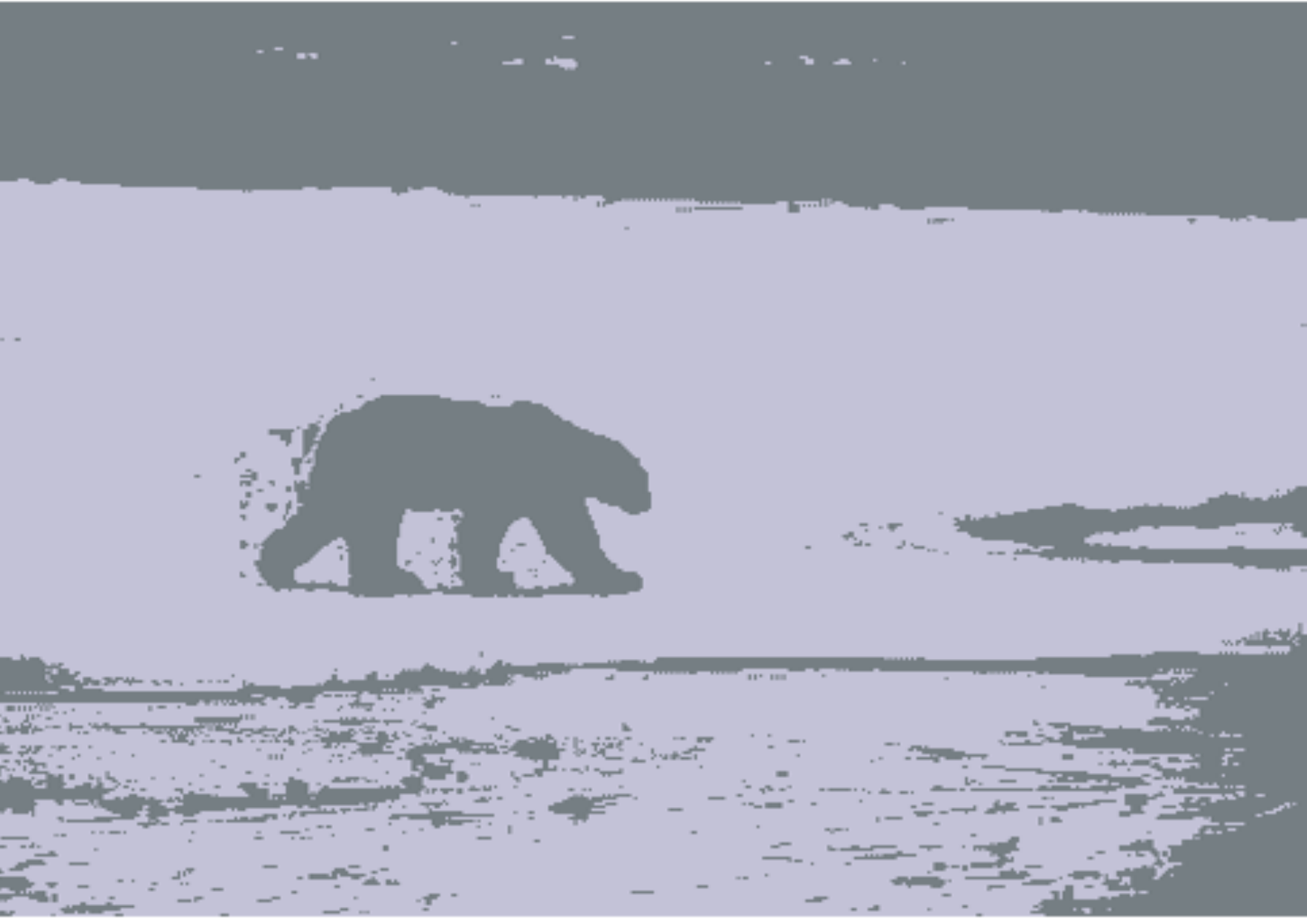}}	
		\setcounter{subfigure}{0}	
		\subfigure[Original]{\includegraphics[width=0.16\textwidth]{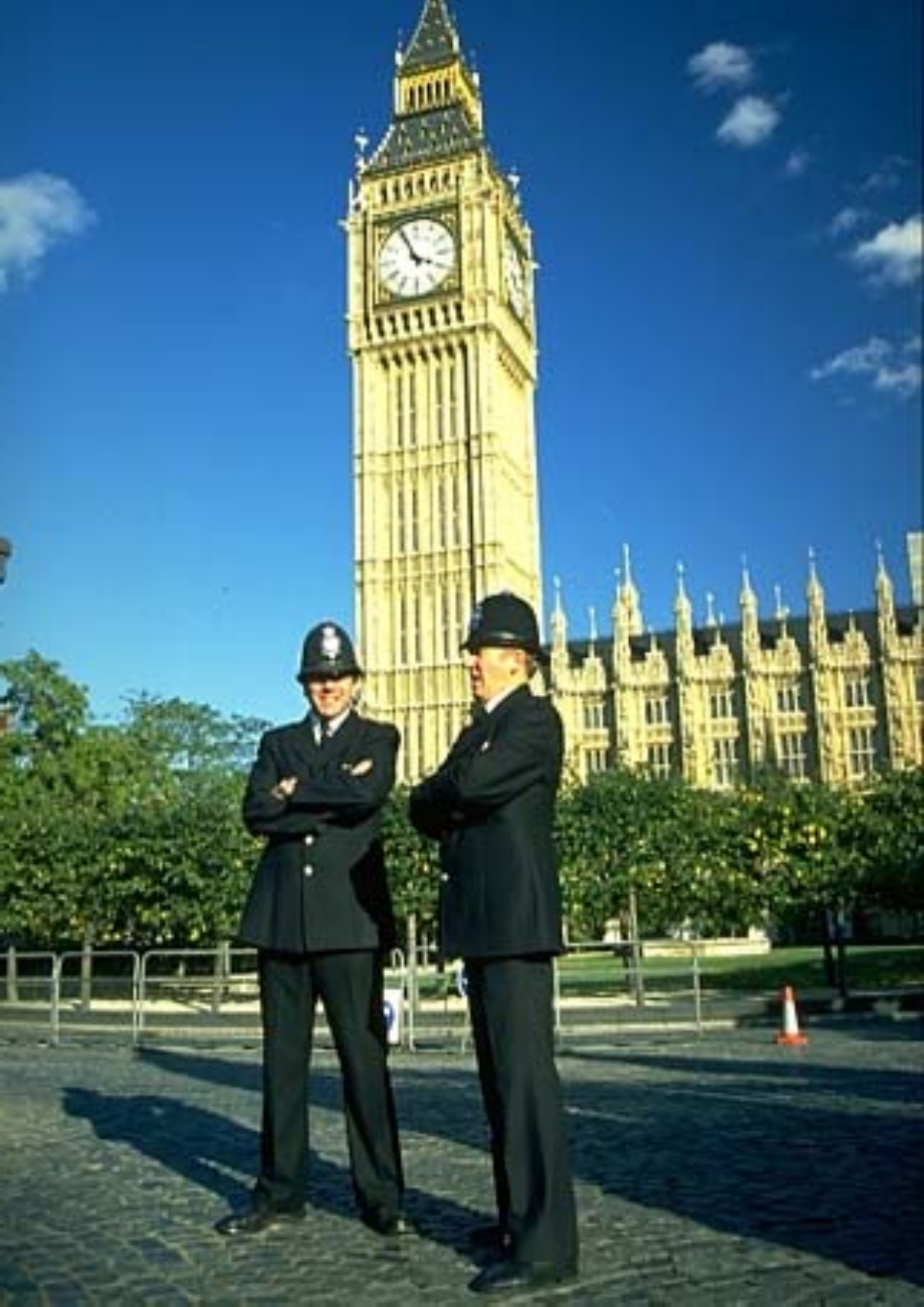}}
		\subfigure[Gaussian]{\includegraphics[width=0.16\textwidth]{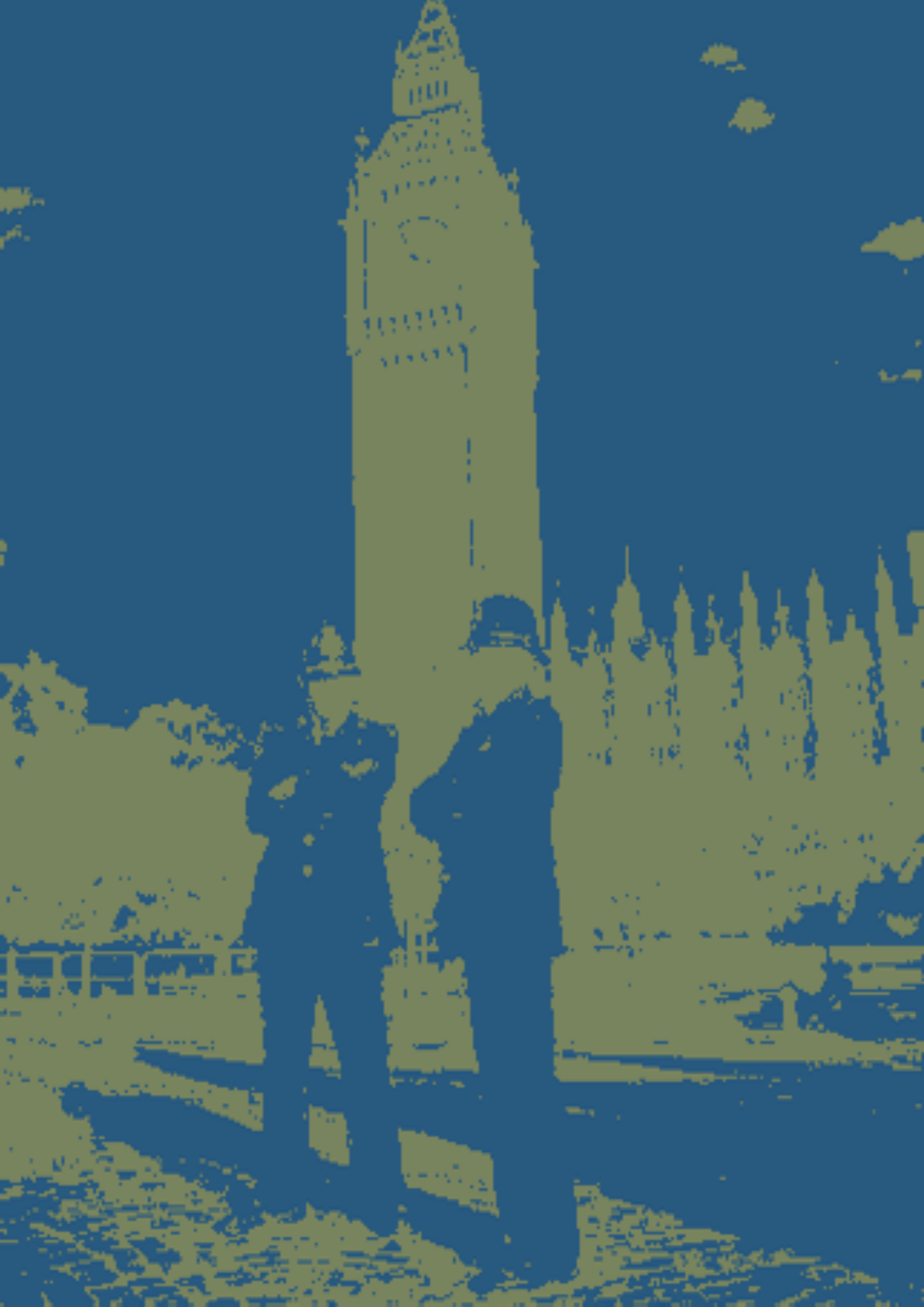}}
		\subfigure[Cauchy]{\includegraphics[width=0.16\textwidth]{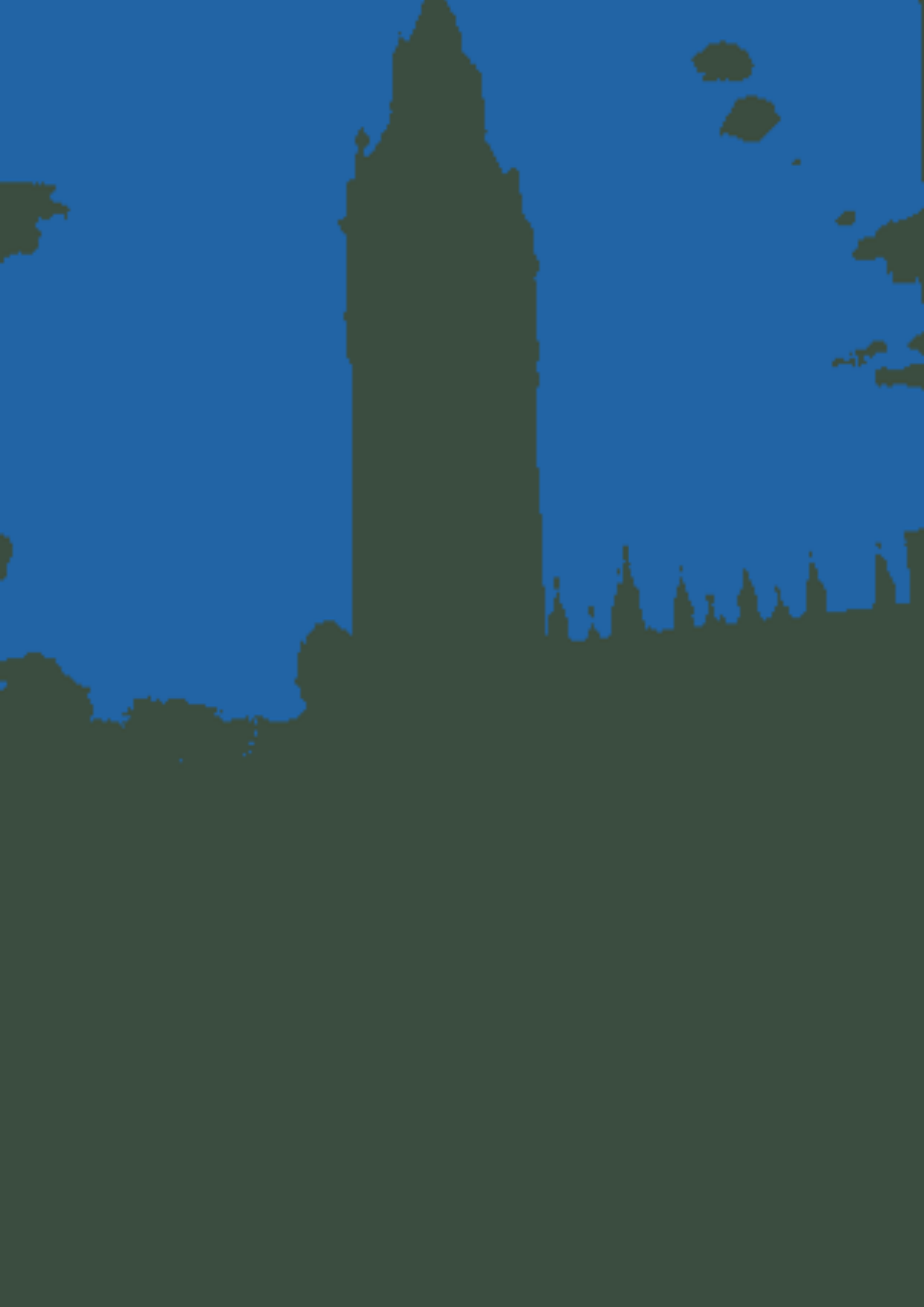}}
		\subfigure[Laplace]{\includegraphics[width=0.16\textwidth]{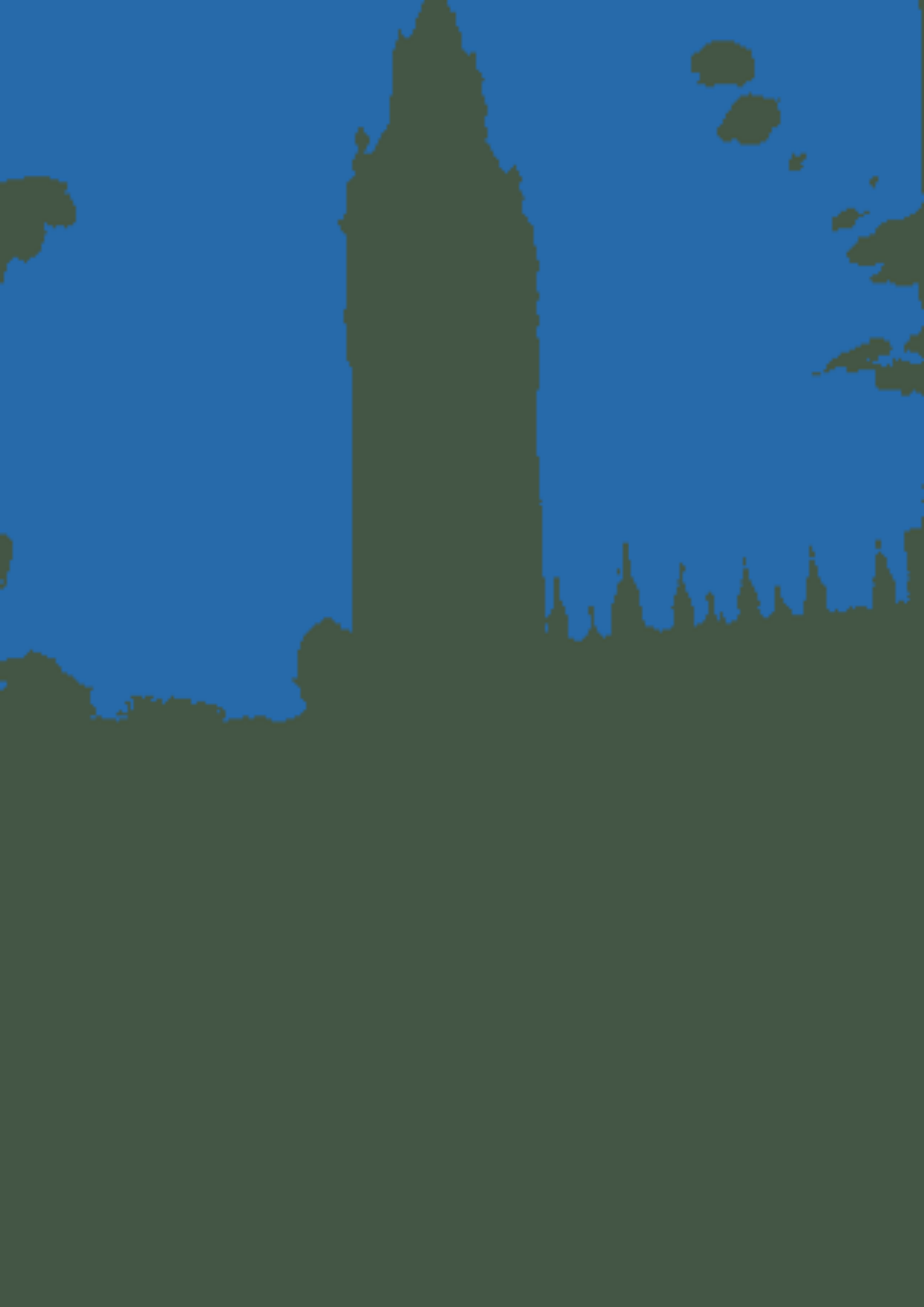}}
		\subfigure[Logistic]{\includegraphics[width=0.16\textwidth]{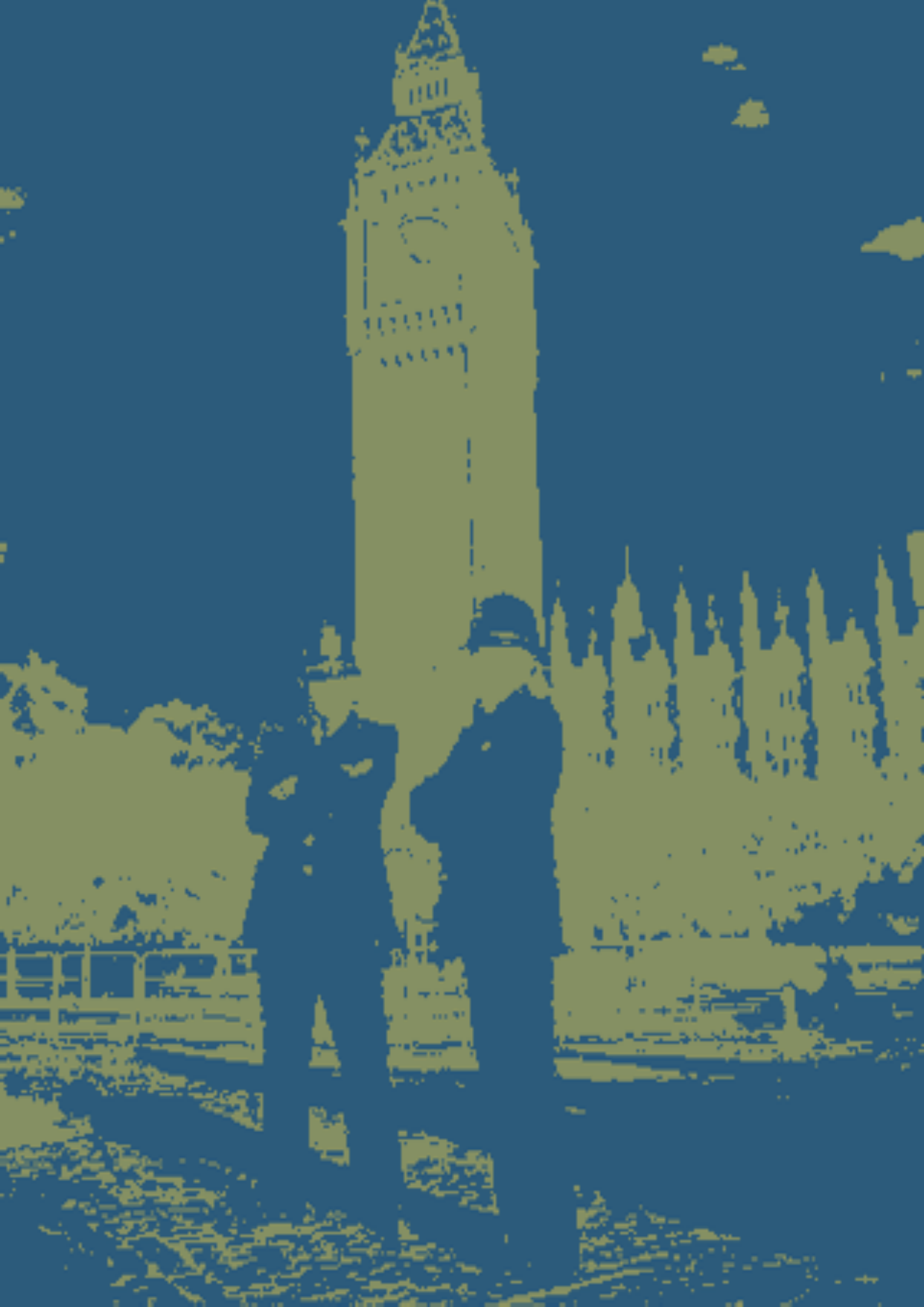}}
		\subfigure[GG1.5]{\includegraphics[width=0.16\textwidth]{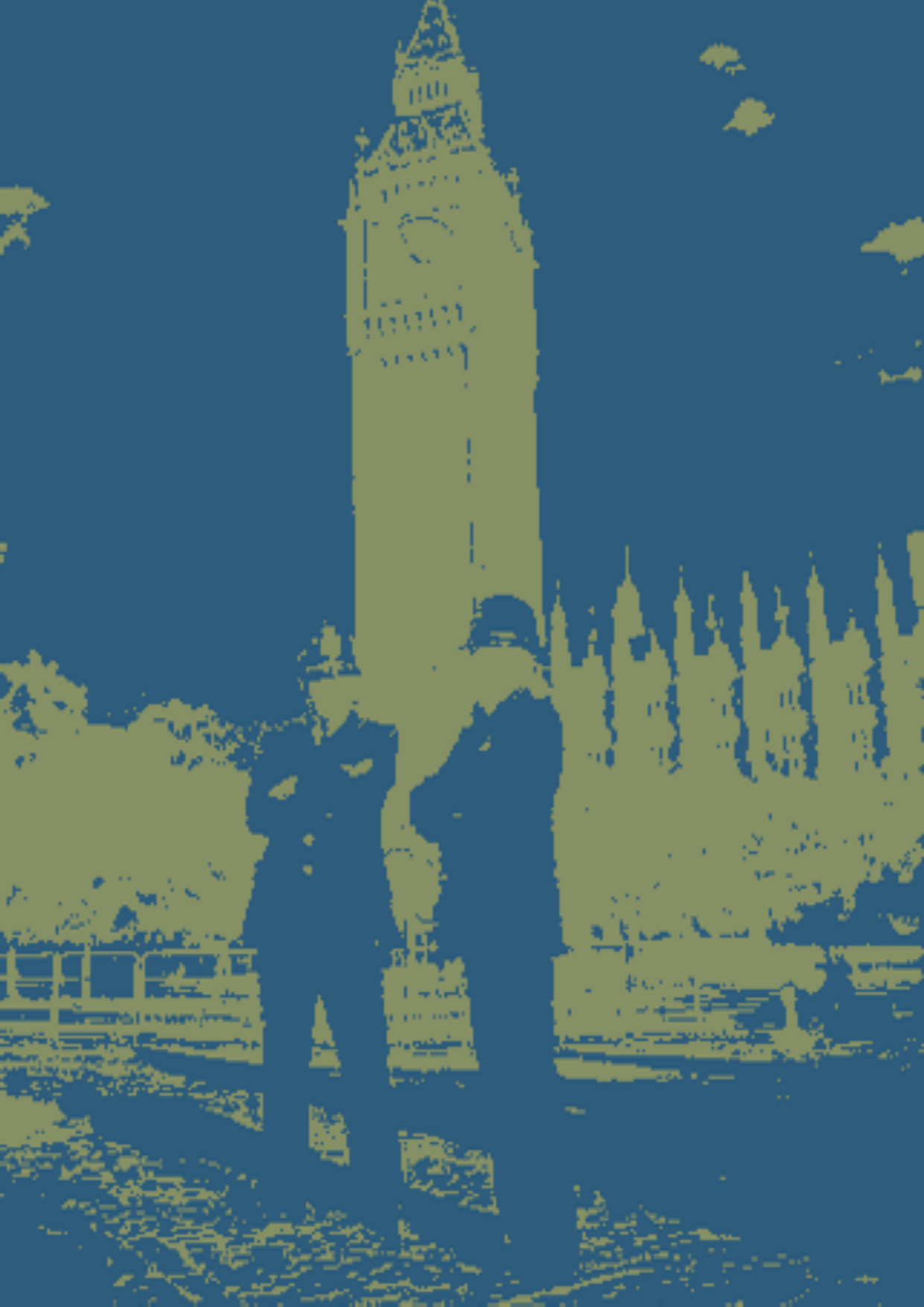}}
	\end{center}
\vspace{-1em}
	\caption{Reconstructed images by Our method when optimising the 5 EMMs.}\label{esifigus}
	\vspace{-1em}
\end{figure*}


\begin{table*}[!htb]
	\centering
	\caption{Comparisons averaged over 500 pictures among Our, the IRA and the RMO for estimating the 5 EMMs}\label{convtable}
	\scriptsize{
		\begin{tabular}{c|ccc|ccc|ccc|ccc|ccc}\hline\hline
			& \multicolumn{3}{c|}{Gaussian} & \multicolumn{3}{c|}{Cauchy} & \multicolumn{3}{c|}{Laplace} & \multicolumn{3}{c|}{GG1.5} & \multicolumn{3}{c}{Logistic} \\
			& Our      & IRA     & RMO     & Our     & IRA     & RMO    & Our     & IRA     & RMO     & Our     & IRA    & RMO    & Our      & IRA     & RMO     \\
			Iterations  & \textbf{56}       & 234     & 954     & \textbf{87}      & 390     & 939    & \textbf{101}     & 288     & 962     & \textbf{61}      & 224    & 982    & \textbf{58}       & 239     & 965     \\
			Time (s)  & \textbf{5.55}     & 11.8    & 165     & \textbf{7.12}    & 16.8    & 159    & \textbf{20.4}    & 42.2    & 486     & \textbf{8.32}    & 14.4   & 205    & \textbf{6.73}     & 13.3    & 177     \\
			Cost & \textbf{12.3}     & 12.3    & 12.3    & \textbf{12.1}    & 12.4    & 12.3   & \textbf{12.2}    & 12.2    & 12.3    & \textbf{12.4}    & 12.4   & 12.4   & \textbf{11.1}     & 11.1    & 11.2 \\\hline
			SSIM \cite{wang2004image} & \multicolumn{3}{c|}{0.5930} & \multicolumn{3}{c|}{0.6052} & \multicolumn{3}{c|}{0.6112} & \multicolumn{3}{c|}{0.5813} & \multicolumn{3}{c}{0.5792} \\\hline\hline   
	\end{tabular}}
\end{table*}
\begin{table*}[!htb]
	\caption{Comparisons among Our (with different manifold solvers), IRA and RMO methods on the BSDS500 image patches}\label{diffsolvers}
	\scriptsize{\resizebox{\textwidth}{!}{
			\begin{tabular}{c|ccccc|ccccc}\hline \hline
				& \multicolumn{5}{c|}{Gaussian}                         & \multicolumn{5}{c}{Cauchy}                           \\
				{$K, M=3, 27$} & Our             & Our (STP)             & Our (LBF)              & IRA             &   RMO           & Our             & Our (STP)           & Our (LBF)           & IRA   & RMO          \\\hline
				Time (s)    & \textbf{28.3$\pm$7.62} & 210$\pm$109 & 102$\pm$8.67 & 155$\pm$37.0 & 486$\pm$197 & \textbf{118$\pm$72.9} & 224$\pm$106 & 302$\pm$139 & ---- & 556$\pm$26.2\\
				Iterations       & \textbf{109$\pm$27.5}  &  749$\pm$390 & 318$\pm$37.3 & 930$\pm$221  & 844$\pm$342 & \textbf{416$\pm$254} & 817$\pm$387 & 670$\pm$253 & ---- & 983$\pm$54.4 \\
				Cost           &  72.8$\pm$1.30   & 73.6$\pm$6.30 & \textbf{72.0$\pm$1.84} & 73.9$\pm$2.01  & 75.5$\pm$6.72 & 70.7$\pm$1.53  & \textbf{69.5$\pm$4.63}  & 70.5$\pm$1.47 & ----  & 70.7$\pm$1.82 \\	\hline
				{$K, M=9, 75$}                   & Our             & Our (STP)             & Our (LBF)              & IRA             &   RMO           & Our             & Our (STP)           & Our (LBF)           & IRA   & RMO          \\\hline
				Time (s)    & 170$\pm$23.3 & \textbf{125$\pm$34.1} & 1609$\pm$74.7 & ---- &  4568$\pm$904&\textbf{595$\pm$473} &  2025$\pm$38.6& 898$\pm$157 &----  & 4389$\pm$1229\\
				Iterations       & 90.0$\pm$8.80  & \textbf{63.0$\pm$25.5}  & 998$\pm$7.07 & ----  & 921$\pm$177 &\textbf{295$\pm$230}  & 1e3$\pm$0 & 394$\pm$93.0 & ---- & 918$\pm$259 \\
				Cost           & 160$\pm$3.33   & 178$\pm$21.1 & \textbf{160$\pm$2.89} & ----  & 172$\pm$0.31 & \textbf{170$\pm$0.25}  & 171$\pm$3.32  & 171$\pm$2.37 & ----  & 173$\pm$4.31 \\	\hline
				\hline
	\end{tabular}}}
\vspace{-1em}
\end{table*}
\subsection{BSDS500 dataset:}\label{imagedata}
Convergence speed between Our, the IRA, and the RMO methods was compared over averaged results across the whole 500 pictures in BSDS500, as shown in Table \ref{convtable}. Again, our method converged with the fastest speed and achieved the minimum cost error among the RMO and the IRA methods. 
We further show in Fig. \ref{esifigus} four reconstructed images via our method when optimising the five EMMs. Observe that the images reconstructed from the non-robust distributions (e.g., the Gaussian and the GG1.5 distributions) were more ``noisy'' than those from the robust distributions (e.g., the Cauchy and Laplace distributions); however, these may capture more details in images. Thus, by our method, different EMMs can flexibly model data for different requirements or applications.

Finally, we evaluated our method on modelling the BSDS500 image patches, which is a challenging but important task in practice. More comprehensively, besides the manifold conjugate gradient solver adopted as a default of our method, we further evaluated the manifold steepest descent (denoted as Our (STP)) and the manifold LBFGS (denoted as Our (LBF)) as alternatives to solve the step descent in Algorithm \ref{Alg1}. The results are reported in Table \ref{diffsolvers}. From this table, we can see again that our method consistently achieved the lowest costs across different initialisations. For example, for the Gaussian mixtures, it obtained $72.8\pm1.30$ in the $50$ random initialisations, compared to $73.9\pm2.01$ of the IRA and $75.5\pm6.72$ of the RMO. Also, our method converged with the least computational time and was able, compared to the existing methods, to consistently provide fast, stable and superior optimisation. Furthermore, by employing different solvers, the results achieved by our method are comparable to one another; among the three solvers, the steepest descent solver took slightly more computational time due to its naive update rule, whilst the conjugate gradient solver performed slightly better than the other two solvers in balancing the computational time and cost. Overall, all the three solvers of our method achieved much better results than the IRA and the RMO methods, which also verifies the effectiveness of our re-designed cost. Therefore, we can conclude that for various scenarios and applications, our method was able to consistently yield a superior model with the lowest computational time.

\section{Conclusions}
We have proposed a general framework for a systematic analysis and universal optimisation of the EMMs, and have conclusively demonstrated that this equips EMMs with significantly enhanced flexibility and ease of use in practice. In addition to the general nature and the power of the proposed universal framework for EMMs, we have also verified both analytically and through simulations, that this provides a reliable and robust statistical tool for analysing the EMMs. Furthermore, we have proposed a general solver which consistently attains the optimum for general EMMs. Comprehensive simulations over both synthetic and real-world datasets validate the proposed framework, which is fast, stable and flexible.

\newpage
\appendix
\subsection{Proof of Lemma 1}\label{prooflemma1}

To calculate the Riemannian metric for the covariance matrix, we follow the work of \cite{hiai2009riemannian} to calculate the Hessian of the Boltzman entropy of elliptical distributions. The Boltzman entropy is first obtained as follows,
\begin{equation}
\begin{aligned}
H(\mathbf{x}|\mathbf{{\Sigma}}) &= \int_{\mathbf{x}}p_{\mathbf{x}}(\mathbf{x})\ln p_{\mathbf{x}}(\mathbf{x})d\mathbf{x} \\
&= \int_{\mathbf{x}} p_{\mathbf{x}}(\mathbf{x})[-\frac{1}{2}\ln|\mathbf{{\Sigma}}|+\ln c_M+\ln g(t)]d\mathbf{x}\\
&=-\frac{1}{2}\ln |\mathbf{\Sigma}|+\ln c_M + \int_{\mathbb{R}^{M}}p_\mathcal{R}(t)\ln g(t)dt.
\end{aligned}
\end{equation}
Because $(\ln c_M + \int_{\mathbb{R}^{M}}p_\mathcal{R}(t)\ln g(t)dt)$ is irrelevant to $\mathbf{{\Sigma}}$, the Hessian of $H(\mathbf{x}|\mathbf{{\Sigma}})$ can be calculated as
\begin{equation}
\frac{\partial H(\mathbf{x}|\mathbf{{\Sigma}}+s{\mathbf{\Sigma}_0}+h\mathbf{\Sigma}_1)}{\partial s \partial h}|_{s=0,h=0} = \frac{1}{2}\mathrm{tr}({\mathbf{\Sigma}_0} \mathbf{\Sigma}^{-1} {\mathbf{\Sigma}_1} \mathbf{\Sigma}^{-1}).
\end{equation}	
The Riemannian metric can thus be obtained as $ds^2 =\frac{1}{2} \mathrm{tr}(d\mathbf{\Sigma}\mathbf{{\Sigma}}^{-1}d\mathbf{\Sigma}\mathbf{{\Sigma}}^{-1})$, which is the same as the case for multivariate normal distributions and is the mostly widely used metric.

This completes the proof of Lemma 1.

\subsection{Proof of Theorem 1}\label{prooftheorem1}
	The proof of this property rests upon an expansion $\mathbf{y}_n^T \mathbf{\tilde{\Sigma}}_k^{-1} \mathbf{y}_n$ to become $(\mathbf{x}_n - \bm{\mu}_k)^T \mathbf{\Sigma}_k^{-1} (\mathbf{x}_n - \bm{\mu}_k) + \frac{1}{\lambda_k}$ within derivatives of $\tilde J$. By setting $\nicefrac{\partial \tilde J}{\partial \lambda_k}=0$ and $\nicefrac{\partial \tilde J}{\partial c_k}=0$, we then arrive at
\begin{equation}\label{lambdavalues}
\begin{aligned}
\lambda_k &= -2\frac{\sum_{n=1}^N\tilde{\xi}_{nk}\psi_k(t_{nk}+\frac{1}{\lambda_k}-c_k)}{\sum_{n=1}^N\tilde{\xi}_{nk}},\\
c_k &= -\frac{1}{2}\frac{\sum_{n=1}^N\tilde{\xi}_{nk}}{\sum_{n=1}^N\tilde{\xi}_{nk}\psi_k(t_{nk}+\frac{1}{\lambda_k}-c_k)},
\end{aligned}
\end{equation}
where 
\begin{equation}
\tilde{\xi}_{nk}\! =\! \frac{\pi_k\cdot c_M \cdot \sqrt{c_k\cdot\mathrm{det}(\mathbf{\tilde{\Sigma}}_k^{-1})} \cdot  g_k\left(\mathbf{y}_n^T \mathbf{\tilde{\Sigma}}_k^{-1} \mathbf{y}_n - c_k\right)}{\sum_{k = 1}^K \pi_k\cdot c_M \cdot \sqrt{c_k\cdot\mathrm{det}(\mathbf{\tilde{\Sigma}}_k^{-1})} \cdot  g_k\left(\mathbf{y}_n^T \mathbf{\tilde{\Sigma}}_k^{-1} \mathbf{y}_n \!-\! c_k\right)}.
\end{equation}

By inspecting $\lambda_k = \nicefrac{1}{c_k}$, $\mathrm{det}(\mathbf{\tilde{\Sigma}}_k) = \lambda_k\cdot\mathrm{det}(\mathbf{\Sigma}_k)$ and $\mathbf{y}_n^T \mathbf{\tilde{\Sigma}}_k^{-1} \mathbf{y}_n=(\mathbf{x}_n - \bm{\mu}_k)^T \mathbf{\Sigma}_k^{-1} (\mathbf{x}_n - \bm{\mu}_k) + \frac{1}{\lambda_k}$, we obtain $\tilde{\xi}_{nk} = \xi_{nk}$ and $\psi_k(t_{nk}+\frac{1}{\lambda_k}-c_k) = \psi_k(t_{nk})$.

To prove the equivalence of the optimum of $\mathbf{\tilde{\Sigma}}_k$ in \eqref{reformuJ} and optima of $\bm{\mu}_k$ and $\mathbf{\Sigma}_k$, we directly calculate $\nicefrac{\partial \tilde{J}}{\partial \mathbf{\tilde{\Sigma}}_k}$ in \eqref{reformuJ} and set it to $0$, to yield
\begin{equation}
\mathbf{\tilde{\Sigma}}_k = -2\frac{\sum_{n=1}^N \tilde{\xi}_{nk}\psi_k(t_{nk}+\frac{1}{\lambda_k}-c_k) \mathbf{y}_n\mathbf{y}_n^T}{\sum_{n=1}^N\tilde{\xi}_{nk}}.
\end{equation}
Again, as $\lambda_k = \nicefrac{1}{c_k}$ and $\tilde{\xi}_{nk} = \xi_{nk}$, we arrive at
\begin{equation}
\begin{aligned}
\mathbf{\tilde{\Sigma}}_k  &= -2\frac{\sum_{n=1}^N \xi_{nk}\psi_k(t_{nk}) \mathbf{y}_n\mathbf{y}_n^T}{\sum_{n=1}^N\xi_{nk}} \\
&= -2\frac{\sum_{n=1}^N \xi_{nk}\psi_k(t_{nk})}{\sum_{n=1}^N\xi_{nk}} \begin{bmatrix}
\mathbf{x}_n\mathbf{x}_n^T & \mathbf{x}_n \\
\mathbf{x}_n^T & 1 \\
\end{bmatrix}.
\end{aligned}
\end{equation}
By substituting $\bm{\mu}_k$ and $\mathbf{\Sigma}_k$ of \eqref{solution} and $\lambda_k$ in \eqref{lambdavalues}, we have
\begin{equation}
\mathbf{\tilde{\Sigma}}_k = \begin{bmatrix}
\mathbf{\Sigma}_k + \lambda_k\bm{\mu}_k \bm{\mu}_k^T & \lambda_k\bm{\mu}_k\\
\lambda_k\bm{\mu}_k^T & \lambda_k\\
\end{bmatrix},
\end{equation}
which means that the optimum value $\mathbf{\tilde{\Sigma}}_k$ is exactly the reformulated form by the optimum values of \eqref{solution}.

This completes the proof of Theorem 1.
\subsection{Proof of Lemma 3}\label{prooflemma3}
	We here denote the contaminated distribution $\mathcal{F} = (1-\epsilon)\mathcal{F}_{\mathbf{x}}+\epsilon\mathcal{F}_{\mathbf{x}_0}$, where $\epsilon$ is the proportion of outliers; $\mathcal{F}_{\mathbf{x}}$ is the true distribution of $\mathbf{x}$ and $\mathcal{F}_{\mathbf{x}_0}$ is the point-mass distribution at $\mathbf{x}_0$. For simplicity, we employ $t$ to denote $(\mathbf{x}-\bm{\mu}_j)^T\mathbf{\Sigma}_j^{-1}(\mathbf{x}-\bm{\mu}_j)$ and $t_0$ to denote $(\mathbf{x}_0-\bm{\mu}_j)^T\mathbf{\Sigma}_j^{-1}(\mathbf{x}_0-\bm{\mu}_j)$. Then, the maximum log-likelihood estimation on the $\mathbf{\Sigma}_j$ of $\mathcal{E}_j(\bm{\mu}_j, \mathbf{\Sigma}_j, g_j)$ becomes
\begin{equation}\label{appendIF1}
\begin{aligned}
&(1-\epsilon)\mathbb{E}[\xi_j(\mathbf{x})\psi_j(t)(\mathbf{x}-\bm{\mu}_j)(\mathbf{x}-\bm{\mu}_j)^T+\frac{1}{2}\xi_j(\mathbf{x})\mathbf{\Sigma}_j] \\
&+\epsilon\xi_j(\mathbf{x}_0)\psi_j(t_0)(\mathbf{x}_0-\bm{\mu}_j)(\mathbf{x}_0-\bm{\mu}_j)^T+\epsilon\frac{\xi_j(\mathbf{x}_0)}{2}\mathbf{\Sigma}_j = 0.
\end{aligned}
\end{equation}	

We first calculate the IF (denoted by $\mathcal{I}$ in the proof) when $\mathbf{\Sigma}_j=\mathbf{I}$ and $\bm{\mu}_j = \mathbf{0}$. Thus, we have $t = \mathbf{x}^T\mathbf{x}$ and $t_0 = \mathbf{x}_0^T\mathbf{x}_0$ in the following proof. Then, according to the definition of IF, we differentiate \eqref{appendIF1} with respect to $\epsilon$ and when it approaches $0$, we arrive at
\begin{equation}\label{appendIF2}
\begin{aligned}
&\mathbb{E}\big[\frac{\partial \xi_j(\mathbf{x})}{\partial \epsilon}|_{\epsilon = 0}\psi_j(t)\mathbf{x}\mathbf{x}^T \\
&\!+\! \xi_j(\mathbf{x})\frac{\partial\psi_j(\mathbf{x}^T\mathbf{\Sigma}_j^{-1}\mathbf{x})}{\partial \epsilon}|_{\epsilon=0}\mathbf{x}\mathbf{x}^T\!+\!\frac{1}{2}\frac{\partial \xi_j(\mathbf{x})}{\partial \epsilon}|_{\epsilon = 0}\mathbf{I}\!+\!\frac{1}{2}\xi_j(\mathbf{x})\mathcal{I}\big]\\
&+ \xi_j(\mathbf{x}_0)\psi_j(t_0)\mathbf{x}_0\mathbf{x}_0^T+\frac{1}{2}\xi_j(\mathbf{x}_0)\mathbf{I} = 0.
\end{aligned}
\end{equation}

In addition, we obtain
\begin{equation}\label{appendIF3}
\begin{aligned}
\frac{\partial \xi_j(\mathbf{x})}{\partial \epsilon}|_{\epsilon = 0} &= (\xi_j(\mathbf{x}) - \xi_j^2(\mathbf{x}))\cdot(-\frac{1}{2}\mathrm{tr}(\mathcal{I})-\psi_j(t)\mathbf{x}^T\mathcal{I}\mathbf{x}),\\
\frac{\partial \xi_j(\mathbf{x})}{\partial \epsilon}|_{\epsilon = 0} &= -\psi'_j(t)\mathbf{x}^T\mathcal{I}\mathbf{x}.
\end{aligned}
\end{equation}

By combining \eqref{appendIF2} and \eqref{appendIF3}, we arrive at
	\begin{equation}\label{appendIF4}
	\begin{aligned}
	\mathbb{E}\big[(\xi_j(\mathbf{x}) \!-\! \xi_j^2(\mathbf{x}))&\!\cdot\!(-\frac{1}{2}\mathrm{tr}(\mathcal{I})\!-\!\psi_j(t)\mathbf{x}^T\mathcal{I}\mathbf{x})\!\cdot\!(\psi_j(t)\mathbf{x}\mathbf{x}^T\!+\!\frac{1}{2}\mathbf{I})\\
	&-\xi_j(\mathbf{x})\psi'_j(t)(\mathbf{x}^T\mathcal{I}\mathbf{x})\mathbf{x}\mathbf{x}^T+\frac{1}{2}\xi_j(\mathbf{x})\mathcal{I}\big]\\
	&~~~~+ \xi_j(\mathbf{x}_0)\psi_j(t_0)\mathbf{x}_0\mathbf{x}_0^T+\frac{1}{2}\xi_j(\mathbf{x}_0)\mathbf{I} = 0.
	\end{aligned}
	\end{equation}

It should be pointed out that $(\mathbf{x}-\bm{\mu})^T\mathbf{\Sigma}^{-1}(\mathbf{x}-\bm{\mu})$ has the same distribution as $\mathcal{R}^2$. It is thus independent of $\frac{\Sigma^{-\nicefrac{1}{2}}(\mathbf{x}-\bm{\mu})}{\sqrt{(\mathbf{x}-\bm{\mu})^T\mathbf{\Sigma}(\mathbf{x}-\bm{\mu})}}$ (denoted by $\mathbf{u}$), which has the same distribution as $\mathcal{S}$ (i.e., uniform distribution). For mixing components, when data are well-separated, those $\mathbf{x}$ that do not belong to the $j$-th cluster have extremely low $\xi_j(\mathbf{x})$. In other words, the expectation in \eqref{appendIF4} is dominated by the expectation of the data which belong to the $j$-th cluster. Therefore, to calculate the expectation, we can treat the quadratic term $(\mathbf{x}-\mathbf{0})^T\mathbf{I}(\mathbf{x}-\mathbf{0})=\mathbf{x}^T\mathbf{x}$ (i.e., $t$) as independent of the normalised term $\frac{\mathbf{I}^{-\nicefrac{1}{2}}(\mathbf{x}-\mathbf{0})}{\sqrt{t}} = \frac{\mathbf{x}}{\sqrt{t}}$ (i.e., $\mathbf{u}$) of the $j$-th cluster.

Based on this approximation, we can rewrite \eqref{appendIF4} as
\begin{equation}\label{appendIF5}
\begin{aligned}
\mathbb{E}\big[(\xi_j(t) \!-\! \xi_j^2(t))&\!\cdot\!(\!-\!\frac{1}{2}\mathrm{tr}(\mathcal{I})\!-\!\psi_j(t)t\cdot\mathbf{u}^T\mathcal{I}\mathbf{u})\!\cdot\!(\psi_j(t)t\!\cdot\!\mathbf{u}\mathbf{u}^T\!+\!\frac{1}{2}\mathbf{I})\\
&\!-\!\xi_j(t)\psi'_j(t)t^2(\mathbf{u}^T\mathcal{I}\mathbf{u})\mathbf{u}\mathbf{u}^T\!+\!\frac{1}{2}\xi_j(t)\mathcal{I}\big]\\
&~~~~+ \xi_j(\mathbf{x}_0)\psi_j(t_0)\mathbf{x}_0\mathbf{x}_0^T+\frac{1}{2}\xi_j(\mathbf{x}_0)\mathbf{I} = 0.
\end{aligned}
\end{equation}	

Moreover, as $\mathbf{u}$ is uniformly distributed, we have $\mathbb{E}[\mathbf{u}\mathbf{u}^T]=\frac{1}{M}\mathbf{I}$, $\mathbb{E}[\mathbf{u}^T\mathcal{I}\mathbf{u}]=\frac{1}{M}\mathrm{tr}(\mathcal{I})$ and $\mathbb{E}[(\mathbf{u}^T\mathcal{I}\mathbf{u})\mathbf{u}\mathbf{u}^T]=\frac{1}{M(M+1)}(\mathcal{I}+\mathrm{tr}(\mathcal{I})\mathbf{I})$. Thus, we arrive at
\begin{equation}\label{appendIF6}
\begin{aligned}
&\!-\! \frac{(\mathbb{E}\big[(\xi_j(t) \!-\! \xi_j^2(t))\psi_j^2(t)t^2\big]+\mathbb{E}\big[\xi_j(t)\psi'_j(t)t^2\big])(\mathcal{I}+\mathrm{tr}(\mathcal{I})\mathbf{I})}{M(M+1)}\\
&\!-\!\frac{\mathbb{E}\big[(\xi_j(t) \!-\! \xi_j^2(t))\psi_j(t)t\big]\mathrm{tr}(\mathcal{I})}{2M}\mathbf{I} \!-\!\frac{\mathbb{E}[(\xi_j(t) \!-\! \xi_j^2(t))]\mathrm{tr}(\mathcal{I})}{4}\mathbf{I} \\
&- \frac{1}{2M}\mathbb{E}[(\xi_j(t) - \xi_j^2(t))\psi_j(t)t]\mathrm{tr}(\mathcal{I})\mathbf{I}+\frac{\pi_j}{2}\mathcal{I}\\
&+ \xi_j(\mathbf{x}_0)\psi_j(t_0)\mathbf{x}_0\mathbf{x}_0^T+\frac{1}{2}\xi_j(\mathbf{x}_0)\mathbf{I} = 0.
\end{aligned}
\end{equation}

With $w_1$ and $w_2$ in \eqref{defini}, we can re-write \eqref{appendIF6} as
\begin{equation}\label{appendIF7}
w_2\mathcal{I} = w_1 \mathrm{tr}(\mathcal{I})\mathbf{I}-\xi_j(\mathbf{x}_0)\psi_j(t_0)\mathbf{x}_0\mathbf{x}_0^T-\frac{1}{2}\xi_j(\mathbf{x}_0)\mathbf{I}.
\end{equation}

Then, by taking the trace on both sides of \eqref{appendIF7}, we have
\begin{equation}
\mathrm{tr}(\mathcal{I}) = \frac{\xi_j(\mathbf{x}_0)\psi_j(t_0)\mathbf{x}_0\mathbf{x}_0^T+\frac{1}{2}\xi_j(\mathbf{x}_0)\mathbf{I}}{Mw_1 - w_2}
\end{equation}

Thus, the IF at point $\mathbf{x}_0$ of the $j$-th cluster, when $\mathbf{\Sigma}_j = \mathbf{I}$, can be obtained as
\begin{equation}
\begin{aligned}
\mathcal{I}(\mathbf{x}_0) = &\left[\frac{2w_1\cdot \xi_j(\mathbf{x}_0)\psi_j(\mathbf{x}_0^T\mathbf{x}_0)\mathbf{x}_0^T\mathbf{x}_0 + w_2 \cdot\xi_j(\mathbf{x}_0)}{2(Mw_1-w_2)w_2}\right]\cdot \mathbf{I} \\
&- \frac{\xi_j(\mathbf{x}_0)\psi_j(\mathbf{x}_0^T\mathbf{x}_0)}{w_2}\mathbf{x}_0\mathbf{x}_0^T.
\end{aligned}
\end{equation}

The IF is then obtained at point $\mathbf{x}_0$ of the $j$-th cluster for general $\mathbf{\Sigma}_j$ and $\bm{\mu}_j$ according to its affine equivalence (i.e., $\mathcal{I}_{\mathbf{\Sigma}_j}(\mathbf{x}_0) = \mathbf{\Sigma}_j^{\nicefrac{1}{2}} \mathcal{I}(\mathbf{\Sigma}_j^{-\nicefrac{1}{2}}(\mathbf{x}_0-\bm{\mu}_j))\mathbf{\Sigma}_j^{\nicefrac{1}{2}}$).

This completes the proof of Lemma 3.
\subsection{Proof of Lemma 4}\label{prooflemma4}
	Similar to the proof of Lemma 3, we have the following equation for estimating $\bm{\mu}_j$ with contaminated distribution $\mathcal{F} = (1-\epsilon)\mathcal{F}_{\mathbf{x}}+\epsilon\mathcal{F}_{\mathbf{x}_0}$:
\begin{equation}\label{appendIFmean1}
\begin{aligned}
(1-\epsilon)\mathbb{E}[\xi_j(\mathbf{x})\psi_j(t)\mathbf{\Sigma}_j^{-1}&(\mathbf{x}-\bm{\mu}_j)]\\
&\! +\! \epsilon \xi_j(\mathbf{x}_0)\psi_j(t_0)\mathbf{\Sigma}_j^{-1}(\mathbf{x}_0-\bm{\mu}_j)=0
\end{aligned}
\end{equation}
Note that in \eqref{appendIFmean1}, for simplicity we also use $t$ to denote $(\mathbf{x}-\bm{\mu}_j)^T\mathbf{\Sigma}_j^{-1}(\mathbf{x}-\bm{\mu}_j)$ and $t_0$ to denote $(\mathbf{x}_0-\bm{\mu}_j)^T\mathbf{\Sigma}_j^{-1}(\mathbf{x}_0-\bm{\mu}_j)$. We here also utilise $\mathcal{I}$ to denote the IF in this proof.

In addition, by differentiating with $\epsilon$ and $\epsilon \rightarrow 0$, we arrive at
\small
\begin{equation}
\begin{aligned}
&\mathbb{E}\big[\frac{\partial \xi_j(\mathbf{x})}{\partial \epsilon}|_{\epsilon = 0}\psi_j(t)\mathbf{\Sigma}_j^{-1}(\mathbf{x}-\bm{\mu}_j)\\
&~~~~+\xi_j(\mathbf{x})\frac{\partial \psi_j(t)}{\partial \epsilon}|_{\epsilon = 0}\mathbf{\Sigma}_j^{-1}(\mathbf{x}-\bm{\mu}_j) - \xi_j(\mathbf{x})\psi_j(t)\mathbf{\Sigma}_j^{-1}\mathcal{I}\big]\\
&~~~~~~~~~~~~~~~~~~~~~~~~~~~ + \xi_j(\mathbf{x}_0)\psi_j(t_0)\mathbf{\Sigma}_j^{-1}(\mathbf{x}_0-\bm{\mu}_j) = 0.
\end{aligned}
\end{equation}
\normalsize

Besides, we also calculate 
\begin{equation}\label{appendIFmean2}
\begin{aligned}
\frac{\partial \xi_j(\mathbf{x})}{\partial \epsilon}|_{\epsilon = 0} & = (\xi_j(\mathbf{x})-\xi_j^2(\mathbf{x}))\cdot (-2\psi_j(t)(\mathbf{x}-\bm{\mu}_j)^T\mathbf{\Sigma}_j^{-1}\mathcal{I})\\
\frac{\partial\psi_j(\mathbf{x})}{\partial \epsilon}|_{\epsilon = 0} & = -2\psi'_j(t)(\mathbf{x}-\bm{\mu}_j)^T\mathbf{\Sigma}_j^{-1}\mathcal{I}.
\end{aligned}
\end{equation}

When data are well separated, we can assume that $t = (\mathbf{x}-\bm{\mu}_j)^T\mathbf{\Sigma}_j^{-1}(\mathbf{x}-\bm{\mu}_j)$ is independent of $\mathbf{u} = \frac{\mathbf{\Sigma}_j^{-\nicefrac{1}{2}}(\mathbf{x}-\bm{\mu}_j)}{\sqrt{t}}$. Therefore, \eqref{appendIFmean2} becomes 
\begin{equation}
\begin{aligned}
&2\!\cdot\!\mathbb{E}\big[(\xi_j(\mathbf{x})-\xi_j^2(\mathbf{x}))\psi_j^2(t)t\big]\!\cdot\!\mathbb{E}\big[\mathbf{u}^T\mathbf{\Sigma}_j^{-\frac{1}{2}}\mathcal{I}\mathbf{u}\big] \\
&\!+\!2\!\cdot\!\mathbb{E}\big[\xi_j(\mathbf{x})\psi'_j(t)t\big]\!\cdot\!\mathbb{E}\big[\mathbf{u}^T\mathbf{\Sigma}_j^{-\frac{1}{2}}\mathcal{I}\mathbf{u}\big]\! +\! \mathbb{E}[\xi_j(\mathbf{x})\psi_j(t)]\mathbf{\Sigma}_j^{-\frac{1}{2}}\mathcal{I}\\
& = \xi_j(\mathbf{x}_0)\psi_j(t_0)\mathbf{\Sigma}_j^{-\frac{1}{2}}(\mathbf{x}_0-\bm{\mu}_j),
\end{aligned}
\end{equation}
which yields $\mathbb{E}[\mathbf{u}^T\mathbf{\Sigma}_j^{-\frac{1}{2}}\mathcal{I}\mathbf{u}] = \frac{1}{M}\mathbf{\Sigma}_j^{-\frac{1}{2}}\mathcal{I}$. Thus, we arrive at 
\small
\begin{equation}
\mathcal{I}\! =\! \frac{\xi_j(\mathbf{x}_0)\psi_j(t_0)(\mathbf{x}_0-\bm{\mu}_j)}{\frac{2}{M}\mathbb{E}[(\xi_j(\mathbf{x})-\xi_j^2(\mathbf{x}))\psi_j^2(t)t]\! +\! \frac{2}{M}\mathbb{E}[\xi_j(\mathbf{x})\psi'_j(t)t]\! +\! \mathbb{E}[\xi_j(\mathbf{x})\psi_j(t)]}.
\end{equation}
\normalsize

This completes the proof of Lemma 4.

\bibliographystyle{IEEEtran}
\bibliography{IEEEfull,ShengxiLi}

\begin{thebibliography}{10}
\providecommand{\url}[1]{#1}
\csname url@samestyle\endcsname
\providecommand{\newblock}{\relax}
\providecommand{\bibinfo}[2]{#2}
\providecommand{\BIBentrySTDinterwordspacing}{\spaceskip=0pt\relax}
\providecommand{\BIBentryALTinterwordstretchfactor}{4}
\providecommand{\BIBentryALTinterwordspacing}{\spaceskip=\fontdimen2\font plus
\BIBentryALTinterwordstretchfactor\fontdimen3\font minus
  \fontdimen4\font\relax}
\providecommand{\BIBforeignlanguage}[2]{{%
\expandafter\ifx\csname l@#1\endcsname\relax
\typeout{** WARNING: IEEEtran.bst: No hyphenation pattern has been}%
\typeout{** loaded for the language `#1'. Using the pattern for}%
\typeout{** the default language instead.}%
\else
\language=\csname l@#1\endcsname
\fi
#2}}
\providecommand{\BIBdecl}{\relax}
\BIBdecl

\bibitem{figueiredo2002unsupervised}
M.~A.~T. Figueiredo and A.~K. Jain, ``Unsupervised learning of finite mixture
  models,'' \emph{IEEE Transactions on Pattern Analysis and Machine
  Intelligence}, vol.~24, no.~3, pp. 381--396, 2002.

\bibitem{zoubir2012robust}
A.~M. Zoubir, V.~Koivunen, Y.~Chakhchoukh, and M.~Muma, ``Robust estimation in
  signal processing: A tutorial-style treatment of fundamental concepts,''
  \emph{IEEE Signal Processing Magazine}, vol.~29, no.~4, pp. 61--80, 2012.

\bibitem{mandic2005data}
D.~P. Mandic, D.~Obradovic, A.~Kuh, T.~Adali, U.~Trutschell, M.~Golz,
  P.~De~Wilde, J.~Barria, A.~Constantinides, and J.~Chambers, ``Data fusion for
  modern engineering applications: {A}n overview,'' in \emph{Proceeding of
  International Conference on Artificial Neural Networks}.\hskip 1em plus 0.5em
  minus 0.4em\relax Springer, 2005, pp. 715--721.

\bibitem{fang2018symmetric}
K.~W. Fang, S.~Kotz, and K.~W. Ng, \emph{Symmetric multivariate and related
  distributions}.\hskip 1em plus 0.5em minus 0.4em\relax London, U.K.: Chapman
  {\&} Hall, 1990.

\bibitem{huber2011robust}
P.~J. Huber, ``Robust statistics,'' in \emph{International Encyclopedia of
  Statistical Science}.\hskip 1em plus 0.5em minus 0.4em\relax Springer, 2011,
  pp. 1248--1251.

\bibitem{holzmann2006identifiability}
H.~Holzmann, A.~Munk, and T.~Gneiting, ``Identifiability of finite mixtures of
  elliptical distributions,'' \emph{Scandinavian Journal of Statistics},
  vol.~33, no.~4, pp. 753--763, 2006.

\bibitem{peel2000robust}
D.~Peel and G.~J. McLachlan, ``Robust mixture modelling using the t
  distribution,'' \emph{Statistics and Computing}, vol.~10, no.~4, pp.
  339--348, 2000.

\bibitem{andrews2012model}
J.~L. Andrews and P.~D. McNicholas, ``Model-based clustering, classification,
  and discriminant analysis via mixtures of multivariate t-distributions,''
  \emph{Statistics and Computing}, vol.~22, no.~5, pp. 1021--1029, 2012.

\bibitem{lin2014capturing}
T.-I. Lin, P.~D. McNicholas, and H.~J. Ho, ``Capturing patterns via
  parsimonious t mixture models,'' \emph{Statistics \& Probability Letters},
  vol.~88, pp. 80--87, 2014.

\bibitem{tan2007multivariate}
S.~Tan and L.~Jiao, ``Multivariate statistical models for image denoising in
  the wavelet domain,'' \emph{International Journal of Computer Vision},
  vol.~75, no.~2, pp. 209--230, 2007.

\bibitem{browne2015mixture}
R.~P. Browne and P.~D. McNicholas, ``A mixture of generalized hyperbolic
  distributions,'' \emph{Canadian Journal of Statistics}, vol.~43, no.~2, pp.
  176--198, 2015.

\bibitem{kent1991redescending}
J.~T. Kent and D.~E. Tyler, ``Redescending {M}-estimates of multivariate
  location and scatter,'' \emph{The Annals of Statistics}, pp. 2102--2119,
  1991.

\bibitem{andrews1974scale}
D.~F. Andrews and C.~L. Mallows, ``Scale mixtures of normal distributions,''
  \emph{Journal of the Royal Statistical Society. Series B (Methodological)},
  pp. 99--102, 1974.

\bibitem{zhang2013multivariate}
T.~Zhang, A.~Wiesel, and M.~S. Greco, ``Multivariate generalized {G}aussian
  distribution: Convexity and graphical models,'' \emph{IEEE Transactions on
  Signal Processing}, vol.~61, no.~16, pp. 4141--4148, 2013.

\bibitem{sra2013geometric}
S.~Sra and R.~Hosseini, ``Geometric optimisation on positive definite matrices
  for elliptically contoured distributions,'' in \emph{Advances in Neural
  Information Processing Systems}, 2013, pp. 2562--2570.

\bibitem{hosseini2015matrix}
R.~Hosseini and S.~Sra, ``Matrix manifold optimization for {G}aussian
  mixtures,'' in \emph{Advances in Neural Information Processing Systems},
  2015, pp. 910--918.

\bibitem{hosseini2017alternative}
------, ``An alternative to {EM} for {G}aussian mixture models: Batch and
  stochastic riemannian optimization,'' \emph{arXiv preprint arXiv:1706.03267},
  2017.

\bibitem{absil2009optimization}
P.-A. Absil, R.~Mahony, and R.~Sepulchre, \emph{Optimization algorithms on
  matrix manifolds}.\hskip 1em plus 0.5em minus 0.4em\relax Princeton
  University Press, 2009.

\bibitem{bonnabel2013stochastic}
S.~Bonnabel, ``Stochastic gradient descent on {R}iemannian manifolds,''
  \emph{IEEE Transactions on Automatic Control}, vol.~58, no.~9, pp.
  2217--2229, 2013.

\bibitem{liu2017accelerated}
Y.~Liu, F.~Shang, J.~Cheng, H.~Cheng, and L.~Jiao, ``Accelerated first-order
  methods for geodesically convex optimization on riemannian manifolds,'' in
  \emph{Advances in Neural Information Processing Systems}, 2017, pp.
  4868--4877.

\bibitem{zhang2016riemannian}
H.~Zhang, S.~J. Reddi, and S.~Sra, ``Riemannian {SVRG}: {F}ast stochastic
  optimization on {R}iemannian manifolds,'' in \emph{Advances in Neural
  Information Processing Systems}, 2016, pp. 4592--4600.

\bibitem{zhang2016first}
H.~Zhang and S.~Sra, ``First-order methods for geodesically convex
  optimization,'' in \emph{Proceeding of Conference on Learning Theory}, 2016,
  pp. 1617--1638.

\bibitem{reddi2016stochastic}
S.~J. Reddi, A.~Hefny, S.~Sra, B.~Poczos, and A.~Smola, ``Stochastic variance
  reduction for nonconvex optimization,'' in \emph{Proceeding of International
  Conference on Machine Learning}, 2016, pp. 314--323.

\bibitem{zhang2018towards}
H.~Zhang and S.~Sra, ``Towards {R}iemannian accelerated gradient methods,''
  \emph{arXiv preprint arXiv:1806.02812}, 2018.

\bibitem{rao1945information}
C.~RAO, ``Information and accuracy attanaible in the estimation of statistical
  parameters,'' \emph{Bull. Calcutta Math. Soc.}, vol.~37, pp. 81--91, 1945.

\bibitem{skovgaard1984riemannian}
L.~T. Skovgaard, ``A {R}iemannian geometry of the multivariate normal model,''
  \emph{Scandinavian Journal of Statistics}, pp. 211--223, 1984.

\bibitem{atkinson1981rao}
C.~Atkinson and A.~F. Mitchell, ``Rao's distance measure,'' \emph{Sankhy{\=a}:
  The Indian Journal of Statistics, Series A}, pp. 345--365, 1981.

\bibitem{amari1982differential}
S.-I. Amari, ``Differential geometry of curved exponential families-curvatures
  and information loss,'' \emph{The Annals of Statistics}, pp. 357--385, 1982.

\bibitem{amari1998natural}
------, ``Natural gradient works efficiently in learning,'' \emph{Neural
  Computation}, vol.~10, no.~2, pp. 251--276, 1998.

\bibitem{nielsen2009statistical}
F.~Nielsen and V.~Garcia, ``Statistical exponential families: {A} digest with
  flash cards,'' \emph{arXiv preprint arXiv:0911.4863}, 2009.

\bibitem{malago2015information}
L.~Malag{\`o} and G.~Pistone, ``Information geometry of the {G}aussian
  distribution in view of stochastic optimization,'' in \emph{Proceedings of
  the 2015 ACM Conference on Foundations of Genetic Algorithms XIII}, 2015, pp.
  150--162.

\bibitem{khan2018fast}
M.~E. Khan and D.~Nielsen, ``Fast yet simple natural-gradient descent for
  variational inference in complex models,'' in \emph{2018 International
  Symposium on Information Theory and Its Applications (ISITA)}.\hskip 1em plus
  0.5em minus 0.4em\relax IEEE, 2018, pp. 31--35.

\bibitem{cambanis1981theory}
S.~Cambanis, S.~Huang, and G.~Simons, ``On the theory of elliptically contoured
  distributions,'' \emph{Journal of Multivariate Analysis}, vol.~11, no.~3, pp.
  368--385, 1981.

\bibitem{frahm2004generalized}
G.~Frahm, ``Generalized elliptical distributions: {T}heory and applications,''
  Ph.D. dissertation, Universit{\"a}t zu K{\"o}ln, 2004.

\bibitem{arslan2004convergence}
O.~Arslan, ``Convergence behavior of an iterative reweighting algorithm to
  compute multivariate {M}-estimates for location and scatter,'' \emph{Journal
  of Statistical Planning and Inference}, vol. 118, no. 1-2, pp. 115--128,
  2004.

\bibitem{lindsay1995mixture}
B.~G. Lindsay, ``Mixture models: {T}heory, geometry and applications,'' in
  \emph{NSF-CBMS Regional Conference Series in Probability and
  Statistics}.\hskip 1em plus 0.5em minus 0.4em\relax JSTOR, 1995, pp. i--163.

\bibitem{sun2010robust}
J.~Sun, A.~Kaban, and J.~M. Garibaldi, ``Robust mixture modeling using the
  {P}earson type {VII} distribution,'' in \emph{Proceeding of The IEEE
  International Joint Conference on Neural Networks (IJCNN)}.\hskip 1em plus
  0.5em minus 0.4em\relax IEEE, 2010, pp. 1--7.

\bibitem{mukhopadhyay2019estimating}
M.~Mukhopadhyay, D.~Li, and D.~B. Dunson, ``Estimating densities with nonlinear
  support using {F}isher-{G}aussian kernels,'' \emph{arXiv preprint
  arXiv:1907.05918}, 2019.

\bibitem{genton2005generalized}
M.~G. Genton and N.~M. Loperfido, ``Generalized skew-elliptical distributions
  and their quadratic forms,'' \emph{Annals of the Institute of Statistical
  Mathematics}, vol.~57, no.~2, pp. 389--401, 2005.

\bibitem{wiesel2012geodesic}
A.~Wiesel, ``Geodesic convexity and covariance estimation,'' \emph{IEEE
  Transactions on Signal Processing}, vol.~60, no.~12, pp. 6182--6189, 2012.

\bibitem{karlis2007finite}
D.~Karlis and L.~Meligkotsidou, ``Finite mixtures of multivariate {P}oisson
  distributions with application,'' \emph{Journal of Statistical Planning and
  Inference}, vol. 137, no.~6, pp. 1942--1960, 2007.

\bibitem{lopez2011stochastic}
E.~L{\'o}pez-Rubio, ``Stochastic approximation learning for mixtures of
  multivariate elliptical distributions,'' \emph{Neurocomputing}, vol.~74,
  no.~17, pp. 2972--2984, 2011.

\bibitem{browne2012model}
R.~P. Browne, P.~D. McNicholas, and M.~D. Sparling, ``Model-based learning
  using a mixture of mixtures of {G}aussian and uniform distributions,''
  \emph{IEEE Transactions on Pattern Analysis and Machine Intelligence},
  vol.~34, no.~4, pp. 814--817, 2012.

\bibitem{redner1984mixture}
R.~A. Redner and H.~F. Walker, ``Mixture densities, maximum likelihood and the
  em algorithm,'' \emph{SIAM Review}, vol.~26, no.~2, pp. 195--239, 1984.

\bibitem{jordan1994hierarchical}
M.~I. Jordan and R.~A. Jacobs, ``Hierarchical mixtures of experts and the {EM}
  algorithm,'' \emph{Neural Computation}, vol.~6, no.~2, pp. 181--214, 1994.

\bibitem{naim2012convergence}
I.~Naim and D.~Gildea, ``Convergence of the {EM} algorithm for gaussian
  mixtures with unbalanced mixing coefficients,'' \emph{arXiv preprint
  arXiv:1206.6427}, 2012.

\bibitem{salakhutdinov2003optimization}
R.~Salakhutdinov, S.~T. Roweis, and Z.~Ghahramani, ``Optimization with {EM} and
  expectation-conjugate-gradient,'' in \emph{Proceedings of the 20th
  International Conference on Machine Learning (ICML-03)}, 2003, pp. 672--679.

\bibitem{jeuris2012survey}
B.~Jeuris, R.~Vandebril, and B.~Vandereycken, ``A survey and comparison of
  contemporary algorithms for computing the matrix geometric mean,''
  \emph{Electronic Transactions on Numerical Analysis}, vol.~39, no.
  EPFL-ARTICLE-197637, pp. 379--402, 2012.

\bibitem{sra2012new}
S.~Sra, ``A new metric on the manifold of kernel matrices with application to
  matrix geometric means,'' in \emph{Advances in Neural Information Processing
  Systems}, 2012, pp. 144--152.

\bibitem{jayasumana2013kernel}
S.~Jayasumana, R.~Hartley, M.~Salzmann, H.~Li, and M.~Harandi, ``Kernel methods
  on the {R}iemannian manifold of symmetric positive definite matrices,'' in
  \emph{Proceeding of The IEEE Conference on Computer Vision and Pattern
  Recognition}.\hskip 1em plus 0.5em minus 0.4em\relax IEEE, 2013, pp. 73--80.

\bibitem{faraki2018comprehensive}
M.~Faraki, M.~T. Harandi, and F.~Porikli, ``A comprehensive look at coding
  techniques on {R}iemannian manifolds,'' \emph{IEEE Transactions on Neural
  Networks and Learning Systems}, 2018.

\bibitem{burbea1982entropy}
J.~Burbea and C.~R. Rao, ``Entropy differential metric, distance and divergence
  measures in probability spaces: {A} unified approach,'' \emph{Journal of
  Multivariate Analysis}, vol.~12, no.~4, pp. 575--596, 1982.

\bibitem{hiai2009riemannian}
F.~Hiai and D.~Petz, ``Riemannian metrics on positive definite matrices related
  to means,'' \emph{Linear Algebra and its Applications}, vol. 430, no. 11-12,
  pp. 3105--3130, 2009.

\bibitem{moakher2011riemannian}
M.~Moakher and M.~Z{\'e}ra{\"\i}, ``The riemannian geometry of the space of
  positive-definite matrices and its application to the regularization of
  positive-definite matrix-valued data,'' \emph{Journal of Mathematical Imaging
  and Vision}, vol.~40, no.~2, pp. 171--187, 2011.

\bibitem{calvo1990distance}
M.~Calvo and J.~M. Oller, ``A distance between multivariate normal
  distributions based in an embedding into the {S}iegel group,'' \emph{Journal
  of Multivariate Analysis}, vol.~35, no.~2, pp. 223--242, 1990.

\bibitem{jin2016local}
C.~Jin, Y.~Zhang, S.~Balakrishnan, M.~J. Wainwright, and M.~I. Jordan, ``Local
  maxima in the likelihood of gaussian mixture models: {S}tructural results and
  algorithmic consequences,'' in \emph{Advances in neural information
  processing systems}, 2016, pp. 4116--4124.

\bibitem{miyajima2019verified}
S.~Miyajima, ``Verified computation of the matrix exponential,'' \emph{Advances
  in Computational Mathematics}, vol.~45, no.~1, pp. 137--152, 2019.

\bibitem{absil2012projection}
P.-A. Absil and J.~Malick, ``Projection-like retractions on matrix manifolds,''
  \emph{SIAM Journal on Optimization}, vol.~22, no.~1, pp. 135--158, 2012.

\bibitem{boumal2015lowrank}
N.~Boumal and P.-A. Absil, ``Low-rank matrix completion via preconditioned
  optimization on the {G}rassmann manifold,'' \emph{Linear Algebra and its
  Applications}, vol. 475, pp. 200--239, 2015.

\bibitem{ollila2014regularized}
E.~Ollila and D.~E. Tyler, ``Regularized $ {M} $-estimators of scatter
  matrix,'' \emph{IEEE Transactions on Signal Processing}, vol.~62, no.~22, pp.
  6059--6070, 2014.

\bibitem{chung2015weakly}
Y.~Chung, A.~Gelman, S.~Rabe-Hesketh, J.~Liu, and V.~Dorie, ``Weakly
  informative prior for point estimation of covariance matrices in hierarchical
  models,'' \emph{Journal of Educational and Behavioral Statistics}, vol.~40,
  no.~2, pp. 136--157, 2015.

\bibitem{rajaratnam2008flexible}
B.~Rajaratnam, H.~Massam, C.~M. Carvalho \emph{et~al.}, ``Flexible covariance
  estimation in graphical {G}aussian models,'' \emph{The Annals of Statistics},
  vol.~36, no.~6, pp. 2818--2849, 2008.

\bibitem{barnard2000modeling}
J.~Barnard, R.~McCulloch, and X.-L. Meng, ``Modeling covariance matrices in
  terms of standard deviations and correlations, with application to
  shrinkage,'' \emph{Statistica Sinica}, pp. 1281--1311, 2000.

\bibitem{fop2019model}
M.~Fop, T.~B. Murphy, and L.~Scrucca, ``Model-based clustering with sparse
  covariance matrices,'' \emph{Statistics and Computing}, vol.~29, no.~4, pp.
  791--819, 2019.

\bibitem{mulder2018matrix}
J.~Mulder, L.~R. Pericchi \emph{et~al.}, ``The matrix-$ f $ prior for
  estimating and testing covariance matrices,'' \emph{Bayesian Analysis},
  vol.~13, no.~4, pp. 1193--1214, 2018.

\bibitem{said2017riemannian}
S.~Said, L.~Bombrun, Y.~Berthoumieu, and J.~H. Manton, ``Riemannian {G}aussian
  distributions on the space of symmetric positive definite matrices,''
  \emph{IEEE Transactions on Information Theory}, vol.~63, no.~4, pp.
  2153--2170, 2017.

\bibitem{hajri2016riemannian}
H.~Hajri, I.~Ilea, S.~Said, L.~Bombrun, and Y.~Berthoumieu, ``Riemannian
  {L}aplace distribution on the space of symmetric positive definite
  matrices,'' \emph{Entropy}, vol.~18, no.~3, p.~98, 2016.

\bibitem{hennig2004breakdown}
C.~Hennig \emph{et~al.}, ``Breakdown points for maximum likelihood estimators
  of location--scale mixtures,'' \emph{The Annals of Statistics}, vol.~32,
  no.~4, pp. 1313--1340, 2004.

\bibitem{maronna1976robust}
R.~A. Maronna, ``Robust {M}-estimators of multivariate location and scatter,''
  \emph{The Annals of Statistics}, pp. 51--67, 1976.

\bibitem{tyler2014breakdown}
D.~E. Tyler, ``Breakdown properties of the {M}-estimators of multivariate
  scatter,'' \emph{arXiv preprint arXiv:1406.4904}, 2014.

\bibitem{ollila2012complex}
E.~Ollila, D.~E. Tyler, V.~Koivunen, and H.~V. Poor, ``Complex elliptically
  symmetric distributions: Survey, new results and applications,'' \emph{IEEE
  Transactions on Signal Processing}, vol.~60, no.~11, pp. 5597--5625, 2012.

\bibitem{hampel2011robust}
F.~R. Hampel, E.~M. Ronchetti, P.~J. Rousseeuw, and W.~A. Stahel, \emph{Robust
  statistics: the approach based on influence functions}.\hskip 1em plus 0.5em
  minus 0.4em\relax John Wiley \& Sons, 2011, vol. 196.

\bibitem{manopt}
\BIBentryALTinterwordspacing
N.~Boumal, B.~Mishra, P.-A. Absil, and R.~Sepulchre, ``{M}anopt, a {M}atlab
  toolbox for optimization on manifolds,'' \emph{Journal of Machine Learning
  Research}, vol.~15, pp. 1455--1459, 2014. [Online]. Available:
  \url{http://www.manopt.org}
\BIBentrySTDinterwordspacing

\bibitem{dasgupta1999learning}
S.~Dasgupta, ``Learning mixtures of gaussians,'' in \emph{40th Annual Symposium
  on Foundations of Computer Science.}\hskip 1em plus 0.5em minus 0.4em\relax
  IEEE, 1999, pp. 634--644.

\bibitem{amfm_pami2011}
\BIBentryALTinterwordspacing
P.~Arbelaez, M.~Maire, C.~Fowlkes, and J.~Malik, ``Contour detection and
  hierarchical image segmentation,'' \emph{IEEE Trans. Pattern Anal. Mach.
  Intell.}, vol.~33, no.~5, pp. 898--916, May 2011. [Online]. Available:
  \url{http://dx.doi.org/10.1109/TPAMI.2010.161}
\BIBentrySTDinterwordspacing

\bibitem{vedaldi08vlfeat}
A.~Vedaldi and B.~Fulkerson, ``{VLFeat}: An open and portable library of
  computer vision algorithms,'' \url{http://www.vlfeat.org/}, 2008.

\bibitem{wang2004image}
Z.~Wang, A.~C. Bovik, H.~R. Sheikh, E.~P. Simoncelli \emph{et~al.}, ``Image
  quality assessment: {F}rom error visibility to structural similarity,''
  \emph{IEEE Transactions on Image Processing}, vol.~13, no.~4, pp. 600--612,
  2004.

\end{thebibliography}
\end{document}